\newtheorem{theorem}{Theorem}
\newtheorem{lemma}{Lemma}
\newtheorem{corollary}{Corollary}
\newtheorem{definition}{Definition}
\newcommand{\R}{\mathbb{R}}
\newcommand{\E}{\mathbb{E}}
\newcommand{\Var}{{\rm Var}}
\newcommand{\I}{\mathbf{1}}
\DeclareMathOperator*{\argmax}{arg\,max}
\DeclareMathOperator*{\argmin}{arg\,min}
\def\sA{{\mathsf A}}
\def\sU{{\mathsf U}}
\def\sW{{\mathsf W}}
\def\sX{{\mathsf X}}
\def\sY{{\mathsf Y}}
\def\sZ{{\mathsf Z}}
\def\rd{{\rm d}}
\def\PP{{\mathbb P}}
\def\deq{\triangleq}
\def\wh#1{{\widehat{#1}}}
\def\eps{\varepsilon}
\newcommand\blfootnote[1]{%
	\begingroup
	\renewcommand\thefootnote{}\footnote{#1}%
	\addtocounter{footnote}{-1}%
	\endgroup
}
\title{Continuity of Generalized Entropy and Statistical Learning}
\author{Aolin Xu} 
\date{}
\begin{document}
	
	\maketitle
	
	\begin{abstract}
		We study the continuity property of the generalized entropy as a function of the underlying probability distribution, defined with an action space and a loss function, and use this property to answer the basic questions in statistical learning theory: the excess risk analyses for various learning methods.
		We first derive upper and lower bounds for the entropy difference of two distributions in terms of several commonly used $f$-divergences, the Wasserstein distance, a distance that depends on the action space and the loss function, and the Bregman divergence generated by the entropy, which also induces bounds in terms of the Euclidean distance between the two distributions. 
		Examples are given along with the discussion of each general result,
		comparisons are made with the existing entropy difference bounds, and new mutual information upper bounds are derived based on the new results.
		We then apply the entropy difference bounds to the theory of statistical learning.
		It is shown that the excess risks in the two popular learning paradigms, the frequentist learning and the Bayesian learning, both can be studied with the continuity property of different forms of the generalized entropy.
		The analysis is then extended to the continuity of generalized conditional entropy. The extension provides performance bounds for Bayes decision making with mismatched distributions. It also leads to excess risk bounds for a third paradigm of learning, where the decision rule is optimally designed under the projection of the empirical distribution to a predefined family of distributions.
		We thus establish a unified method of excess risk analysis for the three major paradigms of statistical learning, through the continuity of generalized entropy.
	\end{abstract}
	
	\blfootnote{xuaolin@gmail.com}
	\tableofcontents
	
	\section{Introduction}\label{sec:intro}
	\subsection{Generalized entropy}
	The definition of Shannon entropy can be generalized via the following statistical decision-making problem \cite{gunwald2004}.
	Let $\sZ$ be a space of outcomes, $\sA$ be a space of actions, and $\ell: \sZ\times \sA \rightarrow \R$ be a loss function.
	An outcome $Z$ is drawn from a distribution $P$ on $\sZ$. The decision-making problem is to pick an action from $\sA$ that minimizes the expected loss.
	The minimum expected loss can be used as a definition of the \emph{generalized entropy} of distribution $P$ with respect to the action space $\sA$ and the loss function $\ell$, 
	\begin{align}\label{eq:H_def}
		H_\ell(P) \deq \inf_{a\in \sA} \E_P[\ell(Z, a)] ,
	\end{align}
	which may also be written as $H_\ell(Z)$ when the distribution of $Z$ is clear. 
	When there is a need to emphasize the role of the action space, we may use the notation $H_{\sA,\ell}(P)$ or $H_{\sA,\ell}(Z)$ as well.
	Examples of the generalized entropy include:
	\begin{itemize}[leftmargin=*]
		\item 
		When $\sA$ is the family of distributions $Q$ on $\sZ$ (e.g. $Q$ is a PMF if $\sZ=\mathbb N$, or a PDF if $\sZ=\R^p$), the optimal action for the logarithmic loss $\ell(z,Q) = -\log Q(z)$ is $P$, and $H_{\log}(Z)$ is the Shannon entropy $H(Z)$ when $\sZ$ is discrete, or the differential entropy $h(Z)$ when $\sZ$ is continuous.
		\item 
		When $\sZ=\sA=\R^p$, the optimal action for the quadratic loss $\ell(z,a) = \sum_{j=1}^p(z_j-a_j)^2$ is $\E[Z]$, and $H_2(Z) = \sum_{j=1}^p \Var[Z_j]$. In particular, when $p=1$, $H_2(Z) = \Var[Z]$.
		\item 
		When $\sZ=\sA$ are discrete, the optimal action for the zero-one loss $\ell(z,a) = \I\{z\neq a\}$ is $\argmax_{z} P(z)$, and $H_{01}(Z) = 1-\max_{z\in\sZ} P(z)$.
	\end{itemize}
	
	The above decision-making problem can also be used to formulate the frequentist statistical learning problem, by letting $\sZ$ be a sample space, $\sA$ be a hypothesis space, and $P$ be an unknown distribution on $\sZ$. For any hypothesis $a\in\sA$, $\E_P[\ell(Z,a)]$ is its population risk, and $H_{\sA,\ell}(P)$ is the minimum population risk among all hypotheses in $\sA$, which would be achieved if $P$ were known. In practice, what is available is a training dataset consisting of $n$ samples drawn i.i.d.\ from $P$, with empirical distribution $\wh P_n$. The empirical risk minimization (ERM) algorithm outputs a hypothesis $a_{\wh P_n}$ that minimizes the empirical risk $\E_{\wh P_n}[\ell(Z,a)]$ among $a\in\sA$, and $H_{\sA,\ell}(\wh P_n)$ is the minimum empirical risk.
	It is one of the main goals of statistical learning theory to bound the gap between $\E_{P}[\ell(Z,a_{\wh P_n})]$ and $H_{\sA,\ell}(P)$, known as the excess risk of the ERM algorithm.
	
	\smallskip
	The generalized entropy defined in \eqref{eq:H_def} can be extended to the \emph{generalized conditional entropy}, defined via a Bayes decision-making problem based on an observation $X\in\sX$ that statistically depends on $Z$ \cite{FarniaTse16}, as
	\begin{align}\label{eq:cond_H_def}
		H_\ell(P_{Z|X}| P_X) \deq \inf_{\psi:\sX\rightarrow \sA} \E_P[\ell(Z, \psi(X))] ,
	\end{align}
	where the expectation is taken with respect to the joint distribution $P_X P_{Z|X}$ of $(X,Z)$, and the decision rule $\psi$ ranges over all mappings from $\sX$ to $\sA$ such that the expected loss is well-defined.
	The generalized conditional entropy in \eqref{eq:cond_H_def} may also be written as $H_\ell(Z|X)$ when the joint distribution is clear.
	It is also expressible in terms of the unconditional entropy,
	\begin{align}\label{eq:cond_uncond}
		H_\ell(P_{Z|X}| P_X) = \int_\sX H_\ell(P_{Z|X=x}) P_X(\rd x) .
	\end{align}
	In Bayesian inference, the generalized conditional entropy is essentially the \emph{Bayes risk}, which quantifies the minimum achievable expected loss of the inference problem, and the optimal decision rule $\psi_{\rm B}$ is known as the \emph{Bayes decision rule}. 
	Examples, in parallel to the above instantiations of the generalized unconditional entropy, include:
	\begin{itemize}[leftmargin=*]
		\item 
		For the log loss, $H_{\log}(Z|X)$ is the conditional Shannon/differential entropy, and $\psi_{\rm B}(x)$ is the posterior distribution $P_{Z|X=x}$;
		\item 
		For the quadratic loss with $\sZ=\sA=\R^p$, $H_2(Z|X) = \sum_{j=1}^p \E[\Var[Z_j|X]]$ is the minimum mean square error (MMSE) of estimating $Z$ from $X$, and $\psi_{\rm B}(x) = \E[Z|X=x]$;
		\item 
		For the zero-one loss, $H_{01}(Z|X) = 1-\int_{\sX}\max_{z\in\sZ} P_{X,Z}({\rm d}x,z)$, and $\psi_{\rm B}(x)=\argmax_{z} P_{Z|X=x}(z)$ is the maximum a-posteriori (MAP) rule.
	\end{itemize}
	
	\noindent
	
	From the above definitions and examples, we see that the performance limits of a variety of statistical inference, learning, and decision-making problems are different instantiations of the generalized entropy or the generalized conditional entropy.
	A good understanding of the properties of the generalized entropy and its conditional version can thus help us better-understand the performance limits of such problems.

	\subsection{Continuity in distribution}
	In the first part of this paper, we study the continuity property of the generalized entropy defined in \eqref{eq:H_def} in distribution $P$.
	Given $\sA$ and $\ell$, the generalized entropy $H_{\sA,\ell}(P)$ as a function of $P$ is \emph{continuous} at $P=Q$ with respect to a statistical distance $D(\cdot,\cdot)$\footnote{Throughout the paper, $D(\cdot,\cdot)$ denotes a generic statistical distance, which may not be symmetric or satisfy triangle inequality; the KL divergence will be denoted by $D(\cdot \| \cdot)$.}, if for any $\eps > 0$, there exists a $\delta>0$ such that
	\begin{align}\label{eq:def_cont}
		|H_{\sA,\ell}(P)-H_{\sA,\ell}(Q)| < \eps 
	\end{align}
	for all $P$ satisfying $D(P, Q) < \delta$.
	In plain words, $H_{\sA,\ell}(P)$ is continuous at $Q$ if $|H_{\sA,\ell}(P)-H_{\sA,\ell}(Q)|$ is small whenever $D(P,Q)$ is small.
	A weaker notion of continuity is semicontinuity:
	$H_{\sA,\ell}(P)$ is \emph{upper} (or \emph{lower}) \emph{semicontinuous} at $P=Q$ with respect to $D(\cdot,\cdot)$, if for any $\eps>0$, there exists a $\delta>0$ such that 
	\begin{align}\label{eq:def_semicont}
	H_{\sA,\ell}(P)-H_{\sA,\ell}(Q) < \eps \,\, \text{ (or $H_{\sA,\ell}(Q)-H_{\sA,\ell}(P) < \eps$) } 
	\end{align}
	for all $P$ satisfying $D(P, Q) < \delta$.
	There are other ways to define the continuity in distribution of the generalized entropy, e.g.\ the order of $P$ and $Q$ in $D(P,Q)$ in the above definitions can be changed, or the continuity can be defined in the sequential continuity manner, or defined in terms of the continuity of mappings between topological spaces. Since the statistical distances under consideration may not be real metrics, and can generate different topologies on the space of distributions, these definitions are generally not equivalent (c.f.~\cite{Harremoes2007} on a discussion of this issue for Shannon entropy). 
	Not attempting to draw connections among different notions of continuity in distribution, in this work we investigate the sufficient conditions on the action space $\sA$, the loss function $\ell$ and the distribution $Q$ to make $H_{\sA,\ell}(P)$ continuous or semicontinuous at $Q$ according to the definitions in \eqref{eq:def_cont} and \eqref{eq:def_semicont}.
	Specifically, given distributions $P$ and $Q$ on $\sZ$, we derive upper and lower bounds for $H_{\sA,\ell}(P)-H_{\sA,\ell} (Q)$ in terms of various statistical distances between $P$ and $Q$. 
	This is the objective of Section~\ref{sec:H_diff}.
	
	The main route to bounding the entropy difference taken in Section~\ref{sec:H_diff} is by relaxing the variational representation of the generalized entropy, which results in bounds in Sections~\ref{sec:H_diff_TV} to \ref{sec:H_diff_dlA}.
	Following this route, in Sections~\ref{sec:H_diff_TV}, \ref{sec:H_diff_KL} and \ref{sec:H_diff_Chi2}, we derive bounds for the entropy difference in terms of the total variation distance, KL divergence and $\chi^2$ divergence between $P$ and $Q$ on $\sZ$.
	Among the results in terms of the KL divergence, we show a connection between the Lipschitz continuity of the R\'enyi entropy in the entropy order and the continuity of the Shannon/differential entropy in the underlying distribution.
	These bounds are sharpened in Section~\ref{sec:H_diff_P_ell} by considering the distance between distributions of the loss under $P$ and $Q$ when an optimal action is taken.
	In Section~\ref{sec:H_diff_Wass}, we propose a general method to bound the entropy difference in terms of the Wasserstein distance, which depends on the property of the loss function.
	In Section~\ref{sec:H_diff_dlA}, we examine a bound in terms of a distance that depends on both the action space and the loss function.
	In Section~\ref{sec:H_diff_Breg}, we take a different route to show an exact representation of the entropy difference involving the Bregman divergence generated by the negative entropy, which is based on the concavity of the generalized entropy, and also induces bounds in terms of the Euclidean distance between the two distributions. 
	In Section~\ref{sec:H_diff_comp}, comparisons are made between the results derived in this work and the existing bounds on the entropy difference in the literature.
	Finally, an information-theoretic application of the results is presented in Section~\ref{sec:App_mi_ub}, where new upper bounds on the mutual information are derived using the new entropy difference bounds in terms of KL divergence and total variation distance. 
	The results in Section~\ref{sec:H_diff} have been presented in part in \cite{Xu_cont_H_ISIT20}.
	

	\subsection{Applications to statistical learning theory}
	While the continuity properties of the generalized entropy may find applications in a variety of subjects, in this work we focus on studying their applications to the theory of statistical learning.
	We show that the three major paradigms of statistical learning, namely the frequentist learning, the Bayesian learning, and learning by fitting the empirical distribution with a predefined family of distributions, all can be studied under the framework of the continuity of generalized entropy.
	
	In Section~\ref{sec:App}, we show that the excess risk of the ERM algorithm in the frequentist learning can be analyzed with the upper bounds on the entropy difference obtained in Section~\ref{sec:H_diff}, in terms of the statistical distance between the data-generating distribution and the empirical distribution.
	In particular, we give two examples where the success of the ERM algorithm does not directly depend on the hypothesis class, but on the underlying distribution and the loss function.
	We also reveal an intimate connection between a generalized notion of typicality in information theory and the learnability of a hypothesis class, through an entropy continuity argument.
	
	In Section~\ref{sec:MER_Bayes}, we give an overview of using the continuity property of the generalized entropy to analyze the minimum excess risk in Bayesian learning, which is studied in detail in \cite{MER19}.
	The main idea is to bound the entropy difference in terms of the statistical distance between the posterior predictive distribution and the true predictive model, which leads to upper bounds for the minimum excess risk in terms of the minimum estimation error of the model parameters.
	
	The study of the continuity of generalized entropy is extended to the generalized conditional entropy in Section~\ref{sec:App_mismatch}.
	Based on conditional entropy difference bounds, we derive upper bounds for the excess risk in Bayes decision-making problems with distributional mismatch. An application of the results is the excess risk analysis of a third paradigm of learning, where the learned decision rule is optimally designed under a surrogate of the data-generating distribution, which is found by projecting the empirical distribution to an exponential family of distributions.
	This method of analysis may also shed some light on the in-distribution excess risk analysis of the recently proposed maximum conditional entropy and minimax frameworks of statistical learning \cite{FarniaTse16,LR_minimax}.

	\subsection{Novelty}
	The continuity of Shannon entropy has been known for decades. A result regarding this property can be found in \cite[Lemma~2.7]{CsiKor_book1st} and \cite[Theorem~17.3.3]{Cover_book} 
	in terms of the total variation distance.
	In \cite{Zhang_entropy07}, a tighter such bound is derived via an optimal coupling argument, further improvement of which are given in \cite{Ho_Yeung_entropy10} and \cite{Sason_entropy13}.
	The continuity of differential entropy has been studied much more recently in \cite{PoliWu06} in terms of the Wasserstein distance.
	The results on Shannon/differential entropy obtained in this work have their own merits compared to the existing results, which will be discussed in Section~\ref{sec:H_diff_comp}. Beyond Shannon/differential entropy, in \cite{WuVerdu12_fmmseMI} the continuity of the MMSE $H_2(Z|X)$ in the joint distribution $P_{Z,X}$ and in the prior distribution $P_Z$ is investigated.
	For the generalized entropy defined in \eqref{eq:H_def} with general loss functions, as well as the generalized conditional entropy defined in \eqref{eq:cond_H_def}, there has been no dedicated study on their continuity properties so far to the author's knowledge.
	
	It is also new to view the excess risk analysis for the learning problems through the continuity of generalized entropy.
	Most existing works on the frequentist learning focus on the complexity analysis of the hypothesis space, instead of directly comparing the distance between the data-generating distribution and the empirical distribution. The latter method leads to a new result in Theorem~\ref{prop:excess_Lip} that does not depend on the hypothesis space.
	The performance of Bayesian learning under a generative model with respect to general loss functions is much less studied than the frequentist learning. 
	The analysis based on entropy continuity provides a unique way to relate the minimum achievable excess risk to the model uncertainty, as illustrated by \eqref{eq:MER01_logistic} for Bayesian logistic regression.
	The method of supervised learning by designing the decision rule under a surrogate of the data-generating distribution is also less studied in the literature.
	Corollary~\ref{co:excess_exp_TV} addresses a special case of this problem and explicitly shows that the excess risk consists of a fixed term of approximation error and a vanishing term of estimation error.
	
	This work would make a first effort to develop general methods of analysis for the continuity property of the generalized entropy, establish connections to statistical learning theory, and draw attention of researchers in related fields on its potentially broader applications.

	\section{Bounds on entropy difference}\label{sec:H_diff}
	In this section, we derive upper and lower bounds on the entropy difference between two distributions $P$ and $Q$ in terms of their total variation distance, KL divergence, $\chi^2$ divergence, Wasserstein distance, and a semidistance that depends on $\sA$ and $\ell$. We also compare the new results with existing ones, and apply some of the new results to derive new upper bounds for the mutual information.
	
	In what follows, we assume the infimum in \eqref{eq:H_def} can be achieved for all distributions, and let $a_P$ and $a_Q$ be the optimal actions achieving the infimum under distributions $P$ and $Q$ respectively. Then we have
	$H_\ell(P) = \E_P[\ell(Z,a_P)]$ and $H_\ell(Q) = \E_Q[\ell(Z,a_Q)]$.
	The results in Sections~\ref{sec:H_diff_TV} to \ref{sec:H_diff_dlA} build on the following lemma, a consequence of the definitions of $a_P$ and $a_Q$, and the variational representation of the generalized entropy in \eqref{eq:H_def}.
	\begin{lemma}\label{lm:H_diff_E}
		Suppose there exist actions $a_P$ and $a_Q$ in $\sA$ such that $H_\ell(P) = \E_P[\ell(Z,a_P)]$ and $H_\ell(Q) = \E_Q[\ell(Z,a_Q)]$, then
		\begin{align}\label{eq:H_diff_lbub}
			\E_P[\ell(Z,a_P)] - \E_Q[\ell(Z,a_P)] \le	H_\ell(P) - H_\ell(Q) 
			\le \E_P[\ell(Z,a_Q)] - \E_Q[\ell(Z,a_Q)] .
		\end{align}
	\end{lemma}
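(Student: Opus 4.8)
The plan is to establish the two displayed inequalities separately, each by exploiting the suboptimality of one distribution's optimal action when evaluated under the other distribution. Both halves rely only on the variational definition \eqref{eq:H_def} together with the standing assumption that the infima are attained, so that $a_P$ and $a_Q$ exist as genuine minimizers satisfying $H_\ell(P) = \E_P[\ell(Z,a_P)]$ and $H_\ell(Q) = \E_Q[\ell(Z,a_Q)]$.

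For the lower bound, I would begin from the identity $H_\ell(P) = \E_P[\ell(Z,a_P)]$ and observe that $a_P$ is merely one feasible candidate action under the distribution $Q$. Hence the variational definition gives $H_\ell(Q) = \inf_{a\in\sA} \E_Q[\ell(Z,a)] \le \E_Q[\ell(Z,a_P)]$, i.e.\ $-H_\ell(Q) \ge -\E_Q[\ell(Z,a_P)]$. Adding $H_\ell(P)=\E_P[\ell(Z,a_P)]$ to both sides then yields $H_\ell(P) - H_\ell(Q) \ge \E_P[\ell(Z,a_P)] - \E_Q[\ell(Z,a_P)]$, which is the left-hand inequality. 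The upper bound follows by the symmetric argument: using $H_\ell(Q) = \E_Q[\ell(Z,a_Q)]$ together with $H_\ell(P) = \inf_{a\in\sA}\E_P[\ell(Z,a)] \le \E_P[\ell(Z,a_Q)]$, since $a_Q$ is feasible but generally suboptimal under $P$, subtracting gives $H_\ell(P) - H_\ell(Q) \le \E_P[\ell(Z,a_Q)] - \E_Q[\ell(Z,a_Q)]$.

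Combining the two bounds sandwiches the entropy difference $H_\ell(P)-H_\ell(Q)$ between the two expectation differences, as claimed. There is no real obstacle here: the argument is purely the insertion of a common suboptimal action into one of the two objectives and then taking advantage of the infimum. The only point requiring care is the attainment of the infima; were attainment to fail, one would instead have to work with $\eps$-optimal actions and pass to the limit $\eps\downarrow 0$, but since the lemma explicitly assumes that $a_P$ and $a_Q$ achieve the infima, the proof reduces to the two-line comparison above.
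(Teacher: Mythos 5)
Your proof is correct and is exactly the argument the paper intends: the lemma is stated there as "a consequence of the definitions of $a_P$ and $a_Q$ and the variational representation" in \eqref{eq:H_def}, which is precisely your two-line insertion of the suboptimal action $a_P$ (resp.\ $a_Q$) into the infimum defining $H_\ell(Q)$ (resp.\ $H_\ell(P)$). No discrepancy to report.
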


	\subsection{Bounds via total variation distance}\label{sec:H_diff_TV}
	\subsubsection{General results}
	We first show that when the loss function is uniformly bounded, the entropy difference can be controlled in terms of the total variation distance between the two distributions, defined as $d_{\rm TV}(P,Q)\deq \frac{1}{2}\int_{\sZ}|P-Q|(\rd z)$.
	\begin{theorem}\label{prop:cont_TV}
		If $\ell(\cdot,a_Q)\in [\alpha_Q,\beta_Q]$ for all $z\in\sZ$, then
		\begin{align}\label{eq:H_diff_TV_ub}
			H_\ell(P) - H_\ell(Q) \le (\beta_Q-\alpha_Q) d_{\rm TV}(P,Q) .
		\end{align}
		Consequently, if $\ell(\cdot,a_P)\in [\alpha_P,\beta_P]$ for all $z\in\sZ$, then
		\begin{align}\label{eq:H_diff_TV_lb}
			H_\ell(Q) - H_\ell(P) \le (\beta_P-\alpha_P) d_{\rm TV}(P,Q) .
		\end{align}
		
	\end{theorem}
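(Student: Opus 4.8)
The plan is to start from the upper bound in Lemma~\ref{lm:H_diff_E}, which gives $H_\ell(P) - H_\ell(Q) \le \E_P[\ell(Z,a_Q)] - \E_Q[\ell(Z,a_Q)]$. The right-hand side is the difference of the expectation of a single fixed function, namely $g(z) \deq \ell(z, a_Q)$, under the two distributions $P$ and $Q$. So the entire task reduces to bounding $\E_P[g] - \E_Q[g]$ in terms of $d_{\rm TV}(P,Q)$ when $g$ takes values in $[\alpha_Q, \beta_Q]$.

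First I would write $\E_P[g] - \E_Q[g] = \int_\sZ g(z)\,(P-Q)(\rd z)$. The standard trick is that adding a constant to $g$ does not change this integral, since $\int_\sZ (P - Q)(\rd z) = 0$. Hence I would replace $g$ by the centered function $g - \frac{\alpha_Q + \beta_Q}{2}$, which now takes values in $[-\frac{\beta_Q - \alpha_Q}{2}, \frac{\beta_Q - \alpha_Q}{2}]$, so $|g - \frac{\alpha_Q+\beta_Q}{2}| \le \frac{\beta_Q - \alpha_Q}{2}$ pointwise. Then I would bound
\begin{align}
\int_\sZ g(z)\,(P-Q)(\rd z) \le \frac{\beta_Q - \alpha_Q}{2}\int_\sZ |P - Q|(\rd z) = (\beta_Q - \alpha_Q)\, d_{\rm TV}(P,Q),
\end{align}
using the definition $d_{\rm TV}(P,Q) = \frac{1}{2}\int_\sZ |P-Q|(\rd z)$. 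This establishes \eqref{eq:H_diff_TV_ub}.

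For the second inequality \eqref{eq:H_diff_TV_lb}, I would simply observe that the total variation distance is symmetric, $d_{\rm TV}(P,Q) = d_{\rm TV}(Q,P)$, and the roles of $P$ and $Q$ can be swapped throughout the first argument. Applying the already-proven bound \eqref{eq:H_diff_TV_ub} with $P$ and $Q$ interchanged, and using the hypothesis that $\ell(\cdot, a_P) \in [\alpha_P, \beta_P]$, yields $H_\ell(Q) - H_\ell(P) \le (\beta_P - \alpha_P)\, d_{\rm TV}(Q,P) = (\beta_P - \alpha_P)\, d_{\rm TV}(P,Q)$, which is exactly the claimed lower bound.

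I do not expect any real obstacle here: the proof is essentially the observation that the extremal coupling characterization of total variation bounds the integral of any bounded function, combined with the variational inequality already isolated in Lemma~\ref{lm:H_diff_E}. The only point requiring a small amount of care is the centering step, ensuring the constant shift is legitimate because $P$ and $Q$ are both probability measures so their difference integrates to zero; everything else is routine.
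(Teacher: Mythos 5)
Your proposal is correct and follows essentially the same route as the paper's own proof: apply the upper bound of Lemma~\ref{lm:H_diff_E} with the fixed function $\ell(\cdot,a_Q)$, center it by $(\alpha_Q+\beta_Q)/2$ using $\int_\sZ (P-Q)(\rd z)=0$, bound the integral by $\frac{\beta_Q-\alpha_Q}{2}\int_\sZ|P-Q|(\rd z)$, and obtain the second inequality by exchanging the roles of $P$ and $Q$ together with the symmetry of $d_{\rm TV}$. No gaps; the argument is complete as written.
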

	\begin{proof}
		The upper bound in \eqref{eq:H_diff_TV_ub} can be shown by
		\begin{align}
			H_\ell(P) - H_\ell(Q) &\le \E_P[\ell(Z,a_Q)] - \E_Q[\ell(Z,a_Q)] \\
			&=  \int_{\sZ} \ell(z,a_Q) (P-Q)({\rm d}z) \\
			&= \int_{\sZ} \big( \ell(z,a_Q) - (\alpha_Q+\beta_Q)/2 \big) (P-Q)({\rm d}z) \\
			&\le \int_{\sZ} \frac{\beta_Q-\alpha_Q}{2} |P-Q|({\rm d}z) \\
			&= (\beta_Q-\alpha_Q) d_{\rm TV}(P,Q) ,
		\end{align}
		where the first step follows from Lemma~\ref{lm:H_diff_E}, and the last step follows the definition of $d_{\rm TV}(P,Q)$.
		The upper bound in \eqref{eq:H_diff_TV_lb} follows by exchanging the roles of $P$ and $Q$, and the fact that $d_{\rm TV}(P,Q)=d_{\rm TV}(Q,P)$.
	\end{proof}

	\subsubsection{Examples}
	Applying Theorem~\ref{prop:cont_TV} to the log loss, we obtain new bounds for the Shannon/differential entropy.
	\begin{corollary}\label{co:log_TV}
		For both discrete and continuous $\sZ$, let $\bar P = \sup_{ z\in\sZ}P(z)/\inf_{ z\in\sZ}P(z)$ and $\bar Q = \sup_{ z\in\sZ}Q(z)/\inf_{ z\in\sZ}Q(z)$. Then
		\begin{align}
			H_{\log} (P) - H_{\log} (Q) \le \big(\log \bar Q\big) d_{\rm TV}(P,Q) ,
		\end{align}
		and 
		\begin{align}
			|H_{\log} (P) - H_{\log} (Q)| \le \big(\log(\bar{P} \vee \bar{Q}) \big) d_{\rm TV}(P,Q) . \label{eq:log_abs_TV}
		\end{align}
	\end{corollary}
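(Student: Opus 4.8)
The plan is to specialize Theorem~\ref{prop:cont_TV} to the logarithmic loss $\ell(z,Q)=-\log Q(z)$, whose action space is the family of distributions on $\sZ$ and whose optimal action under any distribution is, as recorded in the introductory examples, that distribution itself. Thus $a_P=P$, $a_Q=Q$, and $H_{\log}$ is the Shannon entropy (discrete $\sZ$) or differential entropy (continuous $\sZ$); the derivation below is identical in both cases, since it only invokes the variational bound of Lemma~\ref{lm:H_diff_E} through Theorem~\ref{prop:cont_TV} together with the pointwise oscillation of $-\log Q$. The entire task is therefore to identify the constants $\alpha_Q,\beta_Q$ (and $\alpha_P,\beta_P$) demanded by that theorem.

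First I would evaluate the loss at the optimal action $a_Q=Q$. Since $\ell(z,a_Q)=-\log Q(z)$ and $\inf_z Q(z)\le Q(z)\le\sup_z Q(z)$, monotonicity of $-\log$ gives $\ell(z,a_Q)\in\big[-\log\sup_z Q(z),\,-\log\inf_z Q(z)\big]$ for every $z\in\sZ$. Hence I may take $\alpha_Q=-\log\sup_z Q(z)$ and $\beta_Q=-\log\inf_z Q(z)$, so that $\beta_Q-\alpha_Q=\log\big(\sup_z Q(z)/\inf_z Q(z)\big)=\log\bar Q$. Substituting into \eqref{eq:H_diff_TV_ub} immediately yields $H_{\log}(P)-H_{\log}(Q)\le(\log\bar Q)\,d_{\rm TV}(P,Q)$, the first claimed bound.

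For the two-sided estimate I would repeat the computation with the roles of $P$ and $Q$ exchanged. Evaluating the loss at $a_P=P$ gives $\ell(z,a_P)=-\log P(z)\in\big[-\log\sup_z P(z),\,-\log\inf_z P(z)\big]$, so $\beta_P-\alpha_P=\log\bar P$, and \eqref{eq:H_diff_TV_lb} delivers $H_{\log}(Q)-H_{\log}(P)\le(\log\bar P)\,d_{\rm TV}(P,Q)$. Combining the two one-sided inequalities, $H_{\log}(P)-H_{\log}(Q)$ is bounded above by $(\log\bar Q)\,d_{\rm TV}(P,Q)$ and below by $-(\log\bar P)\,d_{\rm TV}(P,Q)$, whence $|H_{\log}(P)-H_{\log}(Q)|\le\log(\bar P\vee\bar Q)\,d_{\rm TV}(P,Q)$, which is \eqref{eq:log_abs_TV}.

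There is essentially no hard step here: the argument is pure bookkeeping of the range of $-\log Q$ (resp.\ $-\log P$) over $\sZ$. The only point that warrants a remark is that the bounds carry content only when the densities or PMFs are bounded away from both $0$ and $\infty$; if $\inf_z Q(z)=0$ then $\bar Q=\infty$ and the bound becomes vacuous, consistent with the familiar failure of continuity of differential entropy in total variation when the density is unbounded or supported on an unbounded set. I would flag this caveat, but it does not affect the formal derivation above.
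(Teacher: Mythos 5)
Your proposal is correct and is exactly the argument the paper intends: the corollary is stated as a direct specialization of Theorem~\ref{prop:cont_TV} to the log loss, with $a_Q=Q$, $a_P=P$, and the oscillation of $-\log Q$ (resp.\ $-\log P$) giving $\beta_Q-\alpha_Q=\log\bar Q$ (resp.\ $\log\bar P$), after which the two one-sided bounds combine to give \eqref{eq:log_abs_TV}. Your closing caveat about the bounds being vacuous when $\inf_z Q(z)=0$ matches the discussion the paper itself gives in Section~\ref{sec:H_diff_comp}.
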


	Next, applying Theorem~\ref{prop:cont_TV} to the quadratic loss, we obtain a bound for the variance difference between two distributions on a bounded interval in terms of their total variation distance.
	\begin{corollary}\label{co:2_TV}
		If $\sZ\subset[\alpha,\beta]\subset\R$, then
		\begin{align}
			\big|\Var_P[Z] - \Var_Q[Z]\big| \le (\beta-\alpha)^2 d_{\rm TV}(P,Q) .
		\end{align}
	\end{corollary}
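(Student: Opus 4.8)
The plan is to recognize $\Var_P[Z]$ and $\Var_Q[Z]$ as instances of the generalized entropy $H_2$ under the scalar quadratic loss $\ell(z,a) = (z-a)^2$, and then invoke Theorem~\ref{prop:cont_TV} directly. From the second example in the introduction, specialized to $p=1$, we have $H_2(P) = \Var_P[Z]$ and $H_2(Q) = \Var_Q[Z]$, with optimal actions $a_P = \E_P[Z]$ and $a_Q = \E_Q[Z]$. The entire corollary is thus a specialization of the total-variation bound, and the only genuine work is identifying the correct boundedness constants for the loss evaluated at the optimal actions.

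First I would verify the boundedness hypothesis of Theorem~\ref{prop:cont_TV}. Since $Z \in [\alpha,\beta]$ almost surely under both $P$ and $Q$, the means $a_P, a_Q$ also lie in $[\alpha,\beta]$. For any $z \in [\alpha,\beta]$ and $a_Q \in [\alpha,\beta]$, the loss $\ell(z,a_Q) = (z-a_Q)^2$ is nonnegative and bounded above by $\max\{(\alpha-a_Q)^2, (\beta-a_Q)^2\} \le (\beta-\alpha)^2$, the last inequality following because $a_Q \in [\alpha,\beta]$ forces both $a_Q - \alpha$ and $\beta - a_Q$ into $[0, \beta-\alpha]$. Hence one may take $\alpha_Q = 0$ and $\beta_Q = (\beta-\alpha)^2$, so that $\beta_Q - \alpha_Q \le (\beta-\alpha)^2$. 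The identical argument with $a_P$ in place of $a_Q$ gives $\beta_P - \alpha_P \le (\beta-\alpha)^2$.

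Then I would apply the two inequalities of Theorem~\ref{prop:cont_TV}. The upper bound \eqref{eq:H_diff_TV_ub} yields $H_2(P) - H_2(Q) \le (\beta-\alpha)^2\, d_{\rm TV}(P,Q)$, while the lower bound \eqref{eq:H_diff_TV_lb} yields $H_2(Q) - H_2(P) \le (\beta-\alpha)^2\, d_{\rm TV}(P,Q)$. Combining these two inequalities and substituting $H_2 = \Var$ produces the claimed two-sided bound on $\big|\Var_P[Z] - \Var_Q[Z]\big|$. There is no real obstacle here: the only point requiring care is the range estimate on the quadratic loss, specifically that locating $a_Q$ inside $[\alpha,\beta]$ (which is forced by $Z$ being supported there) keeps the oscillation of $\ell(\cdot,a_Q)$ bounded by $(\beta-\alpha)^2$ rather than a larger quantity. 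Once the constants $\beta_Q-\alpha_Q$ and $\beta_P-\alpha_P$ are pinned down, the result is immediate.
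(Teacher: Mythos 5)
Your proposal is correct and follows essentially the same route as the paper's proof: identify $\Var$ as $H_2$ under the quadratic loss, note that $\ell(\cdot,a_P)$ and $\ell(\cdot,a_Q)$ take values in $[0,(\beta-\alpha)^2]$ because the means lie in $[\alpha,\beta]$, and apply both inequalities of Theorem~\ref{prop:cont_TV}. Your write-up merely makes explicit the range estimate that the paper states without elaboration.
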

	\begin{proof}
		From the assumption that $\sZ\subset[\alpha,\beta]$, we have that for any $z\in\sZ$, $0\le \ell(z,a_P)=(z-\E_P Z)^2\le (\beta-\alpha)^2$ and $0\le \ell(z,a_Q)=(z-\E_Q Z)^2\le (\beta-\alpha)^2$. The result then follows from Theorem~\ref{prop:cont_TV}.
	\end{proof}
	
	Additionally, applying Theorem~\ref{prop:cont_TV} to the zero-one loss, we immediately have the following result.
	\begin{corollary}\label{co:01_TV}
		If $\sZ$ is discrete, then
		\begin{align}
			\big|\max_{ z\in\sZ}P(z) - \max_{ z\in\sZ}Q(z)\big| \le d_{\rm TV}(P,Q) .
		\end{align}
	\end{corollary}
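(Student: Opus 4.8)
The plan is to specialize Theorem~\ref{prop:cont_TV} to the zero-one loss, for which the introduction records the explicit form $H_{01}(P) = 1 - \max_{z\in\sZ} P(z)$. Under this identification the claimed inequality is exactly a two-sided bound on $|H_{01}(P) - H_{01}(Q)|$, since
\begin{align}
H_{01}(P) - H_{01}(Q) = \max_{z\in\sZ} Q(z) - \max_{z\in\sZ} P(z),
\end{align}
so that $|H_{01}(P) - H_{01}(Q)| = \big|\max_{z\in\sZ} P(z) - \max_{z\in\sZ} Q(z)\big|$. It will therefore suffice to verify the boundedness hypotheses of Theorem~\ref{prop:cont_TV} and read off the two one-sided estimates.

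The key observation is that the zero-one loss is uniformly bounded no matter which action is taken: for \emph{any} action $a$, the map $\ell(\cdot, a) = \I\{\cdot \neq a\}$ takes values in $\{0,1\}$. In particular both $\ell(\cdot, a_P)$ and $\ell(\cdot, a_Q)$ lie in $[0,1]$, so I would take $\alpha_Q = \alpha_P = 0$ and $\beta_Q = \beta_P = 1$. Substituting these into \eqref{eq:H_diff_TV_ub} and \eqref{eq:H_diff_TV_lb} gives $H_{01}(P) - H_{01}(Q) \le d_{\rm TV}(P,Q)$ and $H_{01}(Q) - H_{01}(P) \le d_{\rm TV}(P,Q)$, respectively.

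Combining the two one-sided bounds yields $|H_{01}(P) - H_{01}(Q)| \le d_{\rm TV}(P,Q)$, which is the assertion after substituting the expression for the difference of the maxima above. There is essentially no obstacle here beyond correctly instantiating the range of the loss; the only point deserving a moment's care is that the bound $\ell(\cdot, a)\in[0,1]$ holds for every action, so the width $\beta - \alpha = 1$ is identical in both directions. This symmetry is precisely what makes the resulting estimate two-sided and permits passing to the absolute value.
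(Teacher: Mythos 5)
Your proof is correct and follows exactly the paper's route: the paper derives this corollary "immediately" from Theorem~\ref{prop:cont_TV} applied to the zero-one loss, using $\ell(\cdot,a)\in[0,1]$ for every action and the identity $H_{01}(P)=1-\max_{z}P(z)$, just as you do.
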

	
	\subsection{Bounds via KL divergence}\label{sec:H_diff_KL}
	\subsubsection{General results}
	The next set of results present sufficient conditions for the entropy difference to be controlled by the KL divergence between the two distributions.
	These results may apply to the generalized entropy with an unbounded loss function.
	Recall that a random variable $U$ is $\sigma^2$-subgaussian if $\E[e^{\lambda (U-\E U)}] \le e^{\lambda^2\sigma^2/2}$ for all $\lambda\in\R$. 
	\begin{theorem}\label{prop:cont_KL}
		If $\ell(Z,a_Q)$ is $
		\sigma_Q^2$-subgaussian under $Q$, then
		\begin{align}\label{eq:subG_cont_KL_Q}
			H_\ell(P) - H_\ell(Q) \le \sqrt{2\sigma_Q^2 D(P \| Q)} ;
		\end{align}
		for the other direction, if $\ell(Z,a_P)$ is $\sigma_P^2$-subgaussian under $Q$, then
		\begin{align}\label{eq:subG_cont_KL_P}
			H_\ell(Q) - H_\ell(P) \le \sqrt{2\sigma_P^2 D(P \| Q)} .
		\end{align}
		More generally, if there exists a function $\varphi_Q$ over $[0,b_Q)$ with some $b_Q \in (0, \infty]$ such that
		\begin{align}
			\log \E_Q\left[e^{\lambda \left(\ell(Z,a_Q) - \E_Q[\ell(Z,a_Q) ]\right)}\right] \le \varphi_Q(\lambda) \label{eq:DPQ_var_cond+_th}
		\end{align}
		for all $0\le \lambda < b_Q$, then
		\begin{align}\label{eq:DPQ_diff_+}
			H_\ell(P) - H_\ell(Q) \le \varphi_Q^{*-1}(D(P\| Q)) ;
		\end{align}
		for the other direction, if there exists a function $\varphi_P$ over $[0,b_P)$ with some $b_P\in (0, \infty]$ such that
		\begin{align}
			\log \E_Q\left[e^{-\lambda \left(\ell(Z,a_P) - \E_Q[\ell(Z,a_P)] \right)}\right] \le \varphi_P(\lambda) \quad \label{eq:DPQ_var_cond-_th}
		\end{align}
		for all $ 0\le \lambda < b_P$,
		then
		\begin{align}\label{eq:DPQ_diff_-}
			H_\ell(Q) - H_\ell(P) \le \varphi_P^{*-1}(D(P\| Q)) ;
		\end{align}
		where 
		$
		\varphi_Q^*(\gamma) \deq \sup_{0\le \lambda < b_Q} \lambda \gamma - \varphi_Q(\lambda) 
		$
		and
		$
		\varphi_P^*(\gamma) \deq \sup_{0\le \lambda < b_P} \lambda \gamma - \varphi_P(\lambda) ,
		$
		$\gamma\in\R$,
		are Legendre duals of $\varphi_Q$ and $\varphi_P$;
		and $\varphi_Q^{*-1}$ and $\varphi_P^{*-1}$ are the generalized inverses of $\varphi_Q^*$ and $\varphi_P^*$, defined as
		$
		\varphi_Q^{*-1}(x) \deq \sup\{\gamma\in\R:\varphi_Q^*(\gamma)\le x\} 
		$
		and
		$
		\varphi_P^{*-1}(x) \deq \sup\{\gamma\in\R:\varphi_P^*(\gamma)\le x\} ,  
		$
		$x\in\R $.
		In addition, if $\varphi_Q(\lambda)$ is strictly convex over $(0, b_Q)$ and $\varphi_Q(0) =  \varphi_Q'(0)=0$, then 
		$
		\lim_{x\downarrow 0}\varphi_Q^{*-1}(x) = 0 ;
		$
		similarly, if $\varphi_P(\lambda)$ is strictly convex over $(0, b_P)$ and $\varphi_P(0) =  \varphi_P'(0)=0$, then 
		$
		\lim_{x\downarrow 0}\varphi_P^{*-1}(x) = 0. 
		$
	\end{theorem}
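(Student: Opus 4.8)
The plan is to reduce each of the four inequalities to a one-sided bound on the difference of expectations of a single, fixed loss random variable under $P$ and $Q$, and then to control that difference by a change-of-measure argument driven by the log-moment-generating-function hypotheses \eqref{eq:DPQ_var_cond+_th} and \eqref{eq:DPQ_var_cond-_th}. The subgaussian bounds \eqref{eq:subG_cont_KL_Q} and \eqref{eq:subG_cont_KL_P} will then fall out as an explicit special case, and the limiting statements will follow from the regularity of $\varphi_Q$ and $\varphi_P$.

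First I would invoke Lemma~\ref{lm:H_diff_E}. For the upper direction, its right-hand inequality gives $H_\ell(P) - H_\ell(Q) \le \E_P[\ell(Z,a_Q)] - \E_Q[\ell(Z,a_Q)]$, so it suffices to bound the centered quantity $\gamma_Q \deq \E_P[\ell(Z,a_Q)] - \E_Q[\ell(Z,a_Q)]$. The central tool is the change-of-measure inequality (equivalently the Donsker--Varadhan lower bound on KL divergence): for any measurable $f$, $\E_P[f] - \log\E_Q[e^{f}] \le D(P\|Q)$, which follows from the nonnegativity of the KL divergence between $P$ and the exponentially tilted measure $\rd Q_f \propto e^{f}\,\rd Q$. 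Applying this with $f = \lambda\big(\ell(Z,a_Q) - \E_Q[\ell(Z,a_Q)]\big)$ for $0\le \lambda < b_Q$ and using hypothesis \eqref{eq:DPQ_var_cond+_th} yields $\lambda\gamma_Q - \varphi_Q(\lambda) \le D(P\|Q)$ for every such $\lambda$. Taking the supremum over $\lambda$ gives $\varphi_Q^*(\gamma_Q) \le D(P\|Q)$, and by the definition of the generalized inverse this reads $\gamma_Q \le \varphi_Q^{*-1}(D(P\|Q))$, establishing \eqref{eq:DPQ_diff_+}.

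The reverse direction \eqref{eq:DPQ_diff_-} is symmetric: the left-hand inequality of Lemma~\ref{lm:H_diff_E} gives $H_\ell(Q) - H_\ell(P) \le \E_Q[\ell(Z,a_P)] - \E_P[\ell(Z,a_P)]$, and I would repeat the argument with the test function $f = -\lambda\big(\ell(Z,a_P) - \E_Q[\ell(Z,a_P)]\big)$, now invoking \eqref{eq:DPQ_var_cond-_th} to obtain $\varphi_P^*\big(\E_Q[\ell(Z,a_P)] - \E_P[\ell(Z,a_P)]\big) \le D(P\|Q)$. The two subgaussian bounds are then the special case $\varphi(\lambda) = \sigma^2\lambda^2/2$ with $b=\infty$: a direct computation of the Legendre dual gives $\varphi^*(\gamma) = \gamma^2/(2\sigma^2)$ and hence $\varphi^{*-1}(x) = \sqrt{2\sigma^2 x}$, so that \eqref{eq:subG_cont_KL_Q} and \eqref{eq:subG_cont_KL_P} follow from \eqref{eq:DPQ_diff_+} and \eqref{eq:DPQ_diff_-}.

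Finally, for the continuity-at-zero claims I would argue from the stated regularity of $\varphi_Q$ (respectively $\varphi_P$). Strict convexity together with $\varphi_Q(0)=\varphi_Q'(0)=0$ forces $\lambda=0$ to be the unique minimizer, so $\varphi_Q\ge 0$ and $\varphi_Q^*(0)=0$; moreover $\varphi_Q^*$ is nonnegative, convex, and strictly increasing on a right-neighborhood of $0$, with the optimizing $\lambda$ in its defining supremum tending to $0$ as $\gamma\downarrow 0$. This makes $\varphi_Q^*$ continuous at $0$ with $\varphi_Q^*(0)=0$, so its generalized inverse satisfies $\lim_{x\downarrow 0}\varphi_Q^{*-1}(x)=0$. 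The main subtlety to track carefully is the validity of the change-of-measure step and the inversion when $D(P\|Q)$ is large or infinite, together with the precise matching between the one-sided range $0\le\lambda<b$ in the hypotheses and the sign of the deviation $\gamma$ being bounded, so that the supremum defining $\varphi^*$ is taken over the correct half-line and the generalized inverse is applied to a point in its effective domain.
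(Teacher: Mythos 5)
Your proposal is correct and follows essentially the same route as the paper: reduce via Lemma~\ref{lm:H_diff_E} to a bound on $\E_P[\ell(Z,a_Q)]-\E_Q[\ell(Z,a_Q)]$ (resp.\ the reversed difference for $a_P$), apply the Donsker--Varadhan change-of-measure inequality with the tilted test function to get $\varphi^*(\gamma)\le D(P\|Q)$, invert, and specialize to $\varphi(\lambda)=\sigma^2\lambda^2/2$ for the subgaussian bounds. The only difference is cosmetic: the paper routes the change-of-measure step through the separately stated Lemma~\ref{lm:DPQ_variational_gen}, whose proof you have effectively inlined.
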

	\noindent{\bf Remark.} By exchanging the roles of $P$ and $Q$ in Theorem~\ref{prop:cont_KL}, we can obtain another set of bounds for the entropy difference in terms of $D(Q\| P)$ under appropriate conditions.
	\begin{proof}[Proof of Theorem~\ref{prop:cont_KL}]
		The results in \eqref{eq:subG_cont_KL_Q} and \eqref{eq:subG_cont_KL_P} are special cases of the general results in \eqref{eq:DPQ_diff_+} and \eqref{eq:DPQ_diff_-} respectively, with $\varphi_Q(\lambda) = {\sigma_Q^2\lambda^2}/{2}$, $\varphi_P(\lambda)=\sigma_P^2\lambda^2/2$, and $b_Q=b_P=\infty$, such that $\varphi_Q^*(\gamma) = {\gamma^2}/{2\sigma_Q^2}$ and $\varphi_P^*(\gamma) = {\gamma^2}/{2\sigma_P^2}$.
		The general results are consequences of Lemma~\ref{lm:H_diff_E} and Lemma~\ref{lm:DPQ_variational_gen} stated below, instantiated with $f(z) = \ell(z,a_Q)$, $\varphi_+(\lambda)=\varphi_Q(\lambda)$ and $b_+ = b_Q$ for \eqref{eq:DPQ_diff_+}, and with $f(z) = \ell(z,a_P)$, $\varphi_-(\lambda)=\varphi_P(\lambda)$ and $b_- = b_P$ for \eqref{eq:DPQ_diff_-}.
	\end{proof}
	
	\begin{lemma}\label{lm:DPQ_variational_gen}
		For distributions $P$ and $Q$ on an arbitrary set $\sZ$ and a function $f:\sZ\rightarrow\R$, if there exists a function $\varphi_+$ over $[0,b_+)$ with some $b_+ \in (0, \infty]$ such that
		\begin{align}
			\log \E_Q\left[e^{\lambda \left(f(Z) - \E_Q f(Z) \right)}\right] \le \varphi_+(\lambda), \quad\forall\,  0\le \lambda < b_+ , \label{eq:DPQ_var_cond+}
		\end{align}
		then
		\begin{align}\label{eq:DPQ_variational_gen_+}
			\E_P[f(Z)] - \E_Q[f(Z)] \le \varphi_+^{*-1}(D(P\| Q)) ;
		\end{align}
		for the other direction, if there exists a function $\varphi_-$ over $[0,b_-)$ with some $b_- \in (0, \infty]$ such that
		\begin{align}
			\log \E_Q\Big[e^{-\lambda \left(f(Z) - \E_Q f(Z) \right)}\Big] \le \varphi_-(\lambda), \quad\forall\, 0\le \lambda < b_-, \label{eq:DPQ_var_cond-}
		\end{align}
		then
		\begin{align}\label{eq:DPQ_variational_gen_-}
			\E_Q[f(Z)] - \E_P[f(Z)] \le \varphi_-^{*-1}(D(P\| Q)) ;
		\end{align}
		where 
		\begin{align}
			\varphi_+^*(\gamma) &\deq \sup_{0\le \lambda \le b_+} \lambda \gamma - \varphi_+(\lambda) , \quad \gamma\in\R \\
			\varphi_-^*(\gamma) &\deq \sup_{0\le \lambda \le b_-} \lambda \gamma - \varphi_-(\lambda) , \quad \gamma\in\R 
		\end{align}
		are Legendre duals of $\varphi_+$ and $\varphi_-$,
		and $\varphi_+^{*-1}$ and $\varphi_-^{*-1}$ are the generalized inverses of $\varphi_+^*$ and $\varphi_-^*$,
		\begin{align}
			\varphi_+^{*-1}(x) &\deq \sup\{\gamma\in\R:\varphi_+^*(\gamma)< x\} , \quad x\in\R \label{eq:inv_Legendre_dual} \\
			\varphi_-^{*-1}(x) &\deq \sup\{\gamma\in\R:\varphi_-^*(\gamma)< x\} , \quad x\in\R .
		\end{align}
		In addition, if $\varphi_+(\lambda)$ is strictly convex over $(0, b_+)$ and $\varphi_+(0) =  \varphi_+'(0)=0$, then 
		\begin{align}
			\lim_{x\downarrow 0}\varphi_+^{*-1}(x) = 0 ;
		\end{align}
		similarly, if $\varphi_-(\lambda)$ is strictly convex over $(0, b_-)$ and $\varphi_-(0) =  \varphi_-'(0)=0$, then 
		\begin{align}
			\lim_{x\downarrow 0}\varphi_-^{*-1}(x) = 0. 
		\end{align}
	\end{lemma}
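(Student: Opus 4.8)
The plan is to derive the whole lemma from the Gibbs/Donsker--Varadhan variational inequality for the KL divergence, applied to the one--parameter tilted family generated by $f$. First I would dispose of the degenerate case: if $P\not\ll Q$ then $D(P\|Q)=\infty$ and both bounds are vacuous, so assume $P\ll Q$. For $\lambda\ge 0$ with $\E_Q[e^{\lambda f(Z)}]<\infty$ (which the hypothesis \eqref{eq:DPQ_var_cond+} guarantees for $0\le\lambda<b_+$), introduce the tilted measure $Q_\lambda$ with density $e^{\lambda f}/\E_Q[e^{\lambda f}]$ w.r.t.\ $Q$. Expanding $\log(\rd P/\rd Q_\lambda)$ and taking $\E_P$ gives the exact identity $D(P\|Q_\lambda)=D(P\|Q)-\lambda(\E_P f-\E_Q f)+\log\E_Q[e^{\lambda(f-\E_Q f)}]$; nonnegativity of $D(P\|Q_\lambda)$ then yields, for all $0\le\lambda<b_+$, the master inequality
\[
\lambda\big(\E_P[f(Z)]-\E_Q[f(Z)]\big)\le D(P\|Q)+\log\E_Q\big[e^{\lambda(f(Z)-\E_Q f(Z))}\big]\le D(P\|Q)+\varphi_+(\lambda),
\]
where the last step is exactly the assumption \eqref{eq:DPQ_var_cond+}.

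Writing $\Delta\deq\E_P f-\E_Q f$ and $x\deq D(P\|Q)$, the master inequality rearranges to $\lambda\Delta-\varphi_+(\lambda)\le x$ for every $\lambda\in[0,b_+)$; taking the supremum over $\lambda$ gives $\varphi_+^*(\Delta)\le x$. The remaining task, and the main obstacle, is the \emph{inversion}: to pass from $\varphi_+^*(\Delta)\le x$ to $\Delta\le\varphi_+^{*-1}(x)$, where the generalized inverse in \eqref{eq:inv_Legendre_dual} is built from the \emph{open} sublevel set $\{\gamma:\varphi_+^*(\gamma)<x\}$ while I only have the \emph{closed} condition $\varphi_+^*(\Delta)\le x$. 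I would bridge this gap using the structure of a Legendre transform taken over $\lambda\ge0$: $\varphi_+^*$ is convex, nondecreasing, and lower semicontinuous, and its only flat stretch sits at the level attained by $\lambda=0$, namely $\varphi_+^*$'s minimal value $\le 0\le x$. Hence for $x>0$ the sublevel set is a half-line $(-\infty,c)$ with $\varphi_+^*$ strictly increasing through the value $x$ at $c=\varphi_+^{*-1}(x)$, so $\varphi_+^*(\Delta)\le x$ indeed forces $\Delta\le c$. A convenient companion fact, needing no convexity, is the one-sided identity $\varphi_+^{*-1}(x)\le\inf_{0<\lambda<b_+}\frac{x+\varphi_+(\lambda)}{\lambda}$, obtained directly because $\varphi_+^*(\gamma)<x$ implies $\gamma<(x+\varphi_+(\lambda))/\lambda$ for every $\lambda>0$; I will reuse it below. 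This establishes \eqref{eq:DPQ_variational_gen_+}.

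The reverse bound \eqref{eq:DPQ_variational_gen_-} follows by running the identical argument with $-\lambda(f-\E_Q f)$ in place of $\lambda(f-\E_Q f)$ (equivalently, replacing $f$ by $-f$): the tilted-measure identity now produces $\lambda(\E_Q f-\E_P f)\le x+\varphi_-(\lambda)$, hence $\varphi_-^*(\E_Q f-\E_P f)\le x$, and the same inversion gives $\E_Q f-\E_P f\le\varphi_-^{*-1}(x)$. As a sanity check, the subgaussian specialization $\varphi_\pm(\lambda)=\sigma^2\lambda^2/2$, $b_\pm=\infty$ gives $\varphi_\pm^*(\gamma)=\gamma^2/(2\sigma^2)$ for $\gamma\ge0$ and $\varphi_\pm^{*-1}(x)=\sqrt{2\sigma^2x}$, which is precisely how Theorem~\ref{prop:cont_KL} recovers \eqref{eq:subG_cont_KL_Q}--\eqref{eq:subG_cont_KL_P}.

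For the final limit statement I would avoid the delicate inversion entirely and use only the easy one-sided bound from the previous paragraph. Fixing any $\lambda_0\in(0,b_+)$ gives $\varphi_+^{*-1}(x)\le(x+\varphi_+(\lambda_0))/\lambda_0$, so $\limsup_{x\downarrow0}\varphi_+^{*-1}(x)\le\varphi_+(\lambda_0)/\lambda_0$. Strict convexity together with $\varphi_+(0)=\varphi_+'(0)=0$ makes $0$ the global minimizer of $\varphi_+$, so $\varphi_+\ge0$ and $\varphi_+(\lambda_0)/\lambda_0\downarrow\varphi_+'(0)=0$ as $\lambda_0\downarrow0$, whence $\limsup_{x\downarrow0}\varphi_+^{*-1}(x)\le0$. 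For the matching lower bound, $\varphi_+^*(0)=\sup_{\lambda\ge0}(-\varphi_+(\lambda))=0<x$, so $0$ lies in the sublevel set and $\varphi_+^{*-1}(x)\ge0$. Combining the two gives $\lim_{x\downarrow0}\varphi_+^{*-1}(x)=0$, and the statement for $\varphi_-$ follows by the same computation.
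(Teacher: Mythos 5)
Your proof is correct and follows the same core route as the paper's (Donsker--Varadhan plus Legendre duality plus inversion of $\varphi_\pm^*$), but it differs in three ways worth recording. First, you derive the needed instance of Donsker--Varadhan from scratch via the tilted measure $Q_\lambda$ and the exact identity $D(P\|Q_\lambda)=D(P\|Q)-\lambda(\E_P f-\E_Q f)+\log\E_Q[e^{\lambda(f-\E_Q f)}]$, whereas the paper simply invokes the variational formula; your version is self-contained and makes the role of nonnegativity of KL transparent. Second, and more substantively, you confront the inversion step honestly: the lemma defines $\varphi_+^{*-1}(x)$ in \eqref{eq:inv_Legendre_dual} via the \emph{strict} sublevel set $\{\gamma:\varphi_+^*(\gamma)<x\}$, while the argument only delivers $\varphi_+^*(\Delta)\le x$; the paper's appendix silently switches to the non-strict definition $\sup\{\gamma:\varphi_+^*(\gamma)\le x\}$ (which is also the definition used in Theorem~\ref{prop:cont_KL}), making the inversion a tautology. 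Your convexity argument closes the gap for $x>0$, but note it cannot close it at $x=0$: in the subgaussian case $\varphi_+^*\equiv 0$ on $(-\infty,0]$, so the strict sublevel set at $x=0$ is empty and the strict-inequality inverse is $-\infty$, under which \eqref{eq:DPQ_variational_gen_+} would fail for $P=Q$. This is a defect of the statement's definition (the $<$ should be $\le$), not of your argument, but you should flag the $x=0$ case explicitly rather than leave it implicit. Third, your proof of $\lim_{x\downarrow 0}\varphi_+^{*-1}(x)=0$ via the elementary envelope bound $\varphi_+^{*-1}(x)\le(x+\varphi_+(\lambda_0))/\lambda_0$ together with $\varphi_+(\lambda_0)/\lambda_0\to\varphi_+'(0)=0$ is cleaner than the paper's route through strict monotonicity and continuity of $\varphi_+^*$ and its inverse, and it isolates exactly which hypotheses ($\varphi_+\ge 0$, $\varphi_+(0)=\varphi_+'(0)=0$) are used where.
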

	As a concrete example of Lemma~\ref{lm:DPQ_variational_gen}, if $f(Z)$ is $\sigma^2$-subgaussian under $Q$, then choosing $\varphi_+(\lambda)=\varphi_-(\lambda)=\sigma^2\lambda^2/2$ and $b_+ = b_- = \infty$ leads to the well-known bound 
	\begin{align}\label{eq:E_diff_KL_subG}
		|\E_P f(Z) - \E_Q f(Z)|\le \sqrt{2\sigma^2 D(P \| Q)} ,
	\end{align}
	which is used in proving \eqref{eq:subG_cont_KL_Q} and \eqref{eq:subG_cont_KL_P}.
	
	Lemma~\ref{lm:DPQ_variational_gen} is proved in Appendix~\ref{appd:pf_DPQ_variational_gen}. The proof is adapted from \cite[Lemma~4.18]{concen_ineq_BLM}, \cite[Theorem~2]{JiaoHanWeissman_bias_17} and \cite[Theorem~1]{BuZouVvv19}.
	It is worthwhile to point out that by properly defining the inverse functions $\varphi_+^{*-1}$ and $\varphi_-^{*-1}$, the restrictions on the functions $\varphi_+$ and $\varphi_-$ in terms of convexity and boundary conditions $\varphi_\pm(0)=\varphi_\pm'(0)=0$ imposed in the references are not needed to prove \eqref{eq:DPQ_variational_gen_+} and \eqref{eq:DPQ_variational_gen_-}. However, with these conditions we can show that $\lim_{x\downarrow 0}\varphi_+^{*-1}(x) = 0$ and $\lim_{x\downarrow 0}\varphi_-^{*-1}(x) = 0$, which is needed by Theorem~\ref{prop:cont_KL} for proving the continuity of the generalized entropy.

	\subsubsection{Example: variance comparison against Gaussian}
	As the first application of the general results in Theorem~\ref{prop:cont_KL}, we consider bounding the variance difference between an arbitrary real-valued random variable, potentially unbounded, and a Gaussian random variable.
	\begin{corollary}\label{co:2_Gaussian_KL}
		For the quadratic loss, if $Z$ is Gaussian with variance $\sigma^2$ and an arbitrary mean under $Q$, then for any $P$ on $\R$,
		\begin{align}
			\big|\Var_P[Z] - \Var_Q[Z]\big| &\le 2\sigma^2 \Big( \sqrt{D(P \| Q)} + D(P \| Q) \Big) . \label{eq:var_diff_G_abs} 
		\end{align}
	\end{corollary}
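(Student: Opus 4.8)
The plan is to apply the KL-divergence machinery of Theorem~\ref{prop:cont_KL} and Lemma~\ref{lm:DPQ_variational_gen} to the quadratic loss, exploiting that under the Gaussian $Q$ the centered loss is a scaled chi-squared variable whose right and left tails are respectively sub-gamma and sub-Gaussian. For the quadratic loss the optimal actions are the means, $a_Q=\E_Q[Z]=:\mu_Q$ and $a_P=\E_P[Z]=:m_P$, and $H_2(P)=\Var_P[Z]$, $H_2(Q)=\Var_Q[Z]=\sigma^2$. The central object is $g(z)\deq\ell(z,a_Q)=(z-\mu_Q)^2$, for which $\E_Q[g]=\sigma^2$ and, by the elementary identity $\E_P[(Z-\mu_Q)^2]=\Var_P[Z]+(m_P-\mu_Q)^2$, one has the \emph{exact} decomposition $\Var_P[Z]=\E_P[g]-\delta^2$ with $\delta\deq m_P-\mu_Q$. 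Thus $\Var_P[Z]-\sigma^2=(\E_P[g]-\E_Q[g])-\delta^2$, separating a change-of-measure term in $g$ from a squared mean-shift term.

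The first step is the moment computation. Under $Q$, $(Z-\mu_Q)/\sigma$ is standard Gaussian, so $g/\sigma^2$ is $\chi^2_1$ and $\log\E_Q[e^{\lambda(g-\sigma^2)}]=-\tfrac12\log(1-2\sigma^2\lambda)-\sigma^2\lambda$ for $0\le\lambda<1/(2\sigma^2)$. Using $-\log(1-u)-u\le u^2/(2(1-u))$ I would bound this by the sub-gamma profile $\varphi_+(\lambda)=\sigma^4\lambda^2/(1-2\sigma^2\lambda)$ (variance factor $2\sigma^4$, scale $2\sigma^2$), whose Legendre-dual inverse is the standard $\varphi_+^{*-1}(x)=2\sigma^2(\sqrt{x}+x)$. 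For the left tail, $\log\E_Q[e^{-\lambda(g-\sigma^2)}]=-\tfrac12\log(1+2\sigma^2\lambda)+\sigma^2\lambda\le\sigma^4\lambda^2$ for all $\lambda\ge0$, which is sub-Gaussian with variance factor $2\sigma^4$ and gives $\varphi_-^{*-1}(x)=2\sigma^2\sqrt{x}$.

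With these two profiles both directions follow from Lemma~\ref{lm:DPQ_variational_gen} applied to $f=g$. For the upper direction, dropping $-\delta^2\le 0$ gives $\Var_P[Z]-\sigma^2\le\E_P[g]-\E_Q[g]\le\varphi_+^{*-1}(D(P\|Q))=2\sigma^2\big(\sqrt{D(P\|Q)}+D(P\|Q)\big)$ by \eqref{eq:DPQ_variational_gen_+}; this coincides with bound \eqref{eq:DPQ_diff_+} of Theorem~\ref{prop:cont_KL} for $\ell(Z,a_Q)$. For the lower direction, $\sigma^2-\Var_P[Z]=(\E_Q[g]-\E_P[g])+\delta^2\le\varphi_-^{*-1}(D(P\|Q))+\delta^2=2\sigma^2\sqrt{D(P\|Q)}+\delta^2$ by \eqref{eq:DPQ_variational_gen_-}. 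It remains to control the mean-shift: since $Z$ is $\sigma^2$-sub-Gaussian under the Gaussian $Q$, the standard bound \eqref{eq:E_diff_KL_subG} with $f(Z)=Z$ yields $\delta^2=(\E_P[Z]-\E_Q[Z])^2\le 2\sigma^2 D(P\|Q)$, whence $\sigma^2-\Var_P[Z]\le 2\sigma^2\sqrt{D(P\|Q)}+2\sigma^2 D(P\|Q)$. Combining the two directions gives \eqref{eq:var_diff_G_abs}.

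The one genuine obstacle is the lower direction. A direct appeal to the second branch \eqref{eq:DPQ_diff_-} of Theorem~\ref{prop:cont_KL} would require a tail bound for $\ell(Z,a_P)=(Z-m_P)^2$ under $Q$, but since $m_P\ne\mu_Q$ in general this is a \emph{noncentral} chi-squared, whose log-MGF carries an extra term $2\sigma^2\lambda^2\delta^2/(1+2\sigma^2\lambda)$ that does not vanish as $\lambda\to0$ and would propagate a spurious $\delta$-dependence through the inverse, yielding constants worse than those claimed. The decomposition $\Var_P[Z]=\E_P[g]-\delta^2$ is exactly what sidesteps this: it isolates the central chi-squared fluctuation of $g$ (handled by the sub-Gaussian left-tail estimate) from the squared mean-shift $\delta^2$ (handled separately by the sub-Gaussian mean bound), and the two contributions reassemble precisely into $2\sigma^2\big(\sqrt{D(P\|Q)}+D(P\|Q)\big)$.
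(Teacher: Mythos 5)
Your proof is correct, and the upper direction is exactly the paper's argument (central $\chi^2$ log-MGF, sub-gamma profile $\varphi_+(\lambda)=\sigma^4\lambda^2/(1-2\sigma^2\lambda)$, inverse dual $2\sigma^2(\sqrt{x}+x)$). The lower direction is where you genuinely diverge. The paper does take the "direct appeal" you describe as an obstacle: it computes the log-MGF of the \emph{noncentral} $\chi^2$ variable $\ell(Z,a_P)=(Z-\E_P Z)^2$ under $Q$, drops the noncentrality term $-\delta^2\lambda/(1+2\sigma^2\lambda)$ (which is negative for $\lambda>0$), and obtains $\varphi_P(\lambda)=\sigma^4\lambda^2+\delta^2\lambda$ with $\varphi_P^{*-1}(x)=2\sigma^2\sqrt{x}+\delta^2$ --- so the $\delta$-dependence propagates only as an additive $\delta^2$, not through worsened constants, and the paper lands on exactly your intermediate bound $\Var_Q[Z]-\Var_P[Z]\le 2\sigma^2\sqrt{D(P\|Q)}+\delta^2$ before invoking the same sub-Gaussian mean estimate $\delta^2\le 2\sigma^2 D(P\|Q)$. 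Your route instead stays entirely with the central variable $g=(Z-\mu_Q)^2$ via the identity $\Var_P[Z]=\E_P[g]-\delta^2$, applies the left-tail sub-Gaussian bound $\varphi_-(\lambda)=\sigma^4\lambda^2$ to $g$, and handles $\delta^2$ separately. What your version buys is that you never need the noncentral $\chi^2$ MGF or the Taylor-expansion verification the paper performs; what the paper's version buys is that it fits the two branches of Theorem~\ref{prop:cont_KL} literally as stated (conditions on $\ell(\cdot,a_Q)$ and $\ell(\cdot,a_P)$ respectively), whereas you apply Lemma~\ref{lm:DPQ_variational_gen} to $\ell(\cdot,a_Q)$ for both directions and supply the bridge to $\Var_P[Z]$ by hand. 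Your only inaccuracy is the closing claim that the direct noncentral route would yield worse constants; it does not.
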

	\begin{proof}
		We first prove that
		\begin{align}
			\Var_P[Z] - \Var_Q[Z] &\le 2\sigma^2 \Big( \sqrt{D(P \| Q)} + D(P \| Q) \Big) \label{eq:var_diff_G_P} .
		\end{align}
		Under $Q$, $(Z-\E_Q Z)^2$ has the same distribution as $\sigma^2 U^2$, where $U$ is standard Gaussian.
		From the moment generating function of the $\chi^2$ random variable, we have
		\begin{align}
			\log \E_Q\left[e^{\lambda \left((Z-\E_Q Z)^2 - \sigma^2 \right)}\right] = - \frac{1}{2}\log(1-2\sigma^2\lambda)  -\sigma^2\lambda, \quad  -\infty<\lambda<\frac{1}{2\sigma^2} .
		\end{align}
		It can be verified that \eqref{eq:DPQ_var_cond+_th} in Theorem~\ref{prop:cont_KL} is satisfied with
		$\varphi_Q(\lambda)=\sigma^4\lambda^2/(1-2\sigma^2\lambda)$ and $b_Q={1}/{2\sigma^2}$ \cite[Section~2.4]{concen_ineq_BLM}, i.e.,
		\begin{align}
			\log \E_Q\left[e^{\lambda \left((Z-\E_Q Z)^2 - \sigma^2 \right)}\right] < \frac{\sigma^4 \lambda^2}{1-2\sigma^2\lambda}, \quad  \forall \,\, 0<\lambda<\frac{1}{2\sigma^2} .
		\end{align}
		Further, we have $\varphi_Q^*(\gamma)=(\sqrt{2\gamma+\sigma^2}-\sigma)^2/4\sigma^2$ and $\varphi_Q^{*-1}(x)=2\sigma^2(\sqrt{x}+x)$, which leads to \eqref{eq:var_diff_G_P} by \eqref{eq:DPQ_diff_+} in Theorem~\ref{prop:cont_KL}.
		
		Next, we prove the other direction
		\begin{align}
			\Var_Q[Z] - \Var_P[Z] &\le 2\sigma^2 \Big( \sqrt{D(P \| Q)} + D(P \| Q) \Big) . \label{eq:var_diff_G_Q}
		\end{align}
		Under $Q$, $(Z-\E_P Z)^2$ has the same distribution as $\sigma^2 U^2$, where $U$ is Gaussian with mean $(\E_Q[Z]-\E_P[Z])/\sigma$ and variance $1$.
		From the moment generating function of the non-central $\chi^2$ random variable, we have
		\begin{align}
			\log \E_Q\left[e^{-\lambda \left((Z-\E_P[Z])^2 - \E_Q[(Z-\E_P Z)^2] \right)}\right] = & -\frac{1}{2}\log (1+2\sigma^2\lambda) + \lambda \E_Q[(Z-\E_P[Z])^2] \nonumber \\
			& - \frac{(\E_Q[Z]-\E_P[Z])^2\lambda}{1+2\sigma^2\lambda} , \quad -\frac{1}{2\sigma^2}<\lambda<\infty .
		\end{align}
		Dropping the last term when $\lambda>0$, we have
		\begin{align}\label{eq:mgf_noncentral_chi2}
			\log \E_Q\left[e^{-\lambda \left((Z-\E_P[Z])^2 - \E_Q[(Z-\E_P Z)^2] \right)}\right] \le -\frac{1}{2}\log (1+2\sigma^2\lambda) + \lambda \E_Q[(Z-\E_P[Z])^2] , \quad \forall \lambda>0 .
		\end{align}
		It can be verified via Taylor expansion of the right-hand side of \eqref{eq:mgf_noncentral_chi2} that \eqref{eq:DPQ_var_cond-_th} in Theorem~\ref{prop:cont_KL} is satisfied with
		$\varphi_P(\lambda)=\sigma^4\lambda^2 - \big(\sigma^2-\E_Q[(Z-\E_P Z)^2]\big)\lambda$ and $b_P = \infty$, i.e.,
		\begin{align}
			\log \E_Q\left[e^{-\lambda \left((Z-\E_P[Z])^2 - \E_Q[(Z-\E_P Z)^2] \right)}\right] \le 
			\sigma^4\lambda^2 - \big(\sigma^2-\E_Q[(Z-\E_P Z)^2]\big)\lambda , \quad \forall \lambda>0 .
		\end{align}
		Further, we have $\varphi_P^*(\gamma)=(\gamma+\sigma^2 - \E_Q[(Z-\E_P Z)^2])^2/4\sigma^4$ and $\varphi_P^{*-1}(x)=2\sigma^2\sqrt{x}+(\E_P[Z] - \E_Q[Z])^2$, which leads to 
		\begin{align}
			\Var_Q[Z] - \Var_P[Z] &\le 2\sigma^2 \sqrt{D(P \| Q)} + \big(\E_P[Z] - \E_Q[Z]\big)^2 
		\end{align}
		by \eqref{eq:DPQ_diff_-} in Theorem~\ref{prop:cont_KL}.
		The upper bound in \eqref{eq:var_diff_G_Q} then follows from the fact that $(\E_P[Z] - \E_Q[Z])^2 \le 2\sigma^2 D(P \| Q)$, which is in turn due to the fact that $Z$ is Gaussian with variance $\sigma^2$ under $Q$ and \eqref{eq:E_diff_KL_subG} as a consequence of Lemma~\ref{lm:DPQ_variational_gen}.
	\end{proof}

	\subsubsection{Example: bounded loss functions}
	Next, we apply Theorem~\ref{prop:cont_KL} to the cases where the loss function is bounded. 
	Using the fact that a bounded random variable taking values in $[\alpha,\beta]$ is $(\beta-\alpha)^2/4$-subgaussian under any distribution, Theorem~\ref{prop:cont_KL} leads to the following corollary.
	\begin{corollary}\label{co:cont_KL_bdd}
		If $\ell(\cdot,a_Q)\in [\alpha_Q,\beta_Q]$ for all $z\in\sZ$, then
		\begin{align}\label{eq:bdd_cont_KL_Q}
			H_\ell(P) - H_\ell(Q) \le (\beta_Q-\alpha_Q)\sqrt{\frac{1}{2} D(P \| Q)} ;
		\end{align}
		if $\ell(\cdot,a_P)\in [\alpha_P,\beta_P]$ for all $z\in\sZ$, then
		\begin{align}\label{eq:bdd_cont_KL_P}
			H_\ell(Q) - H_\ell(P) \le (\beta_P-\alpha_P)\sqrt{\frac{1}{2} D(P \| Q)} .
		\end{align}
		In particular, for the log loss, using the notation in Corollary~\ref{co:log_TV}, 
		\begin{align}
			|H_{\log} (P) - H_{\log} (Q)| \le \big(\log(\bar{P} \vee \bar{Q}) \big) \sqrt{\frac{1}{2} D(P \| Q)}  ; \label{eq:log_abs_KL} 
		\end{align}
		for the quadratic loss, if $\sZ\subset[\alpha,\beta]\subset\R$, then
		\begin{align}
			\big|\Var_P[Z] - \Var_Q[Z]\big| \le (\beta-\alpha)^2 \sqrt{\frac{1}{2} D(P \| Q)} ;
		\end{align}
		while for the zero-one loss,
		\begin{align}\label{eq:H01_KL}
			|H_{01}(P) - H_{01}(Q)| \le \sqrt{\frac{1}{2}D(P\|Q)} .
		\end{align}
	\end{corollary}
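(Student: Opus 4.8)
The plan is to specialize the subgaussian bounds \eqref{eq:subG_cont_KL_Q} and \eqref{eq:subG_cont_KL_P} of Theorem~\ref{prop:cont_KL}, using Hoeffding's lemma to convert boundedness into subgaussianity. First I would recall that any random variable taking values in $[\alpha,\beta]$ is $(\beta-\alpha)^2/4$-subgaussian under any distribution. Hence, under the hypothesis $\ell(\cdot,a_Q)\in[\alpha_Q,\beta_Q]$, the loss $\ell(Z,a_Q)$ is $\sigma_Q^2$-subgaussian under $Q$ with $\sigma_Q^2=(\beta_Q-\alpha_Q)^2/4$. Substituting this value into \eqref{eq:subG_cont_KL_Q} gives
\[
H_\ell(P) - H_\ell(Q) \le \sqrt{2\cdot \tfrac{(\beta_Q-\alpha_Q)^2}{4} D(P\|Q)} = (\beta_Q-\alpha_Q)\sqrt{\tfrac{1}{2}D(P\|Q)},
\]
which is \eqref{eq:bdd_cont_KL_Q}. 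The reverse inequality \eqref{eq:bdd_cont_KL_P} follows identically from \eqref{eq:subG_cont_KL_P} with $\sigma_P^2=(\beta_P-\alpha_P)^2/4$.

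For the three concrete losses, I would simply reuse the range computations already established for the total-variation corollaries. For the log loss, the optimal actions are $a_P=P$ and $a_Q=Q$, so $\ell(z,a_Q)=-\log Q(z)$ has range length $\log\big(\sup_z Q(z)/\inf_z Q(z)\big)=\log\bar Q$ and likewise $\ell(z,a_P)$ has range length $\log\bar P$; combining the two directions and taking the larger constant yields \eqref{eq:log_abs_KL}. For the quadratic loss with $\sZ\subset[\alpha,\beta]$, both $(z-\E_P Z)^2$ and $(z-\E_Q Z)^2$ lie in $[0,(\beta-\alpha)^2]$, exactly as in the proof of Corollary~\ref{co:2_TV}, giving the stated variance bound. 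For the zero-one loss, $\ell(z,a)=\I\{z\neq a\}\in[0,1]$ so $\beta-\alpha=1$, producing \eqref{eq:H01_KL}.

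The proof is entirely mechanical once Theorem~\ref{prop:cont_KL} is in hand; the only point requiring care is the correct identification of the loss ranges for each example, but these have already been worked out in the parallel total-variation corollaries (Corollaries~\ref{co:log_TV}, \ref{co:2_TV}, and \ref{co:01_TV}), so no genuine obstacle remains.
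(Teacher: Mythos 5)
Your proposal is correct and follows exactly the paper's route: the paper likewise invokes the fact that a $[\alpha,\beta]$-valued random variable is $(\beta-\alpha)^2/4$-subgaussian and substitutes into \eqref{eq:subG_cont_KL_Q} and \eqref{eq:subG_cont_KL_P}, with the loss ranges for the three examples identified just as in Corollaries~\ref{co:log_TV}, \ref{co:2_TV}, and \ref{co:01_TV}. No gaps.
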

	The results in Corollary~\ref{co:cont_KL_bdd} can also be derived from Theorem~\ref{prop:cont_TV}, Corollary~\ref{co:log_TV}, \ref{co:2_TV}, and \ref{co:01_TV} respectively, via Pinsker's inequality \cite{fdiv_Sas_Ver}.

	\subsubsection{Example: subgaussian log loss and connection to R\'enyi entropy order}
	For the log loss, Theorem~\ref{prop:cont_KL} also provide bounds for the case where $\ell(\cdot,a_Q)$ and $\ell(\cdot,a_P)$ are unbounded but subgaussian, as stated in Corollary~\ref{co:cont_KL_Renyi} below. The results reveal a connection between the continuity of the Shannon/differential entropy in distribution and the deviation of the R\'enyi (cross) entropy from the ordinary (cross) entropy. We define the \emph{R\'enyi cross entropy} as follows.
	\begin{definition}
		For distributions $P$ and $Q$ on $\sZ$, the R\'enyi cross entropy between $Q$ and $P$ of order $\alpha$, where $\alpha\in\R\setminus\{1\}$, is defined as
		\begin{align}
			R_\alpha( Q, P) \deq \frac{1}{1-\alpha} \log \int_{\sZ} Q(\rd z) P(z)^{\alpha - 1} .
		\end{align}
	\end{definition}
	\noindent 
	
	Using L'Hôspital's rule, it can be shown that 
	$
	\lim_{\alpha\rightarrow 1}R_{\alpha}(Q,P) = R_1(Q,P) \deq - \int_{\sZ}Q(\rd z)\log P(z) ,
	$
	which is the ordinary cross entropy between $Q$ and $P$. When $P=Q$, $R_\alpha(Q, Q)$ can be written as
	\begin{align}
		R_\alpha(Q) \deq \frac{1}{1-\alpha} \log \int_{\sZ} Q(\rd z) Q(z)^{\alpha - 1} , \quad\alpha\neq 1,
	\end{align}
	which is the {R\'enyi entropy} of order $\alpha$ of $Q$; and
	$
	\lim_{\alpha\rightarrow 1} R_\alpha(Q) = R_1(Q) \deq H_{\log}(Q)
	$
	is the ordinary entropy of $Q$, which is the Shannon entropy if $\sZ$ is discrete and the differential entropy if $\sZ$ is continuous.
	Note that with the above definitions, $\alpha$ can take any value in $\R$, so that $R_\alpha(Q,P)$ and $R_\alpha(Q)$ can be either positive or negative. 
	\begin{corollary}\label{co:cont_KL_Renyi}
		For the log loss, if there exists $\sigma_Q>0$ such that
		$
		R_{1-\lambda} (Q) - R_1(Q)  \le {\lambda \sigma_Q^2 }/{2} 
		$
		for all $\lambda>0$, 
		then
		\begin{align}\label{eq:Renyi_ub_KL}
			H_{\log}(P) - H_{\log}(Q) \le \sqrt{ 2\sigma_Q^2 D(P \| Q) } .
		\end{align}
		For the other direction, if there exists $\sigma_P>0$ such that 
		$
		R_{1} (Q,P) - R_{1+\lambda}(Q,P)  \le {\lambda \sigma_P^2 }/{2} 
		$
		for all $\lambda>0$, 
		then
		\begin{align}
			H_{\log}(Q) - H_{\log}(P) \le \sqrt{ 2\sigma_P^2 D(P \| Q) } .
		\end{align}
	\end{corollary}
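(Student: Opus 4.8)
The plan is to recognize that for the log loss the two hypotheses in the statement are exactly the one-sided subgaussian-type conditions \eqref{eq:DPQ_var_cond+_th} and \eqref{eq:DPQ_var_cond-_th} required by Theorem~\ref{prop:cont_KL}, once the relevant cumulant generating functions are computed and rewritten in terms of the R\'enyi (cross) entropy. For the log loss the optimal actions are $a_Q=Q$ and $a_P=P$, so that $\ell(Z,a_Q)=-\log Q(Z)$ and $\ell(Z,a_P)=-\log P(Z)$, with $\E_Q[\ell(Z,a_Q)]=R_1(Q)=H_{\log}(Q)$ and $\E_Q[\ell(Z,a_P)]=R_1(Q,P)$.

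First I would establish the upper bound \eqref{eq:Renyi_ub_KL}. Computing the cumulant generating function of $-\log Q(Z)$ under $Q$, I would write $\log\E_Q\big[e^{\lambda(-\log Q(Z)-R_1(Q))}\big]=-\lambda R_1(Q)+\log\int_\sZ Q(\rd z)Q(z)^{-\lambda}$, and observe that by the definition of the R\'enyi entropy with order $\alpha=1-\lambda$ the integral equals $e^{\lambda R_{1-\lambda}(Q)}$. Hence the cumulant generating function equals exactly $\lambda\big(R_{1-\lambda}(Q)-R_1(Q)\big)$. The hypothesis $R_{1-\lambda}(Q)-R_1(Q)\le\lambda\sigma_Q^2/2$ for all $\lambda>0$ is therefore equivalent to condition \eqref{eq:DPQ_var_cond+_th} with $\varphi_Q(\lambda)=\sigma_Q^2\lambda^2/2$ and $b_Q=\infty$, i.e.\ to $\ell(Z,a_Q)$ being one-sidedly $\sigma_Q^2$-subgaussian under $Q$. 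Applying \eqref{eq:subG_cont_KL_Q} then yields \eqref{eq:Renyi_ub_KL}.

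The other direction proceeds symmetrically. Here I would compute the cumulant generating function appearing in \eqref{eq:DPQ_var_cond-_th}, namely $\log\E_Q\big[e^{-\lambda(-\log P(Z)-R_1(Q,P))}\big]=\lambda R_1(Q,P)+\log\int_\sZ Q(\rd z)P(z)^{\lambda}$, and identify the integral with $e^{-\lambda R_{1+\lambda}(Q,P)}$ via the definition of the R\'enyi cross entropy of order $\alpha=1+\lambda$. This gives the value $\lambda\big(R_1(Q,P)-R_{1+\lambda}(Q,P)\big)$, so the hypothesis $R_1(Q,P)-R_{1+\lambda}(Q,P)\le\lambda\sigma_P^2/2$ for all $\lambda>0$ is precisely condition \eqref{eq:DPQ_var_cond-_th} with $\varphi_P(\lambda)=\sigma_P^2\lambda^2/2$ and $b_P=\infty$. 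The bound then follows from \eqref{eq:subG_cont_KL_P}.

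The only real care needed, and the place where I would slow down, is the bookkeeping of signs and R\'enyi orders in the two cumulant computations: the substitution $\lambda\mapsto\alpha$ is $\alpha=1-\lambda$ in the first case and $\alpha=1+\lambda$ in the second, and the prefactor $1/(1-\alpha)$ in the definition of $R_\alpha$ conspires with the sign of $\lambda$ to produce the clean expressions above. I would also emphasize that only one-sided ($\lambda>0$) control is invoked, which is exactly what the general bounds \eqref{eq:DPQ_diff_+} and \eqref{eq:DPQ_diff_-} underlying \eqref{eq:subG_cont_KL_Q} and \eqref{eq:subG_cont_KL_P} demand; the full two-sided subgaussianity of the log loss, which would typically fail since $-\log Q(Z)$ is heavy-tailed on one side, is never required.
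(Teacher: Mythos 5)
Your proposal is correct and follows essentially the same route as the paper: identify the one-sided cumulant generating functions of $-\log Q(Z)$ and $-\log P(Z)$ under $Q$ with $\lambda(R_{1-\lambda}(Q)-R_1(Q))$ and $\lambda(R_1(Q,P)-R_{1+\lambda}(Q,P))$ respectively, and invoke Theorem~\ref{prop:cont_KL} via \eqref{eq:DPQ_diff_+} and \eqref{eq:DPQ_diff_-} with $\varphi(\lambda)=\sigma^2\lambda^2/2$. Your explicit sign/order bookkeeping and the remark that only one-sided ($\lambda>0$) control is needed are exactly the points the paper relies on implicitly.
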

	\begin{proof}
		To prove the first upper bound, note that
		\begin{align}
			\log \E_Q\big[e^{\lambda (- \log Q(Z) - \E_Q[-\log Q(Z)])}\big] = \lambda (R_{1-\lambda} (Q) - R_1(Q)) .
		\end{align}
		If $R_{1-\lambda} (Q) - R_1(Q)  \le {\lambda \sigma_Q^2 }/{2}$ for all $\lambda>0$, 
		then we can make use of \eqref{eq:DPQ_diff_+} in Theorem~\ref{prop:cont_KL} with $\varphi_Q(\lambda) = {\lambda^2 \sigma_Q^2 }/{2}$,
		and get
		\begin{align}
			H_{\log}(P) - H_{\log}(Q) \le \sqrt{ 2\sigma_Q^2 D(P \| Q) } .
		\end{align}
		
		Similarly, for the second upper bound, note that
		\begin{align}
			\log \E_Q\big[e^{-\lambda (- \log P(Z) - \E_Q[-\log P(Z)])}\big] = \lambda (R_{1} (Q,P) - R_{1+\lambda}(Q,P)) .
		\end{align}
		If $R_{1} (Q,P) - R_{1+\lambda}(Q,P)  \le {\lambda \sigma_P^2 }/{2}$ for all $\lambda>0$, 
		then we can make use of \eqref{eq:DPQ_diff_-} in Theorem~\ref{prop:cont_KL} with $\varphi_P(\lambda) = {\lambda^2 \sigma_P^2 }/{2}$,
		and get
		\begin{align}
			H_{\log}(Q) - H_{\log}(P) \le \sqrt{ 2\sigma_P^2 D(P \| Q) } .
		\end{align}
	\end{proof}
	The upper bound in \eqref{eq:Renyi_ub_KL} of Corollary~\ref{co:cont_KL_Renyi} essentially states that if the R\'enyi entropy of a distribution is Lipschitz continuous in the entropy order at order 1, then the Shannon/differential entropy is upper-semicontinuous at that distribution.
	Further, if both $\sigma_Q$ and $\sigma_{P}$ in Corollary~\ref{co:cont_KL_Renyi} are upper-bounded by some $\beta > 0$ for all $P$ within a small neighborhood of $Q$ in terms of KL divergence, then it implies that the Shannon/differential entropy is continuous at $Q$.

	\subsection{Bounds via $\chi^2$ divergence}\label{sec:H_diff_Chi2}
	\subsubsection{General results}
	To further investigate the conditions for the generalized entropy with unbounded loss functions to be continuous, we consider the continuity in terms of the $\chi^2$ divergence, defined as $\chi^2(P\| Q)\deq\E_Q[(\frac{{\rm d}P}{{\rm d}Q}-1)^2]$.
	\begin{theorem}\label{prop:cont_Chi2}
		For distributions $P$ and $Q$ on $\sZ$, if $\Var_Q[\ell(Z,a_Q)]$ and $\Var_Q[\ell(Z,a_P)]$ exist, then
		\begin{align}\label{eq:H_diff_ub_chi2}
			H_\ell(P) - H_\ell(Q) 
			&\le \sqrt{ \Var_Q[\ell(Z,a_Q)] \chi^2(P \| Q)} ,
		\end{align}
		and
		\begin{align}\label{eq:H_diff_lb_chi2}
			H_\ell(Q) - H_\ell(P) 
			&\le \sqrt{ \Var_Q[\ell(Z,a_P)] \chi^2(P \| Q)} .
		\end{align}
	\end{theorem}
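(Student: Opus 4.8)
The plan is to combine Lemma~\ref{lm:H_diff_E} with the Cauchy–Schwarz inequality, exactly paralleling how the total-variation and KL bounds were obtained, but now pairing the loss deviation against the likelihood-ratio deviation. First I would invoke the upper bound in Lemma~\ref{lm:H_diff_E}, which gives $H_\ell(P) - H_\ell(Q) \le \E_P[\ell(Z,a_Q)] - \E_Q[\ell(Z,a_Q)]$. The key observation is that this difference can be rewritten as an expectation under $Q$ against the centered likelihood ratio: writing $r(z) \deq \frac{\rd P}{\rd Q}(z)$, we have
\begin{align}
\E_P[\ell(Z,a_Q)] - \E_Q[\ell(Z,a_Q)] = \E_Q\big[ \ell(Z,a_Q)(r(Z)-1) \big] .
\end{align}
Because $\E_Q[r(Z)-1]=0$, I may subtract the constant $\E_Q[\ell(Z,a_Q)]$ from the first factor without changing the value, obtaining $\E_Q\big[ (\ell(Z,a_Q)-\E_Q[\ell(Z,a_Q)])(r(Z)-1) \big]$.

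Next I would apply Cauchy–Schwarz to this last expression under $Q$, which yields
\begin{align}
\E_Q\big[ (\ell(Z,a_Q)-\E_Q[\ell(Z,a_Q)])(r(Z)-1) \big] \le \sqrt{ \E_Q\big[(\ell(Z,a_Q)-\E_Q[\ell(Z,a_Q)])^2\big] } \, \sqrt{ \E_Q\big[(r(Z)-1)^2\big] } .
\end{align}
The first radical is precisely $\sqrt{\Var_Q[\ell(Z,a_Q)]}$ and the second is $\sqrt{\chi^2(P\|Q)}$ by the definition $\chi^2(P\|Q)\deq\E_Q[(\frac{\rd P}{\rd Q}-1)^2]$, giving \eqref{eq:H_diff_ub_chi2}. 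The lower bound \eqref{eq:H_diff_lb_chi2} follows by the same argument applied to the left inequality in Lemma~\ref{lm:H_diff_E}: one writes $\E_Q[\ell(Z,a_P)] - \E_P[\ell(Z,a_P)] = \E_Q\big[(\ell(Z,a_P)-\E_Q[\ell(Z,a_P)])(1-r(Z))\big]$ and applies Cauchy–Schwarz identically, the sign on $(1-r)$ being immaterial after squaring.

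I do not anticipate a serious obstacle here; the argument is essentially a one-line Cauchy–Schwarz once the entropy difference is expressed through the likelihood ratio. The only point requiring mild care is absolute continuity: the representation via $r=\frac{\rd P}{\rd Q}$ presupposes $P\ll Q$, but this is automatic from the hypothesis, since $\chi^2(P\|Q)$ is finite only when $P\ll Q$ (otherwise the bound is vacuous). The hypothesis that $\Var_Q[\ell(Z,a_Q)]$ and $\Var_Q[\ell(Z,a_P)]$ exist is exactly what guarantees the first radical is finite, so both sides are well-defined and the Cauchy–Schwarz step is justified.
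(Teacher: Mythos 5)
Your proof is correct and is essentially the paper's argument: the paper invokes the Hammersley--Chapman--Robbins lower bound for $\chi^2$ together with the data-processing inequality, and your Cauchy--Schwarz computation against the centered likelihood ratio $r-1$ is precisely the proof of that bound, carried out directly on $\sZ$ so that no data-processing step is needed. Your handling of the absolute-continuity caveat (the bound is vacuous when $\chi^2(P\|Q)=\infty$) and of the finiteness of the variances is also fine.
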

	\noindent{\bf Remark.} By exchanging the roles of $P$ and $Q$ in Theorem~\ref{prop:cont_Chi2}, we can obtain another set of bounds for the entropy difference in terms of $\chi^2(Q\| P)$ under appropriate conditions.
	\begin{proof}[Proof of Theorem~\ref{prop:cont_Chi2}]
		The proof is based on the Hammersley-Chapman-Robbins (HCR) lower bound for $\chi^2$ divergence \cite{Wu_lec_stat}, which states that for any distributions $P_U$ and $Q_U$ on a set $\sU$,
		\begin{align}
			\chi^2(P_U \| Q_U) \ge \frac{(\E[P_U] - \E[Q_U])^2}{\Var[Q_U]} . 
		\end{align}
		Applying the HCR lower bound to $\ell(Z,a_Q)$ and $\ell(Z,a_P)$ in the upper and lower bound in Lemma~\ref{lm:H_diff_E} respectively, and using the data processing inequality for $\chi^2$ divergence, we obtain the bounds in \eqref{eq:H_diff_ub_chi2} and \eqref{eq:H_diff_lb_chi2}.
	\end{proof}
	The upper bound in \eqref{eq:H_diff_ub_chi2} of Theorem~\ref{prop:cont_Chi2} implies that the generalized entropy is upper semicontinuous at $Q$ in terms of $\chi^2$ divergence, as long as $\Var_Q[\ell(Z,a_Q)]$ is finite.
	Further, if $\Var_Q[\ell(Z,a_P)]$ is upper-bounded by some $\beta > 0$ for all $P$ within a small neighborhood of $Q$ in terms of $\chi^2$ divergence, then Theorem~\ref{prop:cont_Chi2} implies that the generalized entropy is continuous at $Q$.
	Compared with the conditions for continuity in terms of total variation distance and KL divergence as stated in Theorem~\ref{prop:cont_TV} and Theorem~\ref{prop:cont_KL}, continuity of the generalized entropy in terms of $\chi^2$ divergence requires minimal conditions on $\ell$ and $Q$ as shown in Theorem~\ref{prop:cont_Chi2}.

	\subsubsection{Examples}
	Applying Theorem~\ref{prop:cont_Chi2} to the log loss, we get the following results for Shannon/differential entropy.
	\begin{corollary}\label{co:log_chi2}
		For distributions $P$ and $Q$ on $\sZ$, we have
		\begin{align}\label{eq:H_ub_logChi2}
			H_{\log}(P) - H_{\log}(Q) \le \sqrt{ \Var_Q[\log {Q(Z)}] \chi^2(P \| Q)} ,
		\end{align}
		where $\Var_Q[\log {Q(Z)}]$ is known as the varentropy of distribution $Q$ \cite{Varentropy_Kon_Ver}.
		Moreover,
		\begin{align}\label{eq:H_lb_logChi2}
			H_{\log}(Q) - H_{\log}(P) \le \sqrt{ \Var_Q[\log {P(Z)}] \chi^2(P \| Q)} ,
		\end{align}
		where $\Var_Q[\log {P(Z)}]$ may be called the cross varentropy of distribution $P$ under distribution $Q$.
	\end{corollary}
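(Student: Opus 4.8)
The plan is to obtain both inequalities as a direct specialization of Theorem~\ref{prop:cont_Chi2} to the logarithmic loss, so the only work is to identify the optimal actions and the corresponding loss random variables. Recall from the examples following \eqref{eq:H_def} that for the log loss $\ell(z,Q) = -\log Q(z)$, when the action space is the family of distributions on $\sZ$, the minimizer of the expected loss under any distribution is that distribution itself. Hence the optimal actions appearing in Lemma~\ref{lm:H_diff_E} and Theorem~\ref{prop:cont_Chi2} are $a_P = P$ and $a_Q = Q$, giving $H_{\log}(P) = \E_P[-\log P(Z)]$ and $H_{\log}(Q) = \E_Q[-\log Q(Z)]$.

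With these identifications, the loss random variables entering the two bounds of Theorem~\ref{prop:cont_Chi2} become $\ell(Z,a_Q) = -\log Q(Z)$ and $\ell(Z,a_P) = -\log P(Z)$. Since variance is invariant under negation, we have $\Var_Q[\ell(Z,a_Q)] = \Var_Q[-\log Q(Z)] = \Var_Q[\log Q(Z)]$ and likewise $\Var_Q[\ell(Z,a_P)] = \Var_Q[\log P(Z)]$. Substituting $\Var_Q[\log Q(Z)]$ into \eqref{eq:H_diff_ub_chi2} yields \eqref{eq:H_ub_logChi2}, and substituting $\Var_Q[\log P(Z)]$ into \eqref{eq:H_diff_lb_chi2} yields \eqref{eq:H_lb_logChi2}. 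The first quantity is the varentropy of $Q$ and the second is the cross varentropy of $P$ under $Q$, matching the statement.

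There is essentially no obstacle here beyond the bookkeeping: the result is a plug-in of the log loss into an already-established theorem. The one point worth a remark is the finiteness hypothesis carried by Theorem~\ref{prop:cont_Chi2}, namely that the two variances exist. If either varentropy term is finite the corresponding inequality is meaningful, and if it is infinite the inequality holds trivially in the extended reals; so the corollary can be stated without an explicit finiteness caveat, with the understanding that a bound with an infinite right-hand side is vacuous.
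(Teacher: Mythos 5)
Your proposal is correct and follows the same route the paper intends: Corollary~\ref{co:log_chi2} is obtained by specializing Theorem~\ref{prop:cont_Chi2} to the log loss with $a_P=P$, $a_Q=Q$, so that $\ell(Z,a_Q)=-\log Q(Z)$ and $\ell(Z,a_P)=-\log P(Z)$, and the variances reduce to the varentropy and cross varentropy. The remark on the finiteness of the variances is a reasonable clarification but does not change the argument.
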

	
	Applying Theorem~\ref{prop:cont_Chi2} to the quadratic loss, we can deduce the following bounds on the variance difference.
	\begin{corollary}\label{co:var_chi2}
		For distributions $P$ and $Q$ on $\sZ\subset\R$, we have
		\begin{align}\label{eq:Var_ub_Chi2}
			\Var_P[Z] - \Var_Q[Z] \le \sqrt{\Var_Q\big[(Z-\E_Q [Z])^2\big] \chi^2(P\|Q)} ,
		\end{align}
		and
		\begin{align}\label{eq:Var_lb_Chi2}
			\Var_Q[Z] - \Var_P[Z] \le \sqrt{\Var_Q\big[(Z-\E_P [Z])^2\big] \chi^2(P\|Q)} .
		\end{align}
	\end{corollary}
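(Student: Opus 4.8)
The plan is to treat this as a direct specialization of Theorem~\ref{prop:cont_Chi2} to the quadratic loss, so the entire argument reduces to identifying the optimal actions and the loss evaluated at them. First I would recall, as noted among the introductory examples, that for $\ell(z,a)=(z-a)^2$ on $\sZ\subset\R$ the infimum defining the generalized entropy is attained at the mean: the optimal actions are $a_P=\E_P[Z]$ and $a_Q=\E_Q[Z]$, and correspondingly $H_\ell(P)=\Var_P[Z]$ and $H_\ell(Q)=\Var_Q[Z]$. This is the only structural input specific to the quadratic loss, and it immediately converts the abstract entropy difference $H_\ell(P)-H_\ell(Q)$ into the variance difference $\Var_P[Z]-\Var_Q[Z]$.

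Next I would write out the loss at the two optimal actions, namely $\ell(Z,a_Q)=(Z-\E_Q[Z])^2$ and $\ell(Z,a_P)=(Z-\E_P[Z])^2$, so that the variance terms appearing in Theorem~\ref{prop:cont_Chi2} become $\Var_Q[\ell(Z,a_Q)]=\Var_Q[(Z-\E_Q[Z])^2]$ and $\Var_Q[\ell(Z,a_P)]=\Var_Q[(Z-\E_P[Z])^2]$. Substituting these identifications directly into the upper bound \eqref{eq:H_diff_ub_chi2} yields \eqref{eq:Var_ub_Chi2}, and into the lower bound \eqref{eq:H_diff_lb_chi2} yields \eqref{eq:Var_lb_Chi2}. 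No coupling, duality, or concentration argument is needed here, since all of that work is already absorbed into Theorem~\ref{prop:cont_Chi2} via the Hammersley--Chapman--Robbins bound.

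I do not expect any genuine obstacle, as the proof is essentially a change of notation; the only point worth verifying is that the hypothesis of Theorem~\ref{prop:cont_Chi2}, the existence of the relevant variances, translates into a fourth-moment-type condition on $Z$ under $Q$. Concretely, $\Var_Q[(Z-\E_Q[Z])^2]$ and $\Var_Q[(Z-\E_P[Z])^2]$ are finite precisely when $Z$ has a finite fourth moment under $Q$ (and a finite second moment under $P$ to make $\E_P[Z]$ meaningful), and under this mild integrability assumption the two inequalities follow at once. I would therefore present the corollary's proof as a single application of Theorem~\ref{prop:cont_Chi2} with the quadratic-loss substitutions made explicit.
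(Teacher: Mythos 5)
Your proposal is correct and matches the paper's (implicit) argument exactly: the corollary is obtained by specializing Theorem~\ref{prop:cont_Chi2} to the quadratic loss, with $a_P=\E_P[Z]$, $a_Q=\E_Q[Z]$, $H_\ell(P)=\Var_P[Z]$, $H_\ell(Q)=\Var_Q[Z]$, and $\ell(Z,a_Q)=(Z-\E_Q[Z])^2$, $\ell(Z,a_P)=(Z-\E_P[Z])^2$. Your remark that the hypotheses amount to a finite fourth moment of $Z$ under $Q$ is a useful clarification consistent with the paper's later comparison in Section~\ref{sec:H_diff_comp}.
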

	\noindent Compared with Corollary~\ref{co:log_TV} and Corollary~\ref{co:2_TV}, we see that the results in Corollary~\ref{co:log_chi2} and Corollary~\ref{co:var_chi2} do not require $Z$ or its log probability to take values in a bounded interval.

	\subsection{Bounds via $D(P_\ell,Q_\ell)$}\label{sec:H_diff_P_ell}
	We have derived bounds for the entropy difference in terms of several $f$-divergences between distributions $P$ and $Q$ on $\sZ$, which lead to sufficient conditions on the entropy continuity.
	If our purpose is merely bounding the entropy difference rather than examining its dependence on certain statistical distance $D(P,Q)$, we may bound it in terms of the distributional change of the loss when an optimal action is taken, e.g. either $\ell(Z,a_P)$ or $\ell(Z,a_Q)$, when the distribution of $Z$ changes from $P$ to $Q$.
	In other words, we can examine the statistical distance between $P_{\ell(Z,a_Q)}$ and $Q_{\ell(Z,a_Q)}$, or between $P_{\ell(Z,a_P)}$ and $Q_{\ell(Z,a_P)}$.
	The following result is a consequence of Lemma~\ref{lm:H_diff_E} and the proof techniques used in the previous subsections.
	\begin{theorem}\label{th:D_PQ_ell}
		For all the results derived in Sections~\ref{sec:H_diff_TV}, \ref{sec:H_diff_KL} and \ref{sec:H_diff_Chi2}, the upper bounds for $H_\ell(P)-H_\ell(Q)$ continue to hold when the corresponding statistical distance $D(P,Q)$ is replaced by $D(P_{\ell(Z,a_Q)} , Q_{\ell(Z,a_Q)})$; 
		and the upper bounds for $H_\ell(Q)-H_\ell(P)$ continue to hold when $D(P,Q)$ is replaced by $D(P_{\ell(Z,a_P)} , Q_{\ell(Z,a_P)})$.
	\end{theorem}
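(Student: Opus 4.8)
The plan is to exploit a common structure shared by every proof in Sections~\ref{sec:H_diff_TV}, \ref{sec:H_diff_KL} and \ref{sec:H_diff_Chi2}: each upper bound is obtained by first invoking Lemma~\ref{lm:H_diff_E} to reduce the entropy difference to a \emph{difference of expectations of the loss at a single fixed action}, and then bounding that difference with an inequality that is valid for an arbitrary real-valued observable. Concretely, the upper bound on $H_\ell(P)-H_\ell(Q)$ always starts from $\E_P[\ell(Z,a_Q)]-\E_Q[\ell(Z,a_Q)]$, and the upper bound on $H_\ell(Q)-H_\ell(P)$ always starts from $\E_P[\ell(Z,a_P)]-\E_Q[\ell(Z,a_P)]$. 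I would record the elementary but decisive observation that, writing $W\deq \ell(Z,a_Q)$, the quantity $\E_P[\ell(Z,a_Q)]$ is exactly the mean of the pushforward law $P_{\ell(Z,a_Q)}$ of $W$, and likewise under $Q$; hence
\begin{align}
\E_P[\ell(Z,a_Q)]-\E_Q[\ell(Z,a_Q)] = \int_\R w\,\big(P_{\ell(Z,a_Q)}-Q_{\ell(Z,a_Q)}\big)(\rd w),
\end{align}
which depends on $P,Q$ only through the one-dimensional laws $P_{\ell(Z,a_Q)}$ and $Q_{\ell(Z,a_Q)}$.

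Next I would re-run each of the three bounding steps, but applied directly to these pushforward laws on $\R$ with the identity observable, rather than to $P,Q$ on $\sZ$ with the observable $\ell(\cdot,a_Q)$. For the total-variation bound of Theorem~\ref{prop:cont_TV}, the hypothesis $\ell(\cdot,a_Q)\in[\alpha_Q,\beta_Q]$ says precisely that $W$ is supported in $[\alpha_Q,\beta_Q]$, so subtracting the midpoint $(\alpha_Q+\beta_Q)/2$ and bounding by the total mass of $|P_{\ell(Z,a_Q)}-Q_{\ell(Z,a_Q)}|$ reproduces the argument verbatim with $d_{\rm TV}(P,Q)$ replaced by $d_{\rm TV}(P_{\ell(Z,a_Q)},Q_{\ell(Z,a_Q)})$. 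For the KL bounds of Theorem~\ref{prop:cont_KL}, I would apply Lemma~\ref{lm:DPQ_variational_gen} with $\sZ$ taken to be $\R$, the distributions taken to be $P_{\ell(Z,a_Q)},Q_{\ell(Z,a_Q)}$, and $f$ the identity; the cumulant hypothesis \eqref{eq:DPQ_var_cond+_th} is a statement about the moment generating function of $W$ under $Q$, i.e.\ about $Q_{\ell(Z,a_Q)}$ alone, so the same $\varphi_Q$ is admissible and the conclusion becomes $\varphi_Q^{*-1}\big(D(P_{\ell(Z,a_Q)}\|Q_{\ell(Z,a_Q)})\big)$. For the $\chi^2$ bound of Theorem~\ref{prop:cont_Chi2}, the Hammersley--Chapman--Robbins inequality applied to $P_{\ell(Z,a_Q)}$ and $Q_{\ell(Z,a_Q)}$ yields $\sqrt{\Var_Q[\ell(Z,a_Q)]\,\chi^2(P_{\ell(Z,a_Q)}\|Q_{\ell(Z,a_Q)})}$; this is in fact the quantity produced \emph{before} the final data-processing step in the original proof, so one simply omits that step. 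The lower-bound directions are identical after replacing $a_Q$ by $a_P$ throughout.

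Finally, I would note that the corollaries in these sections (for the log, quadratic and zero-one losses) are specializations of the three master theorems, so the pushforward substitution propagates to them automatically once the observable $\ell(\cdot,a_Q)$ (respectively $\ell(\cdot,a_P)$) is instantiated. I do not expect a genuine analytic obstacle here: every inequality used is a general bound on the gap between the means of a real-valued random variable under two laws, and each regularity hypothesis---boundedness, subgaussianity, the cumulant condition, or finiteness of the variance---refers only to the law of the loss under $Q$, hence transfers to the pushforward without change. The only point requiring care is bookkeeping: making sure that in each case the distance that appears is the pushforward under the \emph{correct} action ($a_Q$ for the $H_\ell(P)-H_\ell(Q)$ direction, $a_P$ for the reverse), which is dictated by which side of Lemma~\ref{lm:H_diff_E} is being bounded.
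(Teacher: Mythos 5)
Your proposal is correct and follows exactly the route the paper intends: the paper justifies this theorem in one line as "a consequence of Lemma~\ref{lm:H_diff_E} and the proof techniques used in the previous subsections," and your argument simply makes that explicit by observing that each bound reduces to a mean gap of the scalar random variable $\ell(Z,a_Q)$ (or $\ell(Z,a_P)$) and then re-applying the midpoint, Donsker--Varadhan, and HCR inequalities directly to the pushforward laws on $\R$ with the identity observable. Your remark that the $\chi^2$ proof already produces the pushforward version before the final data-processing step matches the paper's own proof of Theorem~\ref{prop:cont_Chi2}.
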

	\noindent 
	
	Due to the data processing inequality of the $f$-divergence, the bounds described in Theorem~\ref{th:D_PQ_ell} are tighter than their counterparts in the previous sections.
	To illustrate the potential improvement, we examine a case where $\sZ=\R^p$, $\sA = \{a\in\R^p: \|a\|=1\}$, and $\ell(z,a) = - a^\top z$. 
	Let the distributions $P$ and $Q$ on $\sZ$ be $\mathcal N(\mu_P,\sigma_P^2\mathbf{I})$ and $\mathcal N(\mu_Q,\sigma_Q^2\mathbf{I})$, with mean vectors $\mu_P,\mu_Q\in\R^p$ and elementwise variances $\sigma_P^2$ and $\sigma_Q^2$.
	Then, $H_{\ell}(P)=-\|\mu_P\|$ and $H_{\ell}(Q) = -\|\mu_Q\|$, with $a_P = \mu_P/\|\mu_P\|$ and $a_Q = \mu_Q/\|\mu_Q\|$.
	In addition, under $P$, $\ell(Z,a_P) \sim \mathcal{N}(-\|\mu_P\|,\sigma_P^2)$ and $\ell(Z,a_Q) \sim \mathcal{N}(-\mu_Q^\top \mu_P/\|\mu_Q\|,\sigma_P^2)$; while under $Q$, $\ell(Z,a_P) \sim \mathcal{N}(-\mu_P^\top \mu_Q/\|\mu_P\|,\sigma_Q^2)$ and $\ell(Z,a_Q) \sim \mathcal{N}(-\|\mu_Q\|,\sigma_Q^2)$.
	Applying Theorem~\ref{th:D_PQ_ell} to \eqref{eq:subG_cont_KL_Q} and \eqref{eq:subG_cont_KL_P}, respectively, in Theorem~\ref{prop:cont_KL} yields
	\begin{align}
		H_{\ell}(P) - H_{\ell}(Q) 
		\le \sqrt{ \Big( \|\mu_Q\| - \frac{\mu_Q^\top \mu_P}{\|\mu_Q\|} \Big)^2 +  \sigma_Q^2\Big(\frac{\sigma_P^2}{\sigma_Q^2} - 1 - \log\frac{\sigma_P^2}{\sigma_Q^2} \Big)} ,
	\end{align}
	and
	\begin{align}
		H_{\ell}(Q) - H_{\ell}(P) 
		\le \sqrt{ \Big ( \|\mu_P\| - \frac{\mu_P^\top \mu_Q}{\|\mu_P\|} \Big )^2  +  \sigma_Q^2 \Big(\frac{\sigma_P^2}{\sigma_Q^2} - 1 - \log\frac{\sigma_P^2}{\sigma_Q^2} \Big ) } ,
	\end{align}
	where the upper bounds do not depend on the dimension $p$ of $\sZ$.
	On the contrary, directly applying Theorem~\ref{prop:cont_KL} yields
	\begin{align}
		|H_{\ell}(Q) - H_{\ell}(P)| \le \sqrt{ \|\mu_P - \mu_Q \|^2 + p\sigma_Q^2\Big(\frac{\sigma_P^2}{\sigma_Q^2} - 1 - \log\frac{\sigma_P^2}{\sigma_Q^2} \Big)} ,
	\end{align}
	where the upper bound scales in $p$ as $O(\sqrt{p})$.
	This example shows that by considering the distributional change of the loss, Theorem~\ref{th:D_PQ_ell} can provide much tighter bounds on the entropy difference than the results obtained in the previous subsections.

	\subsection{Bounds via Wasserstein distance}\label{sec:H_diff_Wass}
	Another way to incorporate the loss function to the statistical distance between $P$ and $Q$ on $\sZ$ is by constructing a Wasserstein distance according to the property of $\ell$.
	We propose a general method to bound the entropy difference in terms of the Wasserstein distance.
	Suppose $\sZ$ is a metric space with some metric $d:\sZ\times\sZ\rightarrow\R_+$, then a Wasserstein distance ${\mathcal W}_d$ with respect to $d$ can be defined for distributions on $\sZ$ as
	\begin{align}
		{\mathcal W}_d(P,Q) \deq \inf_{P_{U,V}\in\Pi (P,Q)} \E[d(U,V)] ,
	\end{align}
	where $\Pi$ is the set of joint distributions on $\sZ\times\sZ$ with marginal distributions $P$ and $Q$.
	One can also define the Wasserstein distance with respect to $d$ of order $q$, with $q\in[1,\infty)$, as
	$
	\mathcal W_{d,q}(P,Q) \deq \inf_{P_{U,V}\in\Pi (P,Q)} \E[d(U,V)^q]^{1/q} .
	$
	A useful property of the Wasserstein distance is the Kantorovich-Rubinstein duality,
	\begin{align}\label{eq:W_dual}
		{\mathcal W}_d(P,Q) = \sup_{f:\sZ\rightarrow\R,\, \|f\|_{\rm Lip}\le 1}  (\E_P f - \E_Q f) ,
	\end{align}
	where $\|f\|_{\rm Lip}$ is the minimum value of $\alpha$ such that $|f(z)-f(z')|\le \alpha d(z,z')$ for all $z,z'\in\sZ$.
	Under the assumption that the loss function $\ell(\cdot,a)$ is Lipschitz in $z\in\sZ$ with respect to $d$ for all $a\in\sA$, \eqref{eq:W_dual} can be invoked to show the following bound on entropy difference.
	\begin{theorem}\label{prop:cont_Wass_dual}
		Suppose $\sZ$ is a metric space with metric $d$. If $\ell(\cdot,a_Q)$ is $\rho_Q$-Lipschitz in $z\in\sZ$ with respect to $d$, i.e. $|\ell(z,a_Q) - \ell(z',a_Q)| \le \rho_Q d(z,z')$ for all $z,z'\in\sZ$, then
		\begin{align}
			H_\ell(P) - H_\ell(Q) &\le \rho_Q {\mathcal W}_d(P,Q) ;
		\end{align}
		for the other direction, if $\ell(\cdot,a_P)$ is $\rho_P$-Lipschitz in $z\in\sZ$ with respect to $d$, then
		\begin{align}
			H_\ell(Q) - H_\ell(P) &\le \rho_P {\mathcal W}_d(P,Q) .
		\end{align}
	\end{theorem}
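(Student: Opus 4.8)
The plan is to derive both inequalities by feeding the sandwich bound of Lemma~\ref{lm:H_diff_E} into the Kantorovich-Rubinstein duality \eqref{eq:W_dual}. The key observation is that the two outer quantities in \eqref{eq:H_diff_lbub} are each a difference of expectations of a \emph{fixed} function---the loss evaluated at a single optimal action---taken under $P$ versus $Q$, and such differences are precisely what the dual form of the Wasserstein distance controls, provided the function is first normalized to be $1$-Lipschitz.

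First I would handle $H_\ell(P) - H_\ell(Q)$. By the right-hand inequality of \eqref{eq:H_diff_lbub}, it suffices to bound $\E_P[\ell(Z,a_Q)] - \E_Q[\ell(Z,a_Q)]$. Assuming $\rho_Q > 0$, the map $z \mapsto \ell(z,a_Q)/\rho_Q$ is $1$-Lipschitz w.r.t.\ $d$ by hypothesis, so \eqref{eq:W_dual} applied to this function gives $\E_P[\ell(Z,a_Q)] - \E_Q[\ell(Z,a_Q)] \le \rho_Q\, {\mathcal W}_d(P,Q)$, which is the first claim. The degenerate case $\rho_Q = 0$ forces $\ell(\cdot,a_Q)$ to be constant, so both sides vanish and the bound holds trivially.

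The reverse direction is symmetric. Starting from the left-hand inequality of \eqref{eq:H_diff_lbub}, rearranged as $H_\ell(Q) - H_\ell(P) \le \E_Q[\ell(Z,a_P)] - \E_P[\ell(Z,a_P)]$, I would normalize $\ell(\cdot,a_P)$ by $\rho_P$ and again invoke \eqref{eq:W_dual}. Here the expectation difference has $Q$ and $P$ in the opposite order, so I would use that ${\mathcal W}_d$ is symmetric in its arguments---equivalently, that $-f$ is $1$-Lipschitz whenever $f$ is, so the dual supremum already controls $|\E_P f - \E_Q f|$---to conclude $H_\ell(Q) - H_\ell(P) \le \rho_P\, {\mathcal W}_d(P,Q)$.

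There is no genuinely hard step; the statement is a clean pairing of Lemma~\ref{lm:H_diff_E} with the duality \eqref{eq:W_dual}. The only point deserving a line of care is integrability: the dual identity is applied to $\ell(\cdot,a_Q)$ and $\ell(\cdot,a_P)$, so one should note that the corresponding expectations under $P$ and $Q$ are finite, which is already guaranteed by the standing assumption that the infima defining $H_\ell(P)$ and $H_\ell(Q)$ are attained, so that the expectations appearing in Lemma~\ref{lm:H_diff_E} are well-defined. Finally, since ${\mathcal W}_d \le {\mathcal W}_{d,q}$ for every $q \ge 1$ by Jensen's inequality, the same bounds hold verbatim with ${\mathcal W}_d$ replaced by any higher-order Wasserstein distance.
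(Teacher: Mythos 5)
Your proposal is correct and follows essentially the same route as the paper: the upper bound of Lemma~\ref{lm:H_diff_E} reduces each direction to a difference of expectations of the fixed function $\ell(\cdot,a_Q)$ (resp.\ $\ell(\cdot,a_P)$), which the Kantorovich--Rubinstein duality \eqref{eq:W_dual} bounds by $\rho_Q\,{\mathcal W}_d(P,Q)$ (resp.\ $\rho_P\,{\mathcal W}_d(P,Q)$) after normalizing by the Lipschitz constant, with the second direction handled by symmetry of ${\mathcal W}_d$. Your extra remarks on the degenerate case $\rho_Q=0$, integrability, and the comparison with ${\mathcal W}_{d,q}$ are all sound but not needed beyond what the paper records.
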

	\begin{proof}
		For one direction,
		\begin{align}
			H_\ell(P) - H_\ell(Q) 
			&\le \E_P[\ell(Z,a_Q)] -  \E_Q[\ell(Z,a_Q)] \\
			&\le  \rho_Q \sup_{f:\sZ\rightarrow\R,\, \|f\|_{\rm Lip}\le 1} (\E_P f - \E_Q f) \\
			&= \rho_Q {\mathcal W}_d(P,Q) ,
		\end{align}
		where the second inequality is due to the assumption that $\ell(\cdot,a_Q)$ is $\rho_Q$-Lipschitz in $z\in\sZ$; and the last step is due to the Kantorovich-Rubinstein duality of Wasserstein distance \eqref{eq:W_dual}.
		The other direction can be proved by exchanging the roles of $P$ and $Q$ and noting that ${\mathcal W}_d(P,Q)={\mathcal W}_d(Q,P)$.
	\end{proof}
	As a special case, when $\sZ = \sA$ and $\ell(\cdot,\cdot)$ is a metric on $\sZ$, then $\ell(\cdot,a)$ is $1$-Lipschitz in $z$ for all $a$ due to the triangle inequality, and we have the following particularly simple-looking bound.
	\begin{corollary}\label{prop:cont_Wass}
		If $\sZ=\sA$ is a metric space with metric $\ell(\cdot,\cdot)$, then
		\begin{align}
			|H_\ell(P) - H_\ell(Q)| &\le \mathcal W_\ell(P,Q) .
		\end{align}
	\end{corollary}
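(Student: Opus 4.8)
The plan is to obtain this as an immediate specialization of Theorem~\ref{prop:cont_Wass_dual}, taking the metric $d$ on $\sZ$ to be the loss $\ell$ itself and showing that the relevant Lipschitz constants both equal $1$. Since the bound in Theorem~\ref{prop:cont_Wass_dual} requires only that $\ell(\cdot,a_Q)$ and $\ell(\cdot,a_P)$ be Lipschitz in their first argument with respect to the chosen metric, the entire task reduces to verifying this Lipschitz property in the case where $\ell$ is a metric.

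First I would verify that for every fixed action $a\in\sA$, the map $z\mapsto \ell(z,a)$ is $1$-Lipschitz with respect to $d=\ell$. This is a direct consequence of the triangle inequality and symmetry of the metric: for any $z,z'\in\sZ$,
\begin{align}
    \ell(z,a) \le \ell(z,z') + \ell(z',a) \quad\text{and}\quad \ell(z',a)\le \ell(z',z)+\ell(z,a),
\end{align}
so that $|\ell(z,a)-\ell(z',a)|\le \ell(z,z')=d(z,z')$. Applying this to the two optimal actions $a=a_P$ and $a=a_Q$ shows that both admissible Lipschitz constants may be taken as $\rho_P=\rho_Q=1$.

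With this in hand, I would invoke Theorem~\ref{prop:cont_Wass_dual} with $d=\ell$ and $\rho_P=\rho_Q=1$ to obtain the two one-sided bounds $H_\ell(P)-H_\ell(Q)\le \mathcal W_\ell(P,Q)$ and $H_\ell(Q)-H_\ell(P)\le \mathcal W_\ell(P,Q)$; combining them yields $|H_\ell(P)-H_\ell(Q)|\le \mathcal W_\ell(P,Q)$, as claimed. I do not expect any genuine obstacle here: the sole piece of content is the triangle-inequality computation above, and even the appeal to Theorem~\ref{prop:cont_Wass_dual} could be bypassed by a direct argument chaining Lemma~\ref{lm:H_diff_E} with the Kantorovich--Rubinstein duality \eqref{eq:W_dual}, since a $1$-Lipschitz function is precisely an admissible test function in the supremum on the right-hand side of \eqref{eq:W_dual}.
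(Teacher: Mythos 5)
Your proposal is correct and follows exactly the paper's route: the paper derives this corollary from Theorem~\ref{prop:cont_Wass_dual} by observing that the triangle inequality makes $\ell(\cdot,a)$ $1$-Lipschitz w.r.t.\ the metric $\ell$ for every $a$, which is precisely your argument with the Lipschitz verification spelled out. No gaps.
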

	\noindent 
	For example, for the zero-one loss, $\mathcal W_{01}(P,Q)=d_{\rm TV}(P,Q)$. Corollary~\ref{prop:cont_Wass} then implies that
	\begin{align}
		|H_{01}(P) - H_{01}(Q)| \le d_{\rm TV}(P,Q),
	\end{align}
	which is the same as the upper bound in Corollary~\ref{co:01_TV}.
	As another example, on the Euclidean space we have the following result.
	\begin{corollary}\label{co:Euc_Wass}
		If $\sZ=\sA=\R^p$ and $\ell(z,a)=\|z-a\|$ is the Euclidean distance on $\R^p$, then Corollary~\ref{prop:cont_Wass} implies that
		\begin{align}
			| H_{\text{\tiny $\|\!\cdot\!\|$}}(P) - H_{\text{\tiny $\|\!\cdot\!\|$}}(Q) | \le \mathcal W_{\text{\tiny $\|\!\cdot\!\|$}}(P,Q) . 
		\end{align}
	\end{corollary}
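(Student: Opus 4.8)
The plan is to obtain this statement as a direct specialization of Corollary~\ref{prop:cont_Wass}, whose single hypothesis is that $\sZ=\sA$ is a metric space whose metric is the loss function $\ell$. First I would check that $\ell(z,a)=\|z-a\|$ indeed defines a metric on $\R^p$: nonnegativity is immediate, $\|z-a\|=0$ if and only if $z=a$, symmetry follows from $\|z-a\|=\|a-z\|$, and the triangle inequality $\|z-a\|\le\|z-z'\|+\|z'-a\|$ is exactly the triangle inequality for the Euclidean norm. Hence $\sZ=\sA=\R^p$ equipped with $\ell$ is a metric space of precisely the type required by the corollary.

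Once this is established, the conclusion is immediate. The triangle inequality just verified gives, via the reverse triangle inequality $|\,\|z-a\|-\|z'-a\|\,|\le\|z-z'\|$, that $\ell(\cdot,a)$ is $1$-Lipschitz in $z$ for every fixed $a\in\sA$, which is the structural fact underlying Theorem~\ref{prop:cont_Wass_dual} and hence Corollary~\ref{prop:cont_Wass}. Applying Corollary~\ref{prop:cont_Wass} with this metric, and observing that the Wasserstein distance $\mathcal W_\ell$ induced by $\ell$ is by construction the Wasserstein distance with respect to the Euclidean metric $\mathcal W_{\|\cdot\|}$, yields $|H_{\|\cdot\|}(P)-H_{\|\cdot\|}(Q)|\le \mathcal W_{\|\cdot\|}(P,Q)$.

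I do not expect any genuine obstacle here: the entire content is carried by Corollary~\ref{prop:cont_Wass}, and the only thing to supply is the routine verification that the Euclidean distance satisfies the metric axioms (equivalently, that $\|z-a\|$ is $1$-Lipschitz in $z$). The proof is therefore a one-line invocation once the metric structure of $(\R^p,\ell)$ is noted.
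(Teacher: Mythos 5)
Your proposal is correct and matches the paper's (implicit) argument exactly: the corollary is stated as a direct specialization of Corollary~\ref{prop:cont_Wass}, and the only content is the routine check that the Euclidean distance is a metric on $\R^p$ (equivalently, that $\ell(\cdot,a)$ is $1$-Lipschitz via the reverse triangle inequality), so that $\mathcal W_\ell = \mathcal W_{\text{\tiny $\|\!\cdot\!\|$}}$. Nothing further is needed.
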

	\noindent In particular, for $p=1$, Corollary~\ref{co:Euc_Wass} implies that the difference between the minimum mean absolute deviation under $P$ and $Q$ is upper-bounded by the Wasserstein distance between $P$ and $Q$ with respect to the absolute difference.
	
	In addition, in view of Theorem~\ref{th:D_PQ_ell}, we have the following bounds for the entropy difference in terms of the Wasserstein distance between distributions of the loss.
	\begin{theorem}\label{prop:cont_Wass_ell}
		Due to Lemma~\ref{lm:H_diff_E} and the Kantorovich-Rubinstein duality of Wasserstein distance, 
		\begin{align}
			H_\ell(P) - H_\ell(Q) &\le \mathcal W_{|\cdot|}(P_{\ell(Z,a_Q)} , Q_{\ell(Z,a_Q)}) ,
		\end{align}
		and
		\begin{align}
			H_\ell(Q) - H_\ell(P) &\le \mathcal W_{|\cdot|}(P_{\ell(Z,a_P)} , Q_{\ell(Z,a_P)}) .
		\end{align}
	\end{theorem}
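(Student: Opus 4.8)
The plan is to mirror the proof of Theorem~\ref{prop:cont_Wass_dual}, but instead of imposing a Lipschitz condition on $\ell(\cdot,a)$ over $\sZ$ and working with the Wasserstein distance on $\sZ$, I would push everything down to the real line where the loss values live. The two ingredients named in the statement, Lemma~\ref{lm:H_diff_E} and the Kantorovich--Rubinstein duality, combine in a completely mechanical way.

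First I would invoke Lemma~\ref{lm:H_diff_E} to reduce each direction of the entropy difference to a difference of expected losses under a single fixed action. Its upper bound gives $H_\ell(P) - H_\ell(Q) \le \E_P[\ell(Z,a_Q)] - \E_Q[\ell(Z,a_Q)]$, and its lower bound, after rearranging, gives $H_\ell(Q) - H_\ell(P) \le \E_Q[\ell(Z,a_P)] - \E_P[\ell(Z,a_P)]$. The crucial observation for the next step is that $\E_P[\ell(Z,a_Q)]$ and $\E_Q[\ell(Z,a_Q)]$ are nothing but the means of the pushforward distributions $P_{\ell(Z,a_Q)}$ and $Q_{\ell(Z,a_Q)}$, both of which are distributions on $\R$; so the right-hand side above is exactly a difference of means of two one-dimensional distributions.

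Next I would apply the Kantorovich--Rubinstein duality \eqref{eq:W_dual} on the metric space $(\R,|\cdot|)$. Since the identity map $f(t)=t$ is $1$-Lipschitz, the supremum in the dual representation of $\mathcal W_{|\cdot|}(P_{\ell(Z,a_Q)},Q_{\ell(Z,a_Q)})$ is at least $\E_{P_{\ell(Z,a_Q)}}[f]-\E_{Q_{\ell(Z,a_Q)}}[f]=\E_P[\ell(Z,a_Q)]-\E_Q[\ell(Z,a_Q)]$; chaining this with the Lemma~\ref{lm:H_diff_E} bound yields the first inequality. For the second inequality I repeat the argument with the test function $f(t)=-t$, which is also $1$-Lipschitz, applied to the pair $P_{\ell(Z,a_P)},Q_{\ell(Z,a_P)}$, so that the duality lower-bounds $\mathcal W_{|\cdot|}(P_{\ell(Z,a_P)},Q_{\ell(Z,a_P)})$ by $\E_Q[\ell(Z,a_P)]-\E_P[\ell(Z,a_P)]$ in exactly the order required.

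There is essentially no obstacle here; the only point to watch is the direction of each mean difference, i.e.\ selecting $f(t)=t$ versus $f(t)=-t$ so the inequality points the correct way, which is harmless since both are $1$-Lipschitz and the duality applies verbatim. Conceptually this is the Wasserstein analogue of Theorem~\ref{th:D_PQ_ell}: because the comparison is carried out between the one-dimensional loss distributions rather than between $P$ and $Q$ on $\sZ$, the resulting bound can be strictly tighter than the $\sZ$-level estimate of Theorem~\ref{prop:cont_Wass_dual}, in the same spirit as the data-processing improvement already noted for $f$-divergences.
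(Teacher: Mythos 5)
Your proposal is correct and is exactly the argument the paper intends: the theorem statement itself cites Lemma~\ref{lm:H_diff_E} and the Kantorovich--Rubinstein duality as the proof, and your chain --- reduce to a difference of means of the pushforward loss distributions on $\R$, then lower-bound $\mathcal W_{|\cdot|}$ via the $1$-Lipschitz test functions $f(t)=t$ and $f(t)=-t$ --- is the mechanical instantiation of that. No gaps.
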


	\subsection{Bounds via $(\sA,\ell)$-dependent distance}\label{sec:H_diff_dlA}
	The bounds on entropy difference that have been studied so far are in terms of various statistical distances between $P$ and $Q$ or between $P_\ell$ and $Q_\ell$ that do not directly depend on the action space $\sA$.
	To obtain potentially tighter bounds, we consider distances that explicitly rely on both $\sA$ and $\ell$. 
	One such distance can be defined as follows.
	\begin{definition}
		The $(\sA,\ell)$-semidistance between distributions $P$ and $Q$ on $\sZ$ is defined as
		\begin{align}\label{eq:lA_dist}
			d_{\sA,\ell}(P,Q) \deq \sup_{a\in\sA} \big| \E_P[\ell(Z,a)] - \E_Q[\ell(Z,a)] \big | .
		\end{align}
	\end{definition}
	\noindent It can be checked that $d_{\sA,\ell}$ is symmetric and satisfies the triangle inequality, but it may happen that $d_{\sA,\ell}(P,Q)=0$ for $P\neq Q$, e.g. when $\ell\equiv 0$. For this reason, we call $d_{\sA,\ell}$ a semidistance.
	Note that $(\sA,\ell)$ also induces a class of functions 
	\begin{align}
		{\mathcal L}_{\sA,\ell} \deq \{\ell(\cdot,a):\sZ\rightarrow\R, a\in\sA\}, 
	\end{align}
	such that $d_{\sA,\ell}(P,Q)$ can be rewritten in terms of ${\mathcal L}_{\sA,\ell}$ as
	\begin{align}\label{eq:lA_dist_E}
		d_{\sA,\ell}(P,Q) = \sup_{f\in{\mathcal L}_{\sA,\ell}} \big| \E_P f - \E_Q f \big | .
	\end{align}
	We then see that $d_{\rm TV}(P,Q)$ is a special instance of $d_{\sA,\ell}(P,Q)$ with ${\mathcal L}_{\sA,\ell}$ being the set of measurable functions $f:\sZ\rightarrow[0,1]$. Additionally, $W_{\text{\tiny $\|\!\cdot\!\|$}}(P,Q)$ for $P$ and $Q$ on $\R^p$ with finite $\E_P\|Z\|$ and $\E_Q\|Z\|$ is another instance of $d_{\sA,\ell}(P,Q)$, with ${\mathcal L}_{\sA,\ell}$ being the set of 1-Lipschitz functions $f:\R^p\rightarrow \R$ with respect to the Euclidean distance.
	With the definition of $d_{\sA,\ell}(P,Q)$ in \eqref{eq:lA_dist} and Lemma~\ref{lm:H_diff_E}, it is straightforward to show the following bound on entropy difference.
	\begin{theorem}\label{prop:cont_dla}
		For distributions $P$ and $Q$ on $\sZ$,
		\begin{align}
			|H_{\sA,\ell}(P) - H_{\sA,\ell}(Q)| &\le d_{\sA,\ell}(P,Q) .
		\end{align}
	\end{theorem}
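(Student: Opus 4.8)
The plan is to read off the result essentially directly from Lemma~\ref{lm:H_diff_E}, since both sides of the two-sided bound in that lemma are already differences of expectations of the loss evaluated at a \emph{single fixed} action ($a_Q$ on the right, $a_P$ on the left). These are precisely the quantities that $d_{\sA,\ell}(P,Q)$ dominates by its definition in \eqref{eq:lA_dist}, so no further inequality machinery (subgaussianity, Pinsker, Kantorovich--Rubinstein, HCR) is needed here. The whole argument is to sandwich $H_\ell(P)-H_\ell(Q)$ and then take suprema over $a\in\sA$.

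Concretely, I would first handle the upper direction. By the right-hand inequality in \eqref{eq:H_diff_lbub},
\begin{align}
    H_\ell(P) - H_\ell(Q) \le \E_P[\ell(Z,a_Q)] - \E_Q[\ell(Z,a_Q)] \le \big|\E_P[\ell(Z,a_Q)] - \E_Q[\ell(Z,a_Q)]\big| .
\end{align}
Since the standing assumption guarantees $a_Q\in\sA$, this last quantity is one of the terms over which the supremum in \eqref{eq:lA_dist} is taken, hence it is at most $d_{\sA,\ell}(P,Q)$. For the other direction, the left-hand inequality in \eqref{eq:H_diff_lbub} gives
\begin{align}
    H_\ell(Q) - H_\ell(P) \le \E_Q[\ell(Z,a_P)] - \E_P[\ell(Z,a_P)] \le \big|\E_P[\ell(Z,a_P)] - \E_Q[\ell(Z,a_P)]\big| \le d_{\sA,\ell}(P,Q) ,
\end{align}
again because $a_P\in\sA$ is an admissible competitor in the supremum.

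Combining the two displays yields $|H_{\sA,\ell}(P) - H_{\sA,\ell}(Q)| \le d_{\sA,\ell}(P,Q)$, which is the claim. There is no genuine obstacle in this proof; the only point worth checking is the membership $a_P,a_Q\in\sA$ so that they are legitimate candidates in the defining supremum of $d_{\sA,\ell}$, and this is exactly what the blanket assumption that the infimum in \eqref{eq:H_def} is attained provides. It is worth emphasizing in the write-up that, unlike the earlier theorems, this bound needs \emph{no} boundedness, subgaussian, variance, or Lipschitz hypothesis on $\ell$, because the dependence on the loss and the action space has been folded entirely into the semidistance $d_{\sA,\ell}$ itself.
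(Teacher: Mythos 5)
Your argument is correct and is exactly the route the paper intends: the paper gives no explicit proof, stating only that the result is ``straightforward'' from Lemma~\ref{lm:H_diff_E} and the definition \eqref{eq:lA_dist}, which is precisely the sandwich-then-take-suprema argument you wrote out. Your remark about checking $a_P,a_Q\in\sA$ via the standing attainment assumption is a sensible point of care, and nothing further is needed.
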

	\noindent We will find applications of this result in Section~\ref{sec:excess_risk_freq_lA}, where we study the excess risk of the ERM algorithm in frequentist statistical learning.
	
	
	\subsection{Bounds via Bregman divergence and Euclidean distance}\label{sec:H_diff_Breg}
	The bounds on entropy difference obtained in Sections~\ref{sec:H_diff_TV} to \ref{sec:H_diff_dlA} are all based on Lemma~\ref{lm:H_diff_E}, which is a relaxation of the variational representation of the generalized entropy. In this subsection, we take a different route to bound the entropy difference, by making use of the concavity of the generalized entropy.
	The concavity of $H_{\sA,\ell}(P)$ in $P$ can be seen from the definition in \eqref{eq:H_def}, as it is the infimum of a collection of linear functions of $P$. A Bregman divergence between distributions $P$ and $Q$ on a finite $\sZ$ \cite{BREGMAN1967} can thus be defined in terms of the negative generalized entropy, as
	\begin{align}\label{eq:Breg_def}
		d_{H}(P,Q) \deq H_{\sA,\ell}(Q) - H_{\sA,\ell}(P) + \nabla H_{\sA,\ell}(Q)^\top (P-Q) .
	\end{align}
	This definition gives two exact representations of the entropy difference in terms of Bregman divergence:
	\begin{align}
		H_{\sA,\ell}(P) - H_{\sA,\ell}(Q) &= \nabla H_{\sA,\ell}(Q)^\top (P-Q) - d_{H}(P,Q) \label{eq:H_diff_Breg1} \\
		&= \nabla H_{\sA,\ell}(P)^\top (P-Q) + d_H(Q,P) \label{eq:H_diff_Breg2}
	\end{align}
	where \eqref{eq:H_diff_Breg2} is obtained by exchanging the roles of $P$ and $Q$ in \eqref{eq:Breg_def}.
	With the Cauchy-Schwarz inequality, this leads to entropy difference bounds in terms of the Bregman divergence and the Euclidean distance between two distributions.
	\begin{theorem}\label{th:H_diff_Breg}
	For distributions $P$ and $Q$ on a finite $\sZ$, 
	\begin{align}
		H_{\sA,\ell}(P) - H_{\sA,\ell}(Q) \le 
		d_H(Q,P) + \| \nabla H_{\sA,\ell}(P) \| \|P-Q\| \label{eq:H_diff_ub_Breg} ,
	\end{align}
	where $d_H(Q,P)$ follows the definition in \eqref{eq:Breg_def}. Moreover,
	\begin{align}
	H_{\sA,\ell}(P) - H_{\sA,\ell}(Q) \le 
	\| \nabla H_{\sA,\ell}(Q) \| \|P-Q\| \label{eq:H_diff_ub_Euclidean} .
	\end{align}
	\end{theorem}
	\noindent{\bf Remark:}
	The upper bound in \eqref{eq:H_diff_ub_Euclidean} follows from \eqref{eq:H_diff_Breg1} and the nonnegativity of Bregman divergence, or it can be seen as a direct consequence of the concavity of the generalized entropy. By exchanging the roles of $P$ and $Q$, Theorem~\ref{th:H_diff_Breg} can also provide lower bounds for $H_{\sA,\ell}(P) - H_{\sA,\ell}(Q)$.
	
	As an example, we can use Theorem~\ref{th:H_diff_Breg} to bound the Shannon entropy difference. In this case, the Bregman divergence defined in \eqref{eq:Breg_def} coincides with the KL divergence $D(P\|Q)$. We have the following bounds.
	\begin{corollary}\label{co:H_log_diff_Breg}
		For distributions $P$ and $Q$ on a finite $\sZ$, 
		\begin{align}
			H_{\rm log}(P) - H_{\rm log}(Q) \le 
			D(Q \| P) + \| (-1 -\log P(z))_{z\in\sZ}\| \|P-Q\| \label{eq:H_log_diff_ub_Breg} .
		\end{align}
		Moreover,
		\begin{align}
			H_{\log}(P) - H_{\log}(Q) \le 
			\| (-1 -\log Q(z))_{z\in\sZ}\|  \|P-Q\| \label{eq:H_log_diff_ub_Euclidean} .
		\end{align}
	\end{corollary}
	\noindent Since Shannon entropy is permutation-invariant in the underlying distribution, $\|P-Q\|$ in \eqref{eq:H_log_diff_ub_Breg} and \eqref{eq:H_log_diff_ub_Euclidean} can be tightened by $\min_\Pi \|P-\Pi(Q)\|$, where $\Pi(Q)$ is a permutation of $Q$.

	\subsection{Comparison with existing bounds}\label{sec:H_diff_comp}
	To date there has been no general results for the continuity of generalized entropy.
	Existing entropy difference bounds in the literature are mainly for the Shannon entropy and the differential entropy.
	We make comparisons between the results presented in this work and some of the existing bounds.
	
	For Shannon entropy, the following well-known result provides an upper bound on the entropy difference in terms of total variation distance \!\cite[Lemma~2.7]{CsiKor_book1st}, \cite[Theorem~17.3.3]{Cover_book}.
	\begin{theorem}\label{lm:H_cont}
		For $P$ and $Q$ on a finite space $\sZ$ such that $d_{\rm TV}(P,Q)\le{1}/{4}$,
		\begin{align}\label{eq:ShH_cont_TV}
			|H_{\log}(P)-H_{\log}(Q)| \le 2 d_{\rm TV}(P,Q) \log \frac{|\sZ|}{2 d_{\rm TV}(P,Q)} .
		\end{align}
	\end{theorem}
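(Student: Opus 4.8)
The plan is to reduce the claim to a pointwise analysis of the scalar entropy function $\eta(t) \deq -t\log t$ on $[0,1]$, with the convention $\eta(0) \deq 0$. Since $\sZ$ is finite, $H_{\log}(P) = \sum_{z\in\sZ} \eta(P(z))$ and likewise for $Q$, so that $|H_{\log}(P) - H_{\log}(Q)| \le \sum_{z\in\sZ} |\eta(P(z)) - \eta(Q(z))|$. The crux is a modulus-of-continuity estimate for $\eta$: for $x,y\in[0,1]$ with $|x-y|\le 1/2$, one should have $|\eta(x)-\eta(y)| \le \eta(|x-y|)$. Granting this, write $\theta \deq d_{\rm TV}(P,Q)$ and $\Delta_z \deq |P(z)-Q(z)|$, so that $\sum_{z\in\sZ} \Delta_z = 2\theta \le 1/2$ by the hypothesis $\theta\le 1/4$; in particular each $\Delta_z \le 1/2$, the estimate applies termwise, and $|H_{\log}(P)-H_{\log}(Q)| \le \sum_{z\in\sZ}\eta(\Delta_z)$.

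The main obstacle is establishing the scalar estimate $|\eta(x)-\eta(y)|\le \eta(|x-y|)$. I would fix $d = |x-y|\in[0,1/2]$, assume without loss of generality $y=x+d$, and study $h(x) \deq \eta(x+d)-\eta(x)$ for $x\in[0,1-d]$. Because $\eta'(t) = -\log t - 1$ is strictly decreasing, $h'(x) = \eta'(x+d)-\eta'(x) < 0$, so $h$ is monotone decreasing; hence its extreme values are attained at the endpoints, giving $h(0) = \eta(d)$ and $h(1-d) = -\eta(1-d)$, whence $|\eta(x)-\eta(y)| = |h(x)| \le \max\{\eta(d),\eta(1-d)\}$. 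It then remains to verify $\eta(1-d)\le\eta(d)$ for $d\in[0,1/2]$. I would do this by examining $\phi(d) \deq \eta(d)-\eta(1-d)$: one has $\phi(0)=\phi(1/2)=0$ and $\phi'(d) = -\log\!\big(d(1-d)\big) - 2$, which is positive for small $d$ and changes sign exactly once on $(0,1/2)$, so $\phi$ is nonnegative throughout $[0,1/2]$. This yields $|\eta(x)-\eta(y)|\le\eta(d) = \eta(|x-y|)$, as required.

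Finally I would convert the sum $\sum_{z\in\sZ}\eta(\Delta_z)$ into the stated bound by maximizing it subject to the constraints in force: the $|\sZ|$ nonnegative summands $\Delta_z$ have fixed total $\sum_{z\in\sZ}\Delta_z = 2\theta$. Since $\eta$ is concave on $[0,1]$ (as $\eta''(t) = -1/t < 0$), Jensen's inequality gives $\frac{1}{|\sZ|}\sum_{z\in\sZ}\eta(\Delta_z) \le \eta\!\big(\tfrac{1}{|\sZ|}\sum_{z\in\sZ}\Delta_z\big) = \eta\!\big(\tfrac{2\theta}{|\sZ|}\big)$, so that $\sum_{z\in\sZ}\eta(\Delta_z) \le |\sZ|\,\eta\!\big(\tfrac{2\theta}{|\sZ|}\big) = 2\theta\log\frac{|\sZ|}{2\theta}$. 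Combining with the termwise bound from the first paragraph gives $|H_{\log}(P)-H_{\log}(Q)| \le 2\theta\log\frac{|\sZ|}{2\theta} = 2\,d_{\rm TV}(P,Q)\log\frac{|\sZ|}{2\,d_{\rm TV}(P,Q)}$, completing the argument. The only genuinely delicate point is the scalar lemma of the second paragraph, in particular the comparison $\eta(1-d)\le\eta(d)$ that pins the worst case to behavior near $0$; everything else is monotonicity, the triangle inequality, and concavity.
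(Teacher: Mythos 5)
Your proof is correct: the scalar estimate $|\eta(x)-\eta(y)|\le\eta(|x-y|)$ for $|x-y|\le 1/2$ is established soundly via the monotonicity of $h(x)=\eta(x+d)-\eta(x)$ and the comparison $\eta(1-d)\le\eta(d)$, and the concavity/Jensen step correctly converts $\sum_z\eta(|P(z)-Q(z)|)$ into $2d_{\rm TV}(P,Q)\log\frac{|\sZ|}{2d_{\rm TV}(P,Q)}$. The paper itself gives no proof of this theorem --- it is quoted from \cite[Lemma~2.7]{CsiKor_book1st} and \cite[Theorem~17.3.3]{Cover_book} as an existing bound for comparison --- and your argument is essentially the standard proof found in those references.
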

	\noindent Compared with the upper bound \eqref{eq:log_abs_TV} in Corollary~\ref{co:log_TV} and the upper bounds \eqref{eq:H_ub_logChi2} and \eqref{eq:H_lb_logChi2} in Corollary~\ref{co:log_chi2}, we see that an advantage of the new upper bounds is that they do not require the distance between $P$ and $Q$ to be small to hold.
	While $\eqref{eq:log_abs_TV}$ requires the entries of the distributions to be bounded away from zero for the upper bound to be finite, \eqref{eq:H_ub_logChi2} and \eqref{eq:H_lb_logChi2} only require the varentropy of $Q$ and the cross varentropy of $P$ under $Q$ to be finite.
	Moreover, the upper bound in Corollary~\ref{co:log_TV} is tighter in $d_{\rm TV}(P,Q)$ when it is small. For example, if $d_{\rm TV}(Q_n,Q)$ is $ O(\frac{1}{n})$, then the upper bound in \eqref{eq:ShH_cont_TV} scales as $O(\frac{\log n}{n})$, while the upper bound in Corollary~\ref{co:log_TV} scales as $O(\frac{1}{n})$.
	
	Proved via an optimal coupling argument, another Shannon entropy difference bound appears in \cite{Zhang_entropy07} and states the following.
	\begin{theorem}\label{th:Zhang07}
		For distributions $P$ and $Q$ on a finite $\sZ$,
		\begin{align}\label{eq:Zhang07}
			|H_{\log}(P)-H_{\log}(Q)| \le d_{\rm TV}(P,Q) \log(|\sZ| -1) + h_2(d_{\rm TV}(P,Q))
		\end{align}
		where $h_2$ is the binary entropy function.
	\end{theorem}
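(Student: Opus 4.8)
The plan is to prove the bound through an optimal coupling argument, which is the natural route for turning a total-variation constraint into an entropy estimate. Write $\delta \deq d_{\rm TV}(P,Q)$. Since $\sZ$ is finite, there exists a maximal coupling of $P$ and $Q$, i.e.\ a joint law of a pair $(X,Y)$ with $X\sim P$, $Y\sim Q$, and $\PP(X\neq Y) = \delta$; this is the coupling that minimizes the disagreement probability, $\delta$ being the value guaranteed by the coupling inequality. Because $\sZ$ is discrete, $H_{\log}(P)$ and $H_{\log}(Q)$ are the Shannon entropies $H(X)$ and $H(Y)$, so it suffices to bound $|H(X)-H(Y)|$ under this coupling.

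First I would record the elementary identity obtained by expanding the joint entropy $H(X,Y)$ in two ways,
\begin{align}
H(X) - H(Y) = H(X\mid Y) - H(Y\mid X) ,
\end{align}
which follows from $H(X,Y) = H(X) + H(Y\mid X) = H(Y) + H(X\mid Y)$. Since conditional entropies are nonnegative, this gives $|H(X)-H(Y)| \le \max\{H(X\mid Y),\, H(Y\mid X)\}$, so it remains to bound each conditional entropy by $\delta\log(|\sZ|-1) + h_2(\delta)$.

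The core step is a Fano-type decomposition of $H(X\mid Y)$ using the disagreement indicator $E \deq \I\{X\neq Y\}$, which satisfies $\PP(E=1)=\delta$. The chain rule yields
\begin{align}
H(X\mid Y) \le H(X,E\mid Y) = H(E\mid Y) + H(X\mid Y,E) .
\end{align}
Here $H(E\mid Y)\le H(E) = h_2(\delta)$ because conditioning cannot increase entropy. For the second term, when $E=0$ we have $X=Y$ so $H(X\mid Y,E=0)=0$, while when $E=1$ and $Y=y$ the variable $X$ is supported on the $|\sZ|-1$ symbols different from $y$, giving $H(X\mid Y=y,E=1)\le \log(|\sZ|-1)$; averaging over $E$ yields $H(X\mid Y,E)\le \delta\log(|\sZ|-1)$. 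Combining the two pieces gives $H(X\mid Y)\le \delta\log(|\sZ|-1)+h_2(\delta)$, and the same argument with the roles of $X$ and $Y$ exchanged bounds $H(Y\mid X)$ identically. Taking the maximum completes the proof.

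The main obstacle is not any single estimate — each is standard — but rather arranging them so the factor multiplying $\delta$ is $\log(|\sZ|-1)$ rather than the cruder $\log|\sZ|$. This sharpening is precisely what the optimal coupling buys: conditioning on $\{X\neq Y,\,Y=y\}$ removes the symbol $y$ from the support of $X$, and it is essential that the coupling makes $\PP(X\neq Y)$ as small as possible, namely equal to $\delta$, since the bound $\delta\log(|\sZ|-1)+h_2(\delta)$ is increasing in $\delta$ throughout the relevant range $\delta < 1-1/|\sZ|$.
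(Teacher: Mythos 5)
Your proof is correct. The paper does not prove Theorem~\ref{th:Zhang07} itself but cites it as a known result ``proved via an optimal coupling argument,'' and your maximal-coupling plus Fano-type decomposition ($H(X\mid Y)\le H(E\mid Y)+H(X\mid Y,E)$ with $E=\I\{X\neq Y\}$) is precisely the standard realization of that argument, so you have taken essentially the same route.
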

	\noindent This bound has been generalized and improved in \cite{Ho_Yeung_entropy10} and \cite{Sason_entropy13}. While tighter than the bound in Theorem~\ref{lm:H_cont}, it still scales as $O(-d_{\rm TV}(P,Q)\log{d_{\rm TV}(P,Q)})$ when $d_{\rm TV}(P,Q)$ is small, hence not as tight as the bound in Corollary~\ref{co:log_TV} when $d_{\rm TV}(P,Q)$ approaches zero.
	As an example, for two Bernoulli distributions with biases $p$ and $q$, the white region in Fig.~\ref{fig:Bern_compare} indicates the collection of $(p,q)$ such that the bound in Corollary~\ref{co:log_TV} is tighter than the bound in Theorem~\ref{th:Zhang07}.
	\begin{figure}[h]
		\centering
		\includegraphics[scale = 0.6]{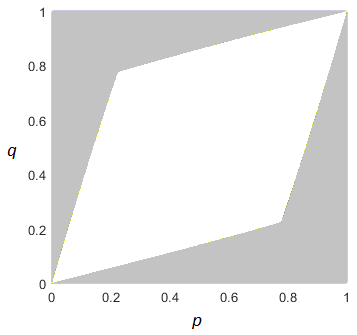}
		\caption{Comparison of bounds in \eqref{eq:log_abs_TV} and \eqref{eq:Zhang07} for ${\rm Bernoulli}(p)$ and ${\rm Bernoulli}(q)$: the bound in \eqref{eq:log_abs_TV} is tighter in the white region of $(p,q)$.}
		\label{fig:Bern_compare}
	\end{figure}
	
	For differential entropy, the entropy difference can be upper-bounded in terms of the Wasserstein distance, as stated in the following result \cite{PoliWu06}.
	\begin{theorem}\label{lm:h_cont}
		Let $\sZ=\R^p$. If $Q$ has a $(c_1,c_2)$-regular density, meaning that
		\begin{align}
			\| \nabla \log Q(z)\| \le c_1 \|z\| + c_2 ,\quad \forall z\in\R^p
		\end{align}
		then
		\begin{align}
			h(P) - h(Q) \le \Big(\frac{c_1}{2}\sqrt{\E_P[\|Z\|^2]} + \frac{c_1}{2}\sqrt{\E_Q[\|Z\|^2]} + c_2 \Big) W_{\|\cdot\|,2}(P,Q) ,
		\end{align}
		where $W_{\|\cdot\|,2}(P,Q)$ is the Wasserstein distance with respect to the Euclidean distance of order $2$.
	\end{theorem}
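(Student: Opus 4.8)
The plan is to recognize that under the log loss with $\sZ=\R^p$ the generalized entropy coincides with the differential entropy, so the stated inequality is exactly an upper bound of the type $H_{\log}(P)-H_{\log}(Q)$ studied throughout this section, and then to exploit the regularity of $Q$ through a coupling argument rather than a uniform Lipschitz bound. As recorded in the first introductory example, for the log loss the optimal action is the density itself, so $a_Q=Q$ and $\ell(z,a_Q)=-\log Q(z)$. Lemma~\ref{lm:H_diff_E} then gives
\begin{align}
h(P) - h(Q) \le \E_P[-\log Q(Z)] - \E_Q[-\log Q(Z)] = \E_P[f(Z)] - \E_Q[f(Z)],
\end{align}
where $f(z)\deq-\log Q(z)$. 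The essential difficulty is that $f$ is \emph{not} uniformly Lipschitz, so the Kantorovich--Rubinstein route of Theorem~\ref{prop:cont_Wass_dual} does not apply directly; the regularity hypothesis only controls $\|\nabla f(z)\|=\|\nabla\log Q(z)\|\le c_1\|z\|+c_2$, which grows with $\|z\|$.

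To accommodate this $z$-dependent gradient bound, I would pass to an optimal coupling $(U,V)$ of $P$ and $Q$ attaining $\E[\|U-V\|^2]=W_{\|\cdot\|,2}(P,Q)^2$ (using a near-optimal coupling and a limiting argument should the infimum fail to be attained), so that $\E_P[f(Z)]-\E_Q[f(Z)]=\E[f(U)-f(V)]$. Writing the increment as a path integral along the segment joining $V$ to $U$,
\begin{align}
f(U) - f(V) = \int_0^1 \nabla f\big(V + t(U - V)\big) \cdot (U - V)\, \rd t,
\end{align}
and applying Cauchy--Schwarz to the inner product yields $f(U)-f(V)\le \|U-V\|\int_0^1\|\nabla f(V+t(U-V))\|\,\rd t$. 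The regularity bound together with the triangle inequality $\|(1-t)V+tU\|\le(1-t)\|V\|+t\|U\|$ gives $\int_0^1\|\nabla f(V+t(U-V))\|\,\rd t\le \tfrac{c_1}{2}(\|U\|+\|V\|)+c_2$ after integrating the linear-in-$t$ factor.

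Combining these, $\E[f(U)-f(V)]$ is bounded by $\E\big[(\tfrac{c_1}{2}(\|U\|+\|V\|)+c_2)\|U-V\|\big]$. The final step is a second application of Cauchy--Schwarz, now in the expectation, to each of the three resulting terms, using $\E[\|U\|^2]=\E_P[\|Z\|^2]$, $\E[\|V\|^2]=\E_Q[\|Z\|^2]$, and $\E[\|U-V\|^2]=W_{\|\cdot\|,2}(P,Q)^2$, and $\E[\|U-V\|]\le\sqrt{\E[\|U-V\|^2]}$ for the $c_2$ term. This produces exactly $\big(\tfrac{c_1}{2}\sqrt{\E_P[\|Z\|^2]}+\tfrac{c_1}{2}\sqrt{\E_Q[\|Z\|^2]}+c_2\big)\,W_{\|\cdot\|,2}(P,Q)$. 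I expect the main obstacle to be the first point above, namely that $f$ has unbounded Lipschitz constant; this is precisely why the bound must be stated in terms of the \emph{order-}$2$ Wasserstein distance and the second moments of $P$ and $Q$, both of which emerge naturally from the two uses of Cauchy--Schwarz.
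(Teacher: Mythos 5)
Your proof is correct. Note that the paper does not actually prove Theorem~\ref{lm:h_cont}---it is quoted without proof from \cite{PoliWu06} in the comparison section---and your argument (bound $h(P)-h(Q)$ by the cross-entropy difference $\E_P[-\log Q(Z)]-\E_Q[-\log Q(Z)]$ via Lemma~\ref{lm:H_diff_E}, then control that difference with an optimal $W_{\|\cdot\|,2}$-coupling, a path integral of $\nabla \log Q$ along the segment joining the coupled points, and two applications of Cauchy--Schwarz) is essentially the original proof from that reference, correctly recast in the paper's framework; you also rightly identify why the uniform-Lipschitz route of Theorem~\ref{prop:cont_Wass_dual} is unavailable and why the order-$2$ distance and second moments must appear.
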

	\noindent Compared with the bound in \eqref{eq:H_ub_logChi2}, we see that \eqref{eq:H_ub_logChi2} only requires the varentropy of $Q$ to be finite, without other regularity conditions on $Q$.
	Moreover, the upper bound in \eqref{eq:H_ub_logChi2} depends on $P$ only through $\chi^2(P,Q)$, meaning that for a fixed $Q$, the upper bound is monotonically decreasing as $P$ gets closer to $Q$, which is sufficient to prove the upper semicontinuity of the entropy.
	
	For the quadratic loss, the following result given by Wu \cite{Wu_Var} upper-bounds the variance difference in terms of the Wasserstein distance. It can be proved by writing $\E_P[Z^2]$ and $\E_Q[Z^2]$ as $W_{\|\cdot\|,2}^2(P,\delta_0)$ and $W_{\|\cdot\|,2}^2(Q,\delta_0)$, and using the triangle inequality satisfied by the Wasserstein distance.
	\begin{theorem}\label{lm:Var_Wass}
		For $P$ and $Q$ on $\R$ with finite $\E_P[Z^2]$ and $\E_Q[Z^2]$,
		\begin{align}\label{eq:ub_Var_W2}
			\Var_P[Z]-\Var_Q[Z] \le 2 \Big(\sqrt{\E_P[Z^2]} + \sqrt{\E_Q[Z^2]}\Big) W_{\|\cdot\|,2}(P,Q) .
		\end{align}
	\end{theorem}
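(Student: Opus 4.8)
The plan is to elaborate the hint: express each raw second moment as a squared quadratic Wasserstein distance to the Dirac mass $\delta_0$ and exploit the fact that $W_{\|\cdot\|,2}$ is a genuine metric on distributions of finite second moment, so that it obeys the triangle inequality. The basic identity is that the unique coupling of any $P$ with $\delta_0$ is the product coupling, whence $W_{\|\cdot\|,2}^2(P,\delta_0)=\E_P[Z^2]$ and $W_{\|\cdot\|,2}^2(Q,\delta_0)=\E_Q[Z^2]$, i.e. $\sqrt{\E_P[Z^2]}=W_{\|\cdot\|,2}(P,\delta_0)$ and $\sqrt{\E_Q[Z^2]}=W_{\|\cdot\|,2}(Q,\delta_0)$. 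I would then split
\[
\Var_P[Z]-\Var_Q[Z]=\big(\E_P[Z^2]-\E_Q[Z^2]\big)-\big((\E_P Z)^2-(\E_Q Z)^2\big)
\]
and bound the two pieces separately. For the second-moment piece, factor the difference of squares and apply the triangle inequality $|W_{\|\cdot\|,2}(P,\delta_0)-W_{\|\cdot\|,2}(Q,\delta_0)|\le W_{\|\cdot\|,2}(P,Q)$ to the first factor, which gives $|\E_P[Z^2]-\E_Q[Z^2]|\le W_{\|\cdot\|,2}(P,Q)\big(\sqrt{\E_P[Z^2]}+\sqrt{\E_Q[Z^2]}\big)$.

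For the mean piece I would again factor, writing $(\E_P Z)^2-(\E_Q Z)^2=(\E_P Z-\E_Q Z)(\E_P Z+\E_Q Z)$. The difference of means is controlled by $|\E_P Z-\E_Q Z|\le W_{\|\cdot\|,1}(P,Q)\le W_{\|\cdot\|,2}(P,Q)$: the first inequality is the Kantorovich--Rubinstein duality \eqref{eq:W_dual} applied to the $1$-Lipschitz functions $z\mapsto \pm z$, and the second is the monotonicity $W_{\|\cdot\|,1}\le W_{\|\cdot\|,2}$ (Jensen). The sum of means is controlled by $|\E_P Z+\E_Q Z|\le |\E_P Z|+|\E_Q Z|\le \sqrt{\E_P[Z^2]}+\sqrt{\E_Q[Z^2]}$, again by Jensen. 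Hence $|(\E_P Z)^2-(\E_Q Z)^2|\le W_{\|\cdot\|,2}(P,Q)\big(\sqrt{\E_P[Z^2]}+\sqrt{\E_Q[Z^2]}\big)$, the \emph{same} bound as for the second-moment piece.

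Combining the two estimates by the triangle inequality in $\R$ yields $|\Var_P[Z]-\Var_Q[Z]|\le 2\big(\sqrt{\E_P[Z^2]}+\sqrt{\E_Q[Z^2]}\big)W_{\|\cdot\|,2}(P,Q)$, which is even slightly stronger (two-sided) than the stated one-sided bound. The only structural ingredient is the identity $W_{\|\cdot\|,2}(P,\delta_0)=\sqrt{\E_P[Z^2]}$ together with the metric property of $W_{\|\cdot\|,2}$; everything else is elementary factoring and Jensen. The point that requires care --- and the reason the hint's second-moment reduction alone does not close the argument --- is that the variance, unlike the raw second moment, carries an extra mean-squared term; it is precisely this term that produces the factor $2$, and the mild obstacle is to bound the mean difference by the \emph{same} distance $W_{\|\cdot\|,2}(P,Q)$ (via $W_{\|\cdot\|,1}\le W_{\|\cdot\|,2}$) so that the two contributions merge into one clean factor.
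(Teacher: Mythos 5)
Your proof is correct and follows essentially the same route as the paper's own one-line sketch: expressing $\E_P[Z^2]$ and $\E_Q[Z^2]$ as $W_{\|\cdot\|,2}^2(P,\delta_0)$ and $W_{\|\cdot\|,2}^2(Q,\delta_0)$ and invoking the triangle inequality for $W_{\|\cdot\|,2}$. Your explicit treatment of the residual mean-squared term, via $|\E_P Z-\E_Q Z|\le W_{\|\cdot\|,1}(P,Q)\le W_{\|\cdot\|,2}(P,Q)$, is exactly the step the paper leaves implicit and is what produces the factor $2$; it is carried out correctly.
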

	\noindent Compared with \eqref{eq:Var_ub_Chi2}, the above upper bound only requires $P$ and $Q$ to have finite second moments, while \eqref{eq:Var_ub_Chi2} requires $Q$ to have a finite fourth moment. On the other hand, the upper bound in \eqref{eq:Var_ub_Chi2} depends on $P$ only through $\chi^2(P,Q)$, hence monotonically decreasing as $P$ gets closer to $Q$, which is sufficient to prove the upper semicontinuity.

	\subsection{An information-theoretic application: mutual information upper bound}\label{sec:App_mi_ub}
	As an application of the entropy difference bounds derived in the previous subsections, we prove new upper bounds for mutual information by applying Corollary~\ref{co:log_TV} and Corollary~\ref{co:cont_KL_bdd} 
	to the log loss.
	\begin{corollary}
		For jointly distributed random variables $X$ and $Z$ that can be either discrete or continuous, let
		\begin{align}
			\gamma(x) = \log\frac{\sup_{z\in\sZ}P_{Z|X=x}(z)}{\inf_{ z\in\sZ}P_{Z|X=x}(z)} 
		\end{align}
		be the range of variation of $\log P_{Z|X=x}(\cdot)$.
		Then from Corollary~\ref{co:cont_KL_bdd}, we have
		\begin{align}
			I(X;Z) \le  \sqrt{\frac{1}{2} \E\big[\gamma^2(X)\big]L(X;Z)} \bigwedge \frac{1}{2} \E\big[\gamma^2(X)\big] 
		\end{align}
		where $L(X;Z) = D(P_X P_Z \| P_{X,Z})$ is the Lautum information between $X$ and $Z$ \cite{Lautum}.
		Moreover, from Corollary~\ref{co:log_TV}, we have
		\begin{align}\label{eq:mi_ub_TV}
			I(X;Z) 
			\le  \Big(\sup\nolimits_{x\in\sX}\gamma(x)\Big) \int_{\sX}d_{\rm TV}(P_{Z|X=x} , P_Z)P_X(\rd x)  ,
		\end{align}
		where $\int_{\sX}d_{\rm TV}(P_{Z|X=x} , P_Z)P_X(\rd x)$ may be regarded as a total variation information.
	\end{corollary}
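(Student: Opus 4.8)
The plan is to reduce both mutual-information bounds to pointwise entropy differences via the identity $I(X;Z) = H_{\log}(Z) - H_{\log}(Z|X)$, and then invoke the log-loss entropy-difference corollaries conditionally on each $x$. Using \eqref{eq:cond_uncond} to expand the conditional entropy, I would write
\begin{align}
I(X;Z) = \int_{\sX}\big[H_{\log}(P_Z) - H_{\log}(P_{Z|X=x})\big]\, P_X(\rd x),
\end{align}
so that each integrand is an entropy difference between $P = P_Z$ and $Q = P_{Z|X=x}$. The key structural observation is that for the log loss the optimal action under $Q = P_{Z|X=x}$ is $P_{Z|X=x}$ itself, so the relevant loss $\ell(\cdot, a_Q) = -\log P_{Z|X=x}(\cdot)$ ranges over an interval of width exactly $\log\bar Q = \gamma(x)$. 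This is why $\gamma(x)$ — rather than any quantity depending on $P_Z$ — is the correct local modulus of continuity.

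For the total-variation bound \eqref{eq:mi_ub_TV}, I would apply Corollary~\ref{co:log_TV} with $P = P_Z$, $Q = P_{Z|X=x}$ to get $H_{\log}(P_Z) - H_{\log}(P_{Z|X=x}) \le \gamma(x)\, d_{\rm TV}(P_Z, P_{Z|X=x})$ for each $x$, then integrate against $P_X$ and pull out $\sup_{x}\gamma(x)$. For the Lautum bound, I would instead apply the reverse-KL direction \eqref{eq:bdd_cont_KL_Q} of Corollary~\ref{co:cont_KL_bdd} (with $\beta_Q - \alpha_Q = \gamma(x)$), yielding $H_{\log}(P_Z) - H_{\log}(P_{Z|X=x}) \le \gamma(x)\sqrt{\tfrac12 D(P_Z\| P_{Z|X=x})}$; integrating and applying Cauchy--Schwarz over $P_X$ gives $I(X;Z) \le \sqrt{\tfrac12\,\E[\gamma^2(X)]\,\E_{P_X}[D(P_Z\| P_{Z|X})]}$, and the chain rule for KL divergence identifies the inner expectation as $L(X;Z) = D(P_X\otimes P_Z\| P_{X,Z})$.

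The subtler term is the minimum with $\tfrac12\E[\gamma^2(X)]$, and this is where the argument must be set up just right. The idea is to bound the same integrand using the other direction \eqref{eq:bdd_cont_KL_P} of Corollary~\ref{co:cont_KL_bdd} — writing $H_{\log}(P_Z) - H_{\log}(P_{Z|X=x})$ as $H_\ell(Q) - H_\ell(P)$ with $P = P_{Z|X=x}$, $Q = P_Z$, so that the optimal action $a_P = P_{Z|X=x}$ again makes the loss range equal to $\gamma(x)$, but now the KL term is the \emph{forward} divergence $D(P_{Z|X=x}\| P_Z)$. Integrating and applying Cauchy--Schwarz, and noting that $\E_{P_X}[D(P_{Z|X}\| P_Z)] = I(X;Z)$ is the mutual information itself, I obtain the self-referential inequality $I(X;Z) \le \sqrt{\tfrac12\E[\gamma^2(X)]\,I(X;Z)}$, which self-improves to $I(X;Z) \le \tfrac12\E[\gamma^2(X)]$. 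The main thing to get right is thus the bookkeeping of which role ($P$ or $Q$) each distribution plays: both directions of Corollary~\ref{co:cont_KL_bdd} produce the same range factor $\gamma(x)$, but the choice dictates whether the reverse KL (giving the Lautum information) or the forward KL (giving the mutual information, and hence the self-improving bound) appears.
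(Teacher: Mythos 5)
Your proposal is correct and follows essentially the same route as the paper's proof: the same decomposition $I(X;Z)=\int (H_{\log}(P_Z)-H_{\log}(P_{Z|X=x}))P_X(\rd x)$, the same two directions of Corollary~\ref{co:cont_KL_bdd} producing the reverse KL (hence the Lautum information) and the forward KL (hence the self-referential inequality $I\le\sqrt{\tfrac12\E[\gamma^2(X)]I}$), and the same application of Corollary~\ref{co:log_TV} for the total-variation bound. Your role-bookkeeping for $P$ and $Q$ matches the paper's use of the minimum $D(P_Z\|P_{Z|X=x})\wedge D(P_{Z|X=x}\|P_Z)$ exactly.
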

	\begin{proof}
		From the definition of mutual information,
		\begin{align}
			I(X;Z) &= H_{\log}(Z) - H_{\log}(Z|X) \\
			&= \int_{\sX} P_X(\rd x) (H_{\log}(P_Z) - H_{\log}(P_{Z|X=x})) . \label{eq:mi_H_diff}
		\end{align}
		If for any $x$, $\min_{ z\in\sZ} P_{Z|X=x}(z)>0$,
		then by Corollary~\ref{co:cont_KL_bdd},
		\begin{align}
			H_{\log}(P_Z) - H_{\log}(P_{Z|X=x}) &\le \gamma(x)\sqrt{\frac{1}{2} \big( D(P_{Z}\| P_{Z|X=x}) \wedge D(P_{Z|X=x}\| P_Z) \big) } .
		\end{align}
		Taking expectations on both sides over $X$, and using Cauchy-Schwarz inequality, we get
		\begin{align}
			I(X;Z) \le \sqrt{\frac{1}{2}\E\big[\gamma^2(X)\big] L(X;Z)} ,
		\end{align}
		and
		\begin{align}
			I(X;Z) \le \sqrt{\frac{1}{2}\E\big[\gamma^2(X)\big] I(X;Z)} .
		\end{align}
		The last inequality implies that
		\begin{align}
			I(X;Z) \le \frac{1}{2}\E\big[\gamma^2(X)\big] .
		\end{align}
		Finally, \eqref{eq:mi_ub_TV} follows from \eqref{eq:mi_H_diff} and Corollary~\ref{co:log_TV}.
	\end{proof}

	\section{Application to frequentist learning}\label{sec:App}
	Having studied the continuity property of the generalized entropy as a functional of the underlying distribution, we now apply the results obtained in Section~\ref{sec:H_diff} to the excess risk analysis of learning methods, the central problem of statistical learning theory.
	\subsection{Excess risk of ERM algorithm}\label{sec:excess_risk_freq}
	In the frequentist formulation of the statistical learning problem, there is a sample space $\sZ$, a fixed but unknown distribution $P$ on $\sZ$, and a hypothesis space $\sA$. A loss function $\ell:\sZ\times\sA\rightarrow\R$ is chosen to evaluate the hypotheses in $\sA$.
	For any hypothesis $a\in\sA$, its population risk is $\E_P[\ell(Z,a)]$.
	$H_{\sA,\ell}(P)$ is the \emph{minimum population risk} that would be achieved among $a\in\sA$ if $P$ were known. Neither $\E_P[\ell(Z,a)]$ nor $H_{\sA,\ell}(P)$ is known however, due to the lack of knowledge of $P$. 
	What is available instead is a training dataset $Z^n \deq (Z_1,\ldots,Z_n)$ of size $n$ drawn i.i.d.\ from $P$, with empirical distribution $\wh P_n$. 
	As a natural choice, the empirical risk minimization (ERM) algorithm returns a hypothesis $a_{\wh{P}_n}$ that minimizes the empirical risk $\E_{\wh P_n}[\ell(Z,a)]$ among $a\in\sA$, and the \emph{minimum empirical risk} is equal to $H_{\sA,\ell}(\wh P_n)$. Since $\wh P_n$ depends on $Z^n$, $H_{\sA,\ell}(\wh P_n)$ is a random variable.
	The entropy difference $|H_{\sA,\ell}(\wh P_n) - H_{\sA,\ell}(P)|$ tells us how well the unknown minimum population risk can be approximated by the minimum empirical risk that is known in principle.
	The results in Section~\ref{sec:H_diff} enable us to upper-bound $|H_{\sA,\ell}(\wh P_n) - H_{\sA,\ell}(P)|$ so as to evaluate the quality of this approximation.
	
	More importantly, the upper-bounding techniques developed in Sections~\ref{sec:H_diff_TV} to \ref{sec:H_diff_dlA} provide us with a means to analyze the \emph{excess risk} of the ERM algorithm, defined as the gap between the population risk of the algorithm-returned hypothesis $a_{\wh P_n}$ and the minimum population risk:
	\begin{align}\label{eq:excess_risk_freq}
		R_{\rm excess} \deq \E_P\big[\ell(Z,a_{\wh{P}_n}) | Z^n\big] - H_{\sA,\ell}(P) ,
	\end{align}
	where $Z$ is a fresh sample from $P$ independent of $Z^n$, so that $P_{Z|Z^n}= P$.
	Note that $R_{\rm excess}$ is a random variable, since $\E_P\big[\ell(Z,a_{\wh{P}_n}) | Z^n\big]$ depends on $Z^n$ through $a_{\wh{P}_n}$.
	Writing $R_{\rm excess}$ as
	\begin{align}
		R_{\rm excess}
		&= \big(\E_P\big[\ell(Z,a_{\wh{P}_n}) | Z^n\big] - H_{\sA,\ell}(\wh P_n) \big) + \big( H_{\sA,\ell}(\wh P_n) - H_{\sA,\ell}(P) \big) ,
	\end{align}
	and using the fact that all the entropy difference bounds in Sections~\ref{sec:H_diff_TV} to \ref{sec:H_diff_dlA} are based on Lemma~\ref{lm:H_diff_E}, and the fact that every upper bound for $H_{\sA,\ell}(P) - H_{\sA,\ell}(\wh{P}_n)$ obtained based on Lemma~\ref{lm:H_diff_E} also upper-bounds $\E_P\big[\ell(Z,a_{\wh{P}_n}) | Z^n\big] - H_{\sA,\ell}(\wh P_n)$, we deduce the following result.
	\begin{lemma}\label{lm:ERM_Rexcess}
		For any almost-sure upper bound $B$ for $|H_{\sA,\ell}(\wh P_n) - H_{\sA,\ell}(P)|$ obtained based on Lemma~\ref{lm:H_diff_E}, in particular based on the results in Sections~\ref{sec:H_diff_TV} to \ref{sec:H_diff_dlA}, almost surely we have
		\begin{align}
			R_{\rm excess} \le 2B .  \label{eq:excess_risk_freq_ub}  
		\end{align}
	\end{lemma}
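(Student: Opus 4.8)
The plan is to control each of the two summands in the displayed decomposition of $R_{\rm excess}$ by the same quantity $B$ and then add them. The second summand, $H_{\sA,\ell}(\wh P_n) - H_{\sA,\ell}(P)$, is disposed of at once: it is dominated by $|H_{\sA,\ell}(\wh P_n) - H_{\sA,\ell}(P)|$, which is $\le B$ almost surely by hypothesis.

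The heart of the matter is the first summand, $\E_P[\ell(Z,a_{\wh P_n}) | Z^n] - H_{\sA,\ell}(\wh P_n)$. Conditioned on $Z^n$ the ERM hypothesis $a_{\wh P_n}$ is a fixed element of $\sA$, while the fresh sample $Z$ has law $P$, so $\E_P[\ell(Z,a_{\wh P_n}) | Z^n] = \E_P[\ell(Z, a_{\wh P_n})]$; moreover $H_{\sA,\ell}(\wh P_n) = \E_{\wh P_n}[\ell(Z, a_{\wh P_n})]$ by the definition of the minimum empirical risk. Hence the first summand equals
\begin{align}
	\E_P[\ell(Z,a_{\wh P_n})] - \E_{\wh P_n}[\ell(Z,a_{\wh P_n})] ,
\end{align}
which is precisely the right-hand expression in Lemma~\ref{lm:H_diff_E} when that lemma is applied with $P$ in the role of $P$, the empirical distribution $\wh P_n$ in the role of $Q$, and accordingly $a_{\wh P_n}$ in the role of $a_Q$. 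In other words, the first summand is identical to the one-sided quantity whose bounding produces the upper bound on $H_{\sA,\ell}(P) - H_{\sA,\ell}(\wh P_n)$ in the lemma.

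I would then close the argument by invoking the specific provenance of $B$. Every entropy-difference upper bound in Section~\ref{sec:H_diff} is produced by upper-bounding exactly this quantity $\E_P[\ell(Z,a_Q)] - \E_Q[\ell(Z,a_Q)]$ with $Q = \wh P_n$; consequently any $B$ so obtained dominates the first summand as well, giving first summand $\le B$ almost surely. Adding the two bounds then yields $R_{\rm excess} \le 2B$. The main subtlety to state carefully is this last point: the conclusion is \emph{not} valid for an arbitrary upper bound on $|H_{\sA,\ell}(\wh P_n) - H_{\sA,\ell}(P)|$, but only for one arising through Lemma~\ref{lm:H_diff_E}, because the argument relies on $B$ controlling the integral of $\ell(\cdot, a_{\wh P_n})$ against $P - \wh P_n$ directly — and it is this single integral bound that simultaneously governs both the empirical-to-population entropy gap and the gap between the population risk of the ERM hypothesis and the minimum empirical risk.
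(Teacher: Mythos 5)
Your proposal is correct and follows essentially the same route as the paper: the same two-term decomposition of $R_{\rm excess}$, the identification of the first summand with the quantity $\E_P[\ell(Z,a_Q)] - \E_Q[\ell(Z,a_Q)]$ that Lemma~\ref{lm:H_diff_E} uses (with $Q = \wh P_n$), and the observation that any $B$ derived through that lemma therefore controls both summands. Your explicit remark that the conclusion fails for an arbitrary bound on the entropy difference is exactly the point the paper encodes in the phrase ``obtained based on Lemma~\ref{lm:H_diff_E}.''
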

	\noindent 
	
	We give three examples for the application of Lemma~\ref{lm:ERM_Rexcess}, using different upper bounds for the entropy difference derived in Section~\ref{sec:H_diff}.

	\subsection{Finite sample space}
	When the sample space $\sZ$ has a finite number of elements, we can make use of the entropy difference upper bounds in terms of total variation distance (Theorem~\ref{prop:cont_TV}) and KL divergence (Corollary~\ref{co:cont_KL_bdd}). 
	The resulting upper bounds for the excess risk hold virtually for \emph{any} hypothesis space $\sA$.
	For simplicity, we consider the case where the loss function takes values in $[0,1]$.
	\begin{theorem}\label{th:ERM_TV}
		If $\sZ$ is finite and $\ell(z,a)\in [0,1]$ for all $(z,a)\in\sZ\times\sA$, then for any $\sA$,
		\begin{align}\label{eq:Rexcess_ub_TV}
			\E[R_{\rm excess}] \le \sqrt{\frac{|\sZ|}{n}} ;
		\end{align}
		and for any $\eps > 0$,
		\begin{align}\label{eq:Rexcess_ub_KL}
			\PP[R_{\rm excess} > \eps] \le \exp\Big\{-n\Big(\frac{\eps^2}{2} - \frac{|\sZ|\log(n+1)}{n} \Big) \Big\} .
		\end{align}
	\end{theorem}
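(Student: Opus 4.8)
The plan is to reduce everything to the behavior of the empirical distribution via Lemma~\ref{lm:ERM_Rexcess}, which bounds $R_{\rm excess}$ by twice any almost-sure upper bound on $|H_{\sA,\ell}(\wh P_n)-H_{\sA,\ell}(P)|$ derived from Lemma~\ref{lm:H_diff_E}. Because $\ell(z,a)\in[0,1]$, the ranges $\beta_Q-\alpha_Q$ and $\beta_P-\alpha_P$ in Theorem~\ref{prop:cont_TV} and Corollary~\ref{co:cont_KL_bdd} are all at most $1$, so both one-sided bounds hold simultaneously for an arbitrary $\sA$: Theorem~\ref{prop:cont_TV} yields $|H_{\sA,\ell}(\wh P_n)-H_{\sA,\ell}(P)|\le d_{\rm TV}(\wh P_n,P)$, and Corollary~\ref{co:cont_KL_bdd} yields $|H_{\sA,\ell}(\wh P_n)-H_{\sA,\ell}(P)|\le\sqrt{\tfrac12 D(\wh P_n\|P)}$. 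Combining with Lemma~\ref{lm:ERM_Rexcess} gives, almost surely, $R_{\rm excess}\le 2d_{\rm TV}(\wh P_n,P)$ and $R_{\rm excess}\le\sqrt{2D(\wh P_n\|P)}$, after which the two claimed bounds reduce to controlling, respectively, the mean of the empirical total variation and the upper tail of the empirical KL divergence.

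For \eqref{eq:Rexcess_ub_TV} I would take expectations in $R_{\rm excess}\le 2d_{\rm TV}(\wh P_n,P)$ and evaluate $\E[d_{\rm TV}(\wh P_n,P)]=\tfrac12\sum_{z\in\sZ}\E|\wh P_n(z)-P(z)|$ coordinatewise. Since $n\wh P_n(z)$ is $\mathrm{Binomial}(n,P(z))$, Jensen's inequality bounds each term by $\sqrt{\Var[\wh P_n(z)]}=\sqrt{P(z)(1-P(z))/n}\le\sqrt{P(z)/n}$; summing and applying Cauchy--Schwarz through $\sum_{z\in\sZ}\sqrt{P(z)}\le\sqrt{|\sZ|}$ gives $\E[d_{\rm TV}(\wh P_n,P)]\le\tfrac12\sqrt{|\sZ|/n}$, hence $\E[R_{\rm excess}]\le\sqrt{|\sZ|/n}$.

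For \eqref{eq:Rexcess_ub_KL} I would use $R_{\rm excess}\le\sqrt{2D(\wh P_n\|P)}$ to rewrite $\{R_{\rm excess}>\eps\}\subseteq\{D(\wh P_n\|P)>\eps^2/2\}$ and then invoke the method of types. Viewing $\wh P_n$ as the type of the i.i.d.\ sample $Z^n$, every type $Q$ satisfies $\PP[\wh P_n=Q]\le e^{-nD(Q\|P)}$, since each length-$n$ sequence of type $Q$ has probability $e^{-n(H(Q)+D(Q\|P))}$ while the type class has at most $e^{nH(Q)}$ members, and the number of types is at most $(n+1)^{|\sZ|}$. Summing this estimate over the types with $D(Q\|P)>\eps^2/2$ gives $\PP[D(\wh P_n\|P)>\eps^2/2]\le(n+1)^{|\sZ|}e^{-n\eps^2/2}$, which upon writing $(n+1)^{|\sZ|}=e^{|\sZ|\log(n+1)}$ is exactly the stated tail. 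The main obstacle is this last concentration step: the expectation bound is a one-line second-moment computation, whereas controlling the empirical KL tail uniformly in $\sA$ genuinely needs the combinatorial type-counting argument, so the cleanest path is to cite the standard Sanov-type inequality rather than attempt a direct subgaussian derivation.
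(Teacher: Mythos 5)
Your proposal is correct and follows essentially the same route as the paper: the same reduction via Lemma~\ref{lm:ERM_Rexcess} combined with Theorem~\ref{prop:cont_TV} for \eqref{eq:Rexcess_ub_TV} and with Corollary~\ref{co:cont_KL_bdd} for \eqref{eq:Rexcess_ub_KL}. The only difference is that the paper simply cites the two concentration facts ($\E[2d_{\rm TV}(\wh P_n,P)]\le\sqrt{|\sZ|/n}$ and the Sanov-type tail bound for $D(\wh P_n\|P)$), whereas you rederive them via the coordinatewise second-moment computation and the method of types; both derivations are standard and correct.
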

	\begin{proof}
		The upper bound in \eqref{eq:Rexcess_ub_TV} is a consequence of Lemma~\ref{lm:ERM_Rexcess}, Theorem~\ref{prop:cont_TV}, and the fact that $\E[2 d_{\rm TV}(\wh P_n,P)]\le \sqrt{{|\sZ|}/{n}}$ \cite[Lemma~5]{Berend_TV13}.
		The upper bound in \eqref{eq:Rexcess_ub_KL} is a consequence of Lemma~\ref{lm:ERM_Rexcess}, Corollary~\ref{co:cont_KL_bdd}, and the fact that $\PP[D(\wh P_n \| P) > \eps]\le \exp\{-n(\eps - \frac{|\sZ|\log(n+1)}{n} ) \}$ \cite[Theorem~11.2.1]{Cover_book}.
	\end{proof}
	\noindent{\bf Remark.}
	The upper bounds in Theorem~\ref{th:ERM_TV} can be extended to the case where $\sZ$ is countably infinite, using the results in \cite[Lemma~8 and Theorem~3]{conv_emp_TV12}. 
	In addition, via Pinsker's inequality, the upper bound in \eqref{eq:Rexcess_ub_KL} can be used to bound $\PP[d_{\rm TV}(\wh P_n , P) > \eps]$, which complements the results in \cite[Theorem~3]{conv_emp_TV12} and \cite[Lemma~3]{Devroye1983} on the convergence of empirical distribution in the total variation distance.
	
	To evaluate the upper bounds in Theorem~\ref{th:ERM_TV}, consider the problem of binary classification, where $\sZ = \sX \times \sY$ with $\sY=\{0,1\}$.
	Let $\sA$ be the space of \emph{all mappings} from $\sX$ to $\sY$, and $\ell(z,a) = \I\{y \neq a(x)\}$.
	From \eqref{eq:Rexcess_ub_TV}, we get an upper bound for the expected excess risk of the ERM algorithm,
	\begin{align}\label{eq:excess_binary}
		\E[R_{\rm excess}] \le \sqrt{\frac{2|\sX|}{n}} .
	\end{align}
	This bound is even better in prefactor than the bound
	$
	\E[R_{\rm excess}] \le 8\sqrt{\frac{|\sX|\log 2}{n}} 
	$
	given by the popular Rademacher complexity analysis, 
	which is a consequence of the fact that the cardinality of the hypothesis class $\sA$ is $2^{|\sX|}$ when $\sX$ is finite \cite{ShBe_book14}.

	\subsection{Lipschitz-continuous loss function}
	When the loss function is Lipschitz-continuous in $z$ for all $a$, where $z$ can be continuous-valued, we can use the bound in Theorem~\ref{prop:cont_Wass_dual} in terms of the Wasserstein distance to bound the excess risk.
	\begin{theorem}\label{prop:excess_Lip}
		Let $\sZ = \sX\times\sY$ where $\sY=[-b,b]$ and $\sX \subset \R^p$ with $p > 1$.
		Suppose that $\E[\|X\|^2]$ is finite under the unknown distribution.
		Consider an action space $\sA\subset\R^k$ with an arbitrary $k$, and a function $f:\sX\times\sA\rightarrow [-b,b]$ such that $f(\cdot,a)$ is $\rho_f$-Lipschitz in $x$ with respect to the Euclidean distance for all $a\in\sA$. 
		Then for the loss function $\ell_1(z,a) = |y - f(x,a)|$,
		\begin{align}
			\E[R_{\rm excess}] \le c (\rho_f \vee 1) {\E\|Z\|}  n^{-1/(p+1)} ;
		\end{align}
		while for the loss function $\ell_2(z,a) = (y - f(x,a))^2$,
		\begin{align}
			\E[R_{\rm excess}] \le 4 c b (\rho_f \vee 1) {\E\|Z\|} n^{-1/(p+1)} ,
		\end{align}
		where $c$ is an absolute constant. 
	\end{theorem}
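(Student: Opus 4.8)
The plan is to reduce the excess-risk bound to the rate of convergence of the empirical measure in the Wasserstein distance, by chaining Lemma~\ref{lm:ERM_Rexcess} with the Wasserstein entropy-difference bound of Theorem~\ref{prop:cont_Wass_dual}. The key point is that the hypothesis space $\sA$ enters only through the loss, and that the stated regularity of $f$ renders both $\ell_1(\cdot,a)$ and $\ell_2(\cdot,a)$ Lipschitz in $z=(x,y)$ \emph{uniformly} over $a\in\sA$. Consequently Theorem~\ref{prop:cont_Wass_dual}, applied with the optimal actions $a_P$ and $a_{\wh P_n}$ (which exist under the standing assumption that the infimum defining the entropy is attained, and are particular elements of $\sA$), gives in both directions the almost-sure bound $|H_{\sA,\ell_i}(\wh P_n)-H_{\sA,\ell_i}(P)|\le \rho_i\,\mathcal W_d(\wh P_n,P)$, where $d$ is the Euclidean metric on $\sZ\subset\R^{p+1}$ and $\rho_i$ is the uniform Lipschitz constant of $\ell_i$.

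First I would compute $\rho_1$ and $\rho_2$. For $\ell_1$, the reverse triangle inequality together with the Lipschitz property of $f(\cdot,a)$ gives, for any $z=(x,y)$ and $z'=(x',y')$,
\begin{align}
|\ell_1(z,a)-\ell_1(z',a)| &\le \big|(y-f(x,a))-(y'-f(x',a))\big| \nonumber \\
&\le |y-y'|+\rho_f\|x-x'\| \le \sqrt{2}\,(\rho_f\vee 1)\,d(z,z') ,
\end{align}
so $\rho_1=\sqrt{2}\,(\rho_f\vee1)$. For $\ell_2$ I would factor the difference of squares as $(u+u')(u-u')$ with $u=y-f(x,a)$ and $u'=y'-f(x',a)$; since $y,f(\cdot,a)\in[-b,b]$ we have $|u|,|u'|\le 2b$, whence $|u+u'|\le 4b$, while $|u-u'|$ is controlled exactly as in the $\ell_1$ estimate. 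This yields $\rho_2=4b\,\rho_1=4\sqrt{2}\,b\,(\rho_f\vee1)$.

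Then I would invoke Lemma~\ref{lm:ERM_Rexcess}: since the Wasserstein bound above is itself derived from Lemma~\ref{lm:H_diff_E}, it gives $R_{\rm excess}\le 2\rho_i\,\mathcal W_d(\wh P_n,P)$ almost surely, so that $\E[R_{\rm excess}]\le 2\rho_i\,\E[\mathcal W_d(\wh P_n,P)]$. It remains to bound the expected empirical Wasserstein distance, and here I would cite the standard rate for the $1$-Wasserstein distance on $\R^{p+1}$ (the Fournier--Guillin bound): when $p+1>2$ and the distribution has a finite second moment, $\E[\mathcal W_d(\wh P_n,P)]\le C\,\E\|Z\|\,n^{-1/(p+1)}$ for an absolute constant $C$. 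Absorbing the numerical factors into $c=2\sqrt{2}\,C$ and using $\rho_2=4b\,\rho_1$ then produces the claimed prefactors $c\,(\rho_f\vee1)$ and $4cb\,(\rho_f\vee1)$. The main obstacle is precisely this last step: the sharp $n^{-1/(p+1)}$ rate holds without a logarithmic correction only because the hypothesis $p>1$ places the dimension $d=p+1$ in the regime $d\ge 3$, and the assumed finiteness of $\E\|X\|^2$ (hence of $\E\|Z\|^2$, using boundedness of $Y$) is what supplies the moment condition the rate theorem requires; matching its moment prefactor to the first-moment quantity $\E\|Z\|$ stated here is the one point that needs care.
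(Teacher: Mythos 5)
Your proposal is correct and follows essentially the same route as the paper's proof: establish the uniform Lipschitz constants $\sqrt{2}(\rho_f\vee 1)$ for $\ell_1$ and $4\sqrt{2}\,b(\rho_f\vee 1)$ for $\ell_2$ in $z=(x,y)$, combine Lemma~\ref{lm:ERM_Rexcess} with Theorem~\ref{prop:cont_Wass_dual} to get $R_{\rm excess}\le 2\rho_i\,\mathcal W_{\text{\tiny $\|\!\cdot\!\|$}}(\wh P_n,P)$ almost surely, and then invoke the empirical Wasserstein convergence rate $\E[\mathcal W_{\text{\tiny $\|\!\cdot\!\|$}}(\wh P_n,P)]\le c'\,\E\|Z\|\,n^{-1/(p+1)}$ for distributions on $\R^{p+1}$ with $p>1$. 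Your added remarks on why $p>1$ avoids the logarithmic correction and on the moment condition are accurate refinements of the same argument, not a different approach.
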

	\begin{proof}
		We first show that the Lipschitz continuity of $f(\cdot,a)$ in $x$ can be translated to the Lipschitz continuity of $|y-f(x,a)|$ in $z=(x,y)$.
		For any $a\in\sA$, and any $z,z'\in\sZ$,
		\begin{align}
			\big| |y-f(x,a)| - |y'-f(x',a)| \big| 
			&\le \big|y-f(x,a) - y' + f(x',a)\big| \\
			&\le |f(x,a) - f(x',a)| + |y-y'| \label{eq:LipWass1} \\
			&\le \rho_f\|x-x'\| + |y-y'| \\
			&\le \sqrt{2} (\rho_f \vee 1)\|z-z'\| , \label{eq:LipWass3}
		\end{align}
		where in \eqref{eq:LipWass3} we used the fact that $u+v\le \sqrt{2u^2 + 2v^2}$ for $u,v\in\R$.
		It implies that $\ell_1(z,a)=|y-f(x,a)|$ is $\sqrt{2} (\rho_f \vee 1)$-Lipschitz in $z=(x,y)$ for all $a\in\sA$.
		Since $|y-f(x,a)|\in[0,2b]$, it further implies that $\ell_2(z,a)=(y-f(x,a))^2$ is $4\sqrt{2}b (\rho_f \vee 1)$-Lipschitz in $z$ for all $a\in\sA$.
		It follows from Lemma~\ref{lm:ERM_Rexcess} and Theorem~\ref{prop:cont_Wass_dual} that for $\ell_1(z,a)=|y-f(x,a)|$,
		\begin{align}
			R_{\rm excess} \le 2\sqrt{2}(\rho_f \vee 1) W_{\text{\tiny $\|\!\cdot\!\|$}}(\wh P_n, P) ;
		\end{align}
		while for $\ell_2(z,a)=(y-f(x,a))^2$,
		\begin{align}
			R_{\rm excess} \le 8\sqrt{2}b(\rho_f \vee 1) W_{\text{\tiny $\|\!\cdot\!\|$}}(\wh P_n, P) .
		\end{align}
		The proof is completed with a result on the Wasserstein convergence of the empirical distribution \cite[Theorem~3.1]{lei2020}\cite[Proposition~10]{weed2019}, which states that for a distribution $P$ on $\sZ\subset\R^{p+1}$ with $p>1$,
		\begin{align}
			\E[ W_{\text{\tiny $\|\!\cdot\!\|$}}(\wh P_n, P)] \le c' \E[\|Z\|] n^{-1/{(p+1)}} ,
		\end{align}
		where $c'$ is some absolute constant. 
	\end{proof}
	We see that the upper bound in Theorem~\ref{prop:excess_Lip} does not depend on the dimension of $\sA$, and converges to zero as $n\rightarrow\infty$ for any fixed dimension $p$ of $\sX$; however, the rate of convergence suffers from the curse of dimensionality in $p$. An open question is whether there is a way to leverage the results in Section~\ref{sec:H_diff_P_ell} to bounding the excess risk in terms of statistical distances between the distributions of $\ell(Z,a_{\wh p_n})$ when $Z$ is drawn from $P$ and from $\wh P_n$.
	It may lead to tighter bounds when $f$ in Theorem~\ref{prop:excess_Lip} has additional regularities beyond being Lipschitz in $x$.
	This question is partially addressed by looking into a statistical distance that compares the expected loss under distributions $P$ and $\wh P_n$, but at a worst hypothesis in $\sA$, as discussed in the next subsection.

	\subsection{Learnability, typicality, and entropy continuity}\label{sec:excess_risk_freq_lA}
	The results in the two preceding subsections can be unified by considering the entropy difference bound via the $(\sA,\ell)$-semidistance defined in \eqref{eq:lA_dist}. We have
	\begin{align}
		d_{\sA,\ell}(\wh P_n,P) = \sup_{a\in\sA} \big| \E_{\wh P_n}[\ell(Z,a)] - \E_P[\ell(Z,a)] \big | ,
	\end{align}
	which is essentially the \emph{uniform deviation} of the empirical risk from the population risk with respect to $(\sA,\ell)$.
	It follows from Lemma~\ref{lm:ERM_Rexcess} and Theorem~\ref{prop:cont_dla} that
	\begin{align}\label{eq:Rexcess_dAl}
		R_{\rm excess} &\le 2d_{\sA,\ell}(\wh P_n,P) \quad{\rm a.s.}
	\end{align}
	This result recovers the classic upper bound on the excess risk of the ERM algorithm in terms of the uniform deviation \cite{DGLbook96}.
	
	The conditions on the convergence of the uniform deviation to zero, 
	\begin{align}\label{eq:UCEM}
		d_{\sA,\ell}(\wh P_n,P) \xrightarrow{\text{a.s.}} 0 \quad \text{as $n\rightarrow\infty$}
	\end{align}
	have been well-studied in the mathematical statistics and statistical learning theory literature as a form of uniform law of large numbers \cite{DGLbook96,ShBe_book14}.
	Recall that $d_{\sA,\ell}$ can also be defined with respect to the function class ${\mathcal L}_{\sA,\ell} = \{\ell(\cdot,a), a\in\sA\}$ induced by $(\sA,\ell)$ as shown in \eqref{eq:lA_dist_E}, namely
	\begin{align}
		d_{\sA,\ell}(\wh P_n,P) = \sup_{f\in{\mathcal L}_{\sA,\ell}} \big| \E_{\wh P_n}[f(Z)] - \E_P[f(Z)] \big | .
	\end{align}
	The function class ${\mathcal L}_{\sA,\ell}$ is called a \emph{Glivenko-Cantelli (GC) class} if \eqref{eq:UCEM} holds for every distributon $P$ on $\sZ$, c.f. \cite{TypicalRaginsky}. 
	Further, the hypothesis space $\sA$ is said to be \emph{learnable} with respect to $\ell$ if ${\mathcal L}_{\sA,\ell}$ is a GC class.
	Theorem~\ref{th:ERM_TV} and Theorem~\ref{prop:excess_Lip} each involves a special instance of the GC class that has virtually no restriction on $\sA$: one with all measurable functions $\sZ\rightarrow [0,1]$ and a \emph{finite} $\sZ$, such that 
	$$
	d_{\sA,\ell}(\wh P_n,P)=d_{\rm TV}(\wh P_n,P)\xrightarrow{\text{a.s.}} 0 ;
	$$
	and the other with all bounded Lipschitz-continuous functions $\R^{p+1} \rightarrow [-b,b]$ with a common Lipschitz constant, such that 
	$$
	d_{\sA,\ell}(\wh P_n,P) \propto W_{\text{\tiny $\|\!\cdot\!\|$}}(\wh P_n, P)\xrightarrow{\text{a.s.}} 0  .
	$$
	In general, a GC class and the rate of convergence in \eqref{eq:UCEM} rely on the properties of both $\sA$ and $\ell$.
	A well-known example of such a GC class is the class of indicator functions of a special collection of subsets of $\sZ$ which has a finite Vapnik-Chervonenkis (VC) dimension \cite{DGLbook96}. For this class, with $\ell$ being the zero-one loss, and $\sA$ being the collection of subsets of $\sZ$ with a finite VC dimension $V(\sA)$, $\E[d_{\sA,\ell}(\wh P_n,P)]$ explicitly depends on $\sA$ through 
	\begin{align}
		\E[d_{\sA,\ell}(\wh P_n,P)] \sim O\big(\sqrt{{V(\sA)}/{n}}\big) .
	\end{align}
	
	Conceptually, given $\sA$ and $\ell$, we can also define the $(\sA,\ell)$-\emph{typical set} of elements in $\sZ^n$ according to $d_{\sA,\ell}(\wh P_n,P)$ as in \cite[Definition~4]{TypicalRaginsky},
	\begin{align}\label{eq:def_Aell_typical}
		{\mathcal T}_{\sA,\ell}(P,n,\eps) \deq \big\{z^n\in\sZ^n: d_{\sA,\ell}(\wh P_n,P) \le \eps\big\} , \quad \eps>0 .
	\end{align}
	In words, a dataset $z^n$ is $(\sA,\ell)$-typical if the empirical risks on it, uniformly for all hypotheses in $\sA$, are close to the corresponding population risks.
	As a consequence of Theorem~\ref{prop:cont_dla} in Section~\ref{sec:H_diff_dlA}, the minimum empirical risk on this typical set can closely approximate the minimum population risk, as $|H_{\sA,\ell}(\wh P_n) - H_{\sA,\ell}(P)|\le \eps$; moreover, from \eqref{eq:Rexcess_dAl}, the ERM algorithm with an input drawn from this typical set will output a near-optimal hypothesis, as $R_{\rm excess}\le 2\eps$.
	For example, when $\sA$ and $\ell$ are such that $\mathcal{L}_{\sA,\ell}$ is the set of measurable functions $\sZ\rightarrow [0,1]$, the $(\sA,\ell)$-typical set defined in \eqref{eq:def_Aell_typical} reduces to the one characterized by the total variation distance between $\wh P_n$ and $P$,
	\begin{align}
		{\mathcal T}_{\rm TV}(P,n,\eps) = \big\{z^n\in\sZ^n: d_{\rm TV}(\wh P_n,P) \le \eps\big\}
	\end{align}
	which is proposed and used in \cite{coordinate_capacity}. When $\sZ$ is finite, the above typical set is almost equivalent to the notion of \emph{strong typicality} commonly used in information theory \cite{CsiKor_book1st} \cite[(10.106)]{Cover_book} as shown in \cite{coordinate_capacity}, and will include almost all elements in $\sZ^n$ as $n\rightarrow\infty$.
	Theorem~\ref{th:ERM_TV} can thus be understood from the viewpoint of strong typitcality as well, in that eventually almost every sequence has an empirical distribution close to $P$.
	In general, the definition of ${\mathcal T}_{\sA,\ell}(P,n,\eps)$ applies to uncountably infinite $\sZ$ as well.
	We then have the following connection among typicality, entropy continuity, and learnability:
	if ${\mathcal L}_{\sA,\ell}$ is a GC class, then for any $\eps>0$, as $n\rightarrow \infty$,
	\begin{align}
		\PP\big[ {\mathcal T}_{\sA,\ell}(P,n,\eps)\big]\rightarrow 1 
	\end{align}
	by the definition in \eqref{eq:def_Aell_typical}, which implies that
	\begin{align}
		\PP\big[|H_{\sA,\ell}(\wh P_n)-H_{\sA,\ell}(P)|\le \eps\big]\rightarrow 1 
	\end{align}
	by Theorem~\ref{prop:cont_dla},
	which further implies that
	\begin{align}
		\PP\big[R_{\rm excess}\le 2\eps\big]\rightarrow 1
	\end{align}
	by Lemma~\ref{lm:ERM_Rexcess}. 
	The rate of convergence will depend on $\sA$ and $\ell$ in general.

	\section{Application to Bayesian learning}\label{sec:MER_Bayes}
	Another application of the results in Section~\ref{sec:H_diff} to statistical learning is the analysis of the minimum excess risk in Bayesian learning.
	This problem is formulated and studied in detail in \cite{MER19} using several different approaches. Here we give an overview of the analysis based on the entropy continuity presented in \cite[Section~4]{MER19}.
	\subsection{Minimum excess risk in Bayesian learning}
	As an alternative to the frequentist formulation of the learning problem, \emph{Bayesian learning} under a parametric generative model assumes that the data $Z^n=((X_1,Y_1),\ldots,(X_n,Y_n))$, with $Z_i\deq (X_i,Y_i)$, is generated from a member of a parametrized family of probabilistic models $\{P_{Z|w}, w\in\sW\}$, where the model parameter $W$ is an unknown random element in $\sW$ with a prior distribution $P_W$. With a fresh sample $Z=(X,Y)$, $X$ is observed, and the goal is to predict $Y$ based on $X$ and $Z^n$.
	Formally, the joint distribution of the model parameter, the dataset and the fresh sample is
	\begin{align}\label{eq:joint_dist}
		P_{W, Z^n, Z} = P_W  \Big(\prod\limits_{i=1}^n P_{Z_i|W}\Big)  P_{Z|W},
	\end{align}
	where $P_{Z_i|W} = P_{Z|W}$ for each $i$.
	Given an action space $\sA$ and a loss function $\ell : \sY \times \sA \rightarrow \R$, the goal of Bayesian learning can be phrased	as seeking a {\em decision rule} $\psi: \sX \times \sZ^n \rightarrow \sA$ to make the expected loss $\E[\ell(Y,\psi(X,Z^n))]$ small.
	In contrast to the frequentist learning, since the joint distribution $P_{Z^n,Z}$ is known, the search space here is all decision rules such that $\E[\ell(Y,\psi(X,Z^n))]$ is defined, i.e. all measurable functions $\sX \times \sZ^n \rightarrow \sA$, without being restricted to a hypothesis space.
	The minimum achievable expected loss is called the \emph{Bayes risk} in Bayesian learning:
	\begin{align}\label{eq:RB_def}
		H_\ell(Y|X,Z^n) = \inf_{\psi: \sX \times \sZ^n \rightarrow \sA} \E[\ell(Y, \psi(X,Z^n))] ,
	\end{align}
	which is essentially the generalized conditional entropy of $Y$ given $(X,Z^n)$ in view of the definition in \eqref{eq:cond_H_def}.
	As shown by a data processing inequality for the Bayes risk \cite[Lemma~1]{MER19}, $H_\ell(Y|X,Z^n)$ decreases as the data size $n$ increases.
	The fundamental limit of the Bayes risk can be defined as the minimum expected loss when the model parameter $W$ is known:
	\begin{align}\label{eq:RBW_def}
		H_\ell(Y|X,W) = \inf_{\Psi: \sX\times\sW \rightarrow \sA} \E[\ell(Y,\Psi(X,W))] .
	\end{align}
	The \emph{minimum excess risk} (MER) in Bayesian learning is defined as the gap between the Bayes risk and its fundamental limit, which is the minimum achievable excess risk among all decision rules:
	\begin{align}\label{eq:MER_def}
		{\rm MER}_\ell \deq H_\ell(Y|X,Z^n) - H_\ell(Y|X,W) .
	\end{align}
	\noindent The MER is an algorithm-independent quantity. Its value and rate of convergence quantify the difficulty of the \emph{learning} problem, which is due to the lack of knowledge of $W$. It can serve as a formal definition of the minimum \textit{epistemic uncertainty}, with $H_\ell(Y|X,W)$ serving as the definition of the \emph{aleatoric uncertainty}, which have been only empirically studied so far \cite{NIPS2017_7141,Hllermeier2019Aleatoric}.

	\subsection{Method of analysis based on entropy continuity}
	In what follows, we outline the idea of how the upper bounds on entropy difference derived in Section~\ref{sec:H_diff} can be used to upper-bound the MER. We consider the predictive modeling framework, a.k.a.\ probabilistic discriminative model, where $P_{Z|W} = P_{X|W}  K_{Y|X,W}$, with the probability transition kernel $K_{Y|X,W}$ directly describing the predictive model of the quantity of interest given the observation.
	First, we have the following lemma that bounds the deviation of the posterior predictive distribution $P_{Y|X,Z^n}$ from the true predictive model $K_{Y|X,W}$, which is a simple consequence of the convexity of the statistical distance under consideration.
	\begin{lemma}\label{lm:ub_f_div} Let $W'$ be a sample from the posterior distribution $P_{W|X,Z^n}$, such that $W$ and $W'$ are conditionally i.i.d.\ given $(X,Z^n)$.
		Then for any $f$-divergence or Wasserstein distance $D$,
		\begin{align}\label{eq:post_ub_f_div_E}
			\E[ D(P_{Y|X,Z^n} , K_{Y|X,W}) ] \le \E[ D(K_{Y|X,W'} , K_{Y|X,W}) ] 
		\end{align}
		where the expectations are taken over the conditioning variables according to the joint distribution of $(W,W',X,Z^n)$.
	\end{lemma}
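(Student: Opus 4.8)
The plan is to exploit the fact that the posterior predictive distribution is itself a \emph{posterior average} of the true predictive models, and then invoke convexity of the statistical distance in its first argument. Concretely, the first step is to establish the identity
\begin{align}
P_{Y|X=x,Z^n=z^n}(\cdot) = \E\big[ K_{Y|X=x,W'}(\cdot) \,\big|\, X=x,\, Z^n=z^n \big] ,
\end{align}
where $W'$ is drawn from the posterior $P_{W|X=x,Z^n=z^n}$. This follows from the predictive-modeling factorization $P_{Z|W}=P_{X|W}\otimes K_{Y|X,W}$ together with the conditional independence of the fresh sample $Z=(X,Y)$ from the dataset $Z^n$ given $W$, so that marginalizing $Y$ over the posterior of the parameter produces exactly the mixture of kernels $K_{Y|X=x,w}$ weighted by $P_{W|x,z^n}(\rd w)$.

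The second step is to record that every $f$-divergence and the Wasserstein distance are convex in the first argument when the second argument is held fixed. For an $f$-divergence this is the standard joint convexity of $(P,Q)\mapsto D_f(P\|Q)$; for $\mathcal W_d(\cdot,Q)$ it follows by mixing optimal couplings: if $\pi_i\in\Pi(P_i,Q)$ is optimal for $\mathcal W_d(P_i,Q)$, then $\lambda\pi_1+(1-\lambda)\pi_2\in\Pi(\lambda P_1+(1-\lambda)P_2,Q)$ and its transport cost is the corresponding convex combination, giving $\mathcal W_d(\lambda P_1+(1-\lambda)P_2,Q)\le \lambda \mathcal W_d(P_1,Q)+(1-\lambda)\mathcal W_d(P_2,Q)$. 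Passing from two-point mixtures to a general posterior average is Jensen's inequality for the convex functional $D(\cdot,Q)$.

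Combining the two, I would condition on $(X,Z^n,W)=(x,z^n,w)$ and apply Jensen's inequality to the random measure $K_{Y|x,W'}$, with $W'\sim P_{W|x,z^n}$ independent of the fixed true parameter $w$:
\begin{align}
D\big(P_{Y|x,z^n},\, K_{Y|x,w}\big)
= D\Big(\E_{W'}\big[K_{Y|x,W'}\big],\, K_{Y|x,w}\Big)
\le \E_{W'}\big[ D\big(K_{Y|x,W'},\, K_{Y|x,w}\big)\big].
\end{align}
Taking the expectation of both sides over the joint law of $(W,X,Z^n)$ and using that $W$ and $W'$ are conditionally i.i.d.\ given $(X,Z^n)$ (so the tower property merges the inner $W'$-expectation and the outer $(W,X,Z^n)$-expectation into a single expectation over $(W,W',X,Z^n)$) yields exactly \eqref{eq:post_ub_f_div_E}.

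The only genuine care-points are the convexity claim for the Wasserstein distance and the measurability needed to apply Jensen's inequality to a distance-valued functional of a random measure; the $f$-divergence case is immediate from joint convexity, and the Wasserstein case is handled by the coupling-mixing argument above, so I do not expect any substantive obstacle beyond these routine verifications.
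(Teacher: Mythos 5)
Your proposal is correct and follows exactly the route the paper intends: the paper states that the lemma ``is a simple consequence of the convexity of the statistical distance under consideration,'' and your argument---writing $P_{Y|X,Z^n}$ as the posterior mixture $\E_{W'}[K_{Y|X,W'}]$, invoking convexity of $D(\cdot,Q)$ in the first argument (joint convexity for $f$-divergences, coupling-mixing for Wasserstein), applying Jensen's inequality, and then merging expectations via the conditional i.i.d.\ structure of $(W,W')$ given $(X,Z^n)$---is precisely that argument spelled out. No gaps; your two flagged care-points are indeed the only verifications needed and you handle them correctly.
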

	\noindent 
	
	The main utility of Lemma~\ref{lm:ub_f_div} is that, whenever $D(K_{Y|x,w'} , K_{Y|x,w})$ can be upper-bounded in terms of $\|w'-w\|^2$, we can invoke the fact that
	\begin{align}\label{eq:2MMSE}
		\E[\| W'-W \|^2] = 2 H_2(W|X,Z^n) 
	\end{align}
	as a consequence of the orthogonality principle in the MMSE estimation \cite{Liu17,Bhatt18,BH_RPbook}, so that the expected deviation $\E[ D(P_{Y|X,Z^n} , K_{Y|X,W}) ]$ can be bounded in terms of $H_2(W|X,Z^n)$, the MMSE of estimating $W$ from $(X,Z^n)$.
	Lemma~\ref{lm:ub_f_div} and \eqref{eq:2MMSE} give us a route to bounding the MER in terms of $H_2(W|X,Z^n)$, provided we can bound the entropy difference in \eqref{eq:MER_def} in terms of $D(P_{Y|X,Z^n} , K_{Y|X,W})$. The latter problem is precisely the subject of Section~\ref{sec:H_diff}.
	
	\subsection{Example: Bayesian logistic regression with zero-one loss}
	We give an example where the results in Section~\ref{sec:H_diff} can be applied to the analysis of Bayesian logistic regression with zero-one loss.
	Bayesian logistic regression is an instance under the predictive modeling framework, where $\sY = \{0,1\}$, $W\in\R^d$ is assumed to be independent of $X$, and the predictive model is specified by 
	$
	K_{Y|x,w}(1) = \sigma( w^\top \phi(x)) 
	$, 
	with $\sigma(a) \deq 1/(1+e^{-a})$, $a\in\R$, being the logistic sigmoid function, and $\phi(x) \in \R^d$ being the feature vector of the observation.
	
	For the zero-one loss, whenever $\sY$ is discrete, we have
	\begin{align}
		{\rm MER}_{01} &= \E[\max\nolimits_{y\in\sY}K_{Y|X,W}(y)] - \E[\max\nolimits_{y\in\sY}P_{Y|X,Z^n}(y)]  \\
		&= \int \big( \max\nolimits_{y\in\sY}K_{Y|x,w}(y) - \max\nolimits_{y\in\sY}P_{Y|x,z^n}(y) \big) P({\rm d}w, {\rm d}x, {\rm d}z^n) \\
		&\le  \int d_{\rm TV}(K_{Y|x,z^n}, P_{Y|x,w} ) P({\rm d}w, {\rm d}x, {\rm d}z^n) \label{eq:pf_ub_R01_1} \\
		&\le  \E[d_{\rm TV}( K_{Y|X,W'}, K_{Y|X,W})] \label{eq:pf_ub_R01_2}
	\end{align}
	where \eqref{eq:pf_ub_R01_1} follows from Theorem~\ref{prop:cont_TV}, and \eqref{eq:pf_ub_R01_2} follows from Lemma~\ref{lm:ub_f_div}.
	With the predictive model specified above, as $\| \nabla_w \sigma(w^\top \phi(x)) \| \le \|\phi(x)\|/4$, we know that $ \sigma(w^\top \phi(x)) $ is $ \|\phi(x)\|/4 $-Lipschitz in $w$, hence
	\begin{align}
		d_{\rm TV}(K_{Y|x,w'}, K_{Y|x,w}) 
		= \big| \sigma(w'^\top \phi(x)) -  \sigma(w^\top \phi(x)) \big| 
		\le \frac{1}{4} \|\phi(x)\| \|w' - w\| \label{eq:TV_logistic} . 
	\end{align}
Consequently, the MER with respect to zero-one loss satisfies
		\begin{align}
			{\rm MER}_{01} 
			&\le \E[d_{\rm TV}( K_{Y|X,W'}, K_{Y|X,W})] \\
			&\le \frac{1}{4} \E\big[\|\phi(X)\| \|W' - W\|\big] \\
			&\le \frac{1}{4}  \E[\|\phi(X)\|] \sqrt{\E\big[\|W'-W\|^2\big] } \\
			&= \frac{1}{4} \E[\|\phi(X)\|] \sqrt{ 2 H_2(W|Z^n) } \label{eq:MER01_logistic} 
		\end{align}
		where the last step is due to \eqref{eq:2MMSE} and the assumption that $W$ and $X$ are independent.
		
	This result explicitly shows that the MER in logistic regression depends on how well we can estimate the model parameters from data, as it is dominated by $H_2(W|Z^n)$, the MMSE of estimating $W$ from $Z^n$.
	A closed-form expression for this MMSE may not exist; nevertheless, any upper bound on it that is nonasymptotic in $n$ will translate to a nonasymptotic upper bound on the MER. 
	Moreover, this result explicitly shows how the model uncertainty due to the estimation error of the model parameters translates to the MER under the zero-one loss, which represents the minimum epistemic uncertainty, and how it then contributes to the minimum overall prediction uncertainty, which is the sum of the MER and the aleatoric uncertainty $\E[\min\{\sigma( W^\top \phi(X)), 1 - \sigma( W^\top \phi(X))\}]$.
	It thus provides a theoretical guidance on \emph{uncertainty quantification} in Bayesian learning, which is an increasingly important direction of research with wide range of applications.

	\section{Application to inference and learning with distribution shift}\label{sec:App_mismatch}
	Based on Lemma~\ref{lm:H_diff_E}, we have developed a number of approaches to bounding the difference of the generalized \emph{unconditional} entropy in Section~\ref{sec:H_diff}. We also studied the applications of the results in both frequentist learning and Bayesian learning in the two preceding sections. The idea behind Lemma~\ref{lm:H_diff_E} can be extended to bounding the difference of the generalized \emph{conditional} entropy defined in \eqref{eq:cond_H_def}.
	In this section, we work out this extension to derive performance bounds for Bayes decision making under a mismatched distribution. The results can be applied to analyzing the excess risk in learning by first projecting the empirical distribution to a predefined family of distributions and then using the projection as a surrogate of the data-generating distribution for decision making.

	\subsection{Bounds on conditional entropy difference}
	Consider the Bayes decision-making problem under which the generalized conditional entropy is defined as in \eqref{eq:cond_H_def}.
	Let $P=P_X P_{Y|X}$ and $Q=Q_X Q_{Y|X}$ be two joint distributions on $\sX\times\sY$.
	Given an action space $\sA$ and a loss function $\ell:\sY\times\sA\rightarrow\R$, let $\psi_P:\sX\rightarrow\sA$ and $\psi_Q:\sX\rightarrow\sA$ be the Bayes decision rules with respect to $(\sA,\ell)$ under $P$ and $Q$ respectively, such that
	$
	H_{\ell}(P_{Y|X} | P_X ) = \E_{P}[\ell(Y,\psi_P(X))] 
	$
	and
	$
	H_{\ell}(Q_{Y|X} | Q_X ) = \E_{Q}[\ell(Y,\psi_Q(X))] . 
	$
	Note that $\psi_P(x)$ and $\psi_Q(x)$ are the optimal actions that achieve the generalized unconditional entropy of $P_{Y|X=x}$ and $Q_{Y|X=x}$ respectively.
	Then, in the same spirit of Lemma~\ref{lm:H_diff_E}, we have the following result for the difference between generalized conditional entropy.
	\begin{lemma}\label{lm:cond_H_diff_E}
		Let $P=P_X P_{Y|X}$ and $Q=Q_X Q_{Y|X}$ be two joint distributions on $\sX\times\sY$. Then the difference between the generalized conditional entropy with respect to $(\sA,\ell)$ under $P$ and $Q$ satisfy
		\begin{align}
			H_{\ell}(P_{Y|X} | P_X ) - H_{\ell}(Q_{Y|X} | Q_X )
			&\le \E_P[\ell(Y,\psi_Q(X))] - \E_Q [\ell(Y,\psi_Q(X))] \label{eq:cond_H_diff_E+}
		\end{align}
		and
		\begin{align}
			H_{\ell}(Q_{Y|X} | Q_X ) - H_{\ell}(P_{Y|X} | P_X )
			&\le \E_Q[\ell(Y,\psi_P(X))] - \E_P [\ell(Y,\psi_P(X))] . \label{eq:cond_H_diff_E-}
		\end{align}
	\end{lemma}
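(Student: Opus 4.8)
The plan is to repeat the argument behind Lemma~\ref{lm:H_diff_E}, but now at the level of decision rules rather than individual actions, exploiting the variational representation \eqref{eq:cond_H_def} of the generalized conditional entropy as an infimum over all admissible $\psi:\sX\rightarrow\sA$.

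First I would establish \eqref{eq:cond_H_diff_E+}. By construction $\psi_P$ achieves the infimum defining $H_{\ell}(P_{Y|X}|P_X)$, so for \emph{any} competing decision rule — in particular the rule $\psi_Q$ that is optimal under $Q$ — we have the suboptimality bound
\begin{align}
	H_{\ell}(P_{Y|X}|P_X) = \E_P[\ell(Y,\psi_P(X))] \le \E_P[\ell(Y,\psi_Q(X))] .
\end{align}
Subtracting the defining identity $H_{\ell}(Q_{Y|X}|Q_X) = \E_Q[\ell(Y,\psi_Q(X))]$ from both sides immediately yields \eqref{eq:cond_H_diff_E+}. The companion inequality \eqref{eq:cond_H_diff_E-} then follows by exchanging the roles of $P$ and $Q$ throughout, now using that $\psi_Q$ is optimal under $Q$ while $\psi_P$ serves only as an admissible competitor, so that $H_{\ell}(Q_{Y|X}|Q_X) \le \E_Q[\ell(Y,\psi_P(X))]$.

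The only point requiring care is admissibility: to feed $\psi_Q$ into the infimum defining $H_{\ell}(P_{Y|X}|P_X)$ one needs $\E_P[\ell(Y,\psi_Q(X))]$ to be well-defined, and symmetrically $\E_Q[\ell(Y,\psi_P(X))]$ for $\psi_P$ under $Q$; but this is precisely the class of rules over which \eqref{eq:cond_H_def} is taken, so no additional hypothesis is introduced. I expect no substantive obstacle here — the entire content is the infimum characterization, with the cross-terms $\E_P[\ell(Y,\psi_Q(X))]$ and $\E_Q[\ell(Y,\psi_P(X))]$ being exactly what survives once the optimal-rule contributions cancel, mirroring the way $\E_P[\ell(Z,a_Q)]$ appears in the upper bound of \eqref{eq:H_diff_lbub}. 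The structural reason the argument goes through unchanged is that the conditional entropy, like the unconditional one, admits a variational representation as a pointwise infimum, and the only property used is that substituting a suboptimal minimizer can only increase the objective.
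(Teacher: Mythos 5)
Your proof is correct and is exactly the argument the paper intends: the paper states this lemma "in the same spirit of Lemma~\ref{lm:H_diff_E}" without further detail, and your derivation — using the optimality of $\psi_P$ under $P$ and $\psi_Q$ under $Q$ together with the admissibility of each rule as a competitor in the other infimum — is the same variational/suboptimality argument. Your remark on the well-definedness of the cross-terms $\E_P[\ell(Y,\psi_Q(X))]$ and $\E_Q[\ell(Y,\psi_P(X))]$ is a reasonable point of care that the paper leaves implicit.
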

	With Lemma~\ref{lm:cond_H_diff_E}, all the results developed in Sections~\ref{sec:H_diff_TV} to \ref{sec:H_diff_dlA} on the entropy difference can be extended to bounds for the conditional entropy difference.
	For example, the results in Sections~\ref{sec:H_diff_TV}, \ref{sec:H_diff_KL} and \ref{sec:H_diff_Chi2} can be extended by replacing $a_Q$ and $a_P$ by $\psi_Q(X)$ and $\psi_P(X)$ respectively, in both the conditions and the bounds, and by replacing the statistical distances between $P$ and $Q$ by distances between $P_{X,Y}$ and $Q_{X,Y}$.
	In view of Theorem~\ref{th:D_PQ_ell} in Section~\ref{sec:H_diff_P_ell}, the statistical distances between $P$ and $Q$ can even be replaced by distances between $P_{\ell(Y,\psi_Q(X))}$ and $Q_{\ell(Y,\psi_Q(X))}$, or between $P_{\ell(Y,\psi_P(X))}$ and $Q_{\ell(Y,\psi_P(X))}$.
	In view of the results in Section~\ref{sec:H_diff_Wass}, we can also bound the conditional entropy difference by the Wasserstein distance between $P_{X,Y}$ and $Q_{X,Y}$ if $\ell(Y,\psi_Q(X))$ or $\ell(Y,\psi_P(X))$ is Lipschitz in $(X,Y)$.
	Moreover, we can define an $(\sA,\ell)$-semidistance between $P_{X,Y}$ and $Q_{X,Y}$ as 
	\begin{align}
		d_{\sA,\ell}(P_{X,Y},Q_{X,Y}) \deq \sup_{\psi:\sX\,\rightarrow\sA} \big| \E_P[\ell(Y,\psi(X))] - \E_Q[\ell(Y,\psi(X))] \big| ,
	\end{align}
	and use it to bound the conditional entropy difference, similar to the results in Section~\ref{sec:H_diff_dlA}.
	
	As an illustrative example, suppose the loss function $\ell(y,a)\in [0,1]$ for all $(y,a)\in\sY\times\sA$. Then
	\begin{align}
		H_{\ell}(P_{Y|X} | P_X ) - H_{\ell}(Q_{Y|X} | Q_X )
		&\le \E_P[\ell(Y,\psi_Q(X))] - H_{\ell}(Q_{Y|X} | Q_X )  \label{eq:cond_H_diff_1} \\
		&\le \sqrt{\frac{1}{2}  D(P_{X,Y} \| Q_{X,Y} )}  \label{eq:cond_H_diff_2} \\
		&= \sqrt{\frac{1}{2}  \big(D(P_X \| Q_X)+D(P_{Y|X} \| Q_{Y|X} | P_X) \big)} \label{eq:cond_H_diff_3},
	\end{align}
	where \eqref{eq:cond_H_diff_1} is due to Lemma~\ref{lm:cond_H_diff_E}; \eqref{eq:cond_H_diff_2} is due to \eqref{eq:E_diff_KL_subG}; and \eqref{eq:cond_H_diff_3} follows from the chain rule of KL divergence.
	Not only serving as an upper bound for the conditional entropy difference, the result also implies that when both $D(P_X \| Q_X)$ and $D(P_{Y|X} \| Q_{Y|X} | P_X)$ are small, $H_{\ell}(Q_{Y|X} | Q_X )$ can closely approximate $\E_P[\ell(Y,\psi_Q(X))]$.
	As mentioned above, other methods developed in Section~\ref{sec:H_diff} can be extended for this purpose as well, and may provide even tighter performance guarantees.
	
	In the special case where $P=P_X P_{Y|X}$ and $Q=P_X Q_{Y|X} $ share the same marginal distribution of $X$, the decision rule $\psi_Q$ defined above preserves its optimality under this new $Q$, and we have the following alternative bounds due to the representation of the conditional entropy via the unconditional entropy in \eqref{eq:cond_uncond} and Lemma~\ref{lm:H_diff_E}.
	\begin{lemma}\label{lm:cond_H_diff_PX}
		Under $P=P_X P_{Y|X}$ and $Q=P_X Q_{Y|X}$, let $P_x \deq P_{Y|X=x}$ and $Q_x \deq Q_{Y|X=x}$.
		Then
		\begin{align}\label{eq:H_cond_diff_PX+}
			H_{\ell}(P_{Y|X} | P_X ) - H_{\ell}(Q_{Y|X} | P_X )
			&\le \int_\sX \big( \E_{P_{x}}[\ell(Y,\psi_Q(x))] - \E_{Q_{x}} [\ell(Y,\psi_Q(x))]  \big) P_X({\rm d}x) 
		\end{align}
		and
		\begin{align}\label{eq:H_cond_diff_PX-}
			H_{\ell}(Q_{Y|X} | P_X ) - H_{\ell}(P_{Y|X} | P_X )
			&\le \int_\sX \big( \E_{Q_{x}} [\ell(Y,\psi_{P}(x))] - \E_{P_{x}}[\ell(Y,\psi_{P}(x))]  \big) P_X({\rm d}x) .
		\end{align}
	\end{lemma}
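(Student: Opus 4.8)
The plan is to reduce the conditional statement to a pointwise application of the unconditional Lemma~\ref{lm:H_diff_E}, exploiting the representation \eqref{eq:cond_uncond} of the conditional entropy as an average of unconditional entropies together with the fact that $P$ and $Q$ share the marginal $P_X$. First I would invoke \eqref{eq:cond_uncond} to write
\begin{align}
H_{\ell}(P_{Y|X} | P_X ) = \int_\sX H_\ell(P_x)\, P_X(\rd x), \qquad H_{\ell}(Q_{Y|X} | P_X ) = \int_\sX H_\ell(Q_x)\, P_X(\rd x),
\end{align}
where the two integrals are taken against the \emph{same} measure $P_X$ precisely because the two domains share this marginal. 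Subtracting and using linearity of the integral, the conditional entropy difference collapses to
\begin{align}
H_{\ell}(P_{Y|X} | P_X ) - H_{\ell}(Q_{Y|X} | P_X ) = \int_\sX \big( H_\ell(P_x) - H_\ell(Q_x) \big)\, P_X(\rd x).
\end{align}

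Next I would treat the integrand, for each fixed $x$, as an \emph{unconditional} entropy difference between the distributions $P_x$ and $Q_x$ on $\sY$. The key observation, already recorded when the Bayes rules were defined, is that $\psi_P(x)$ is the optimal action achieving $H_\ell(P_x)$ and $\psi_Q(x)$ is the optimal action achieving $H_\ell(Q_x)$; equivalently, $a_{P_x}=\psi_P(x)$ and $a_{Q_x}=\psi_Q(x)$ in the notation of Lemma~\ref{lm:H_diff_E}. Applying the upper bound of Lemma~\ref{lm:H_diff_E} to the pair $(P_x,Q_x)$ then gives, pointwise in $x$,
\begin{align}
H_\ell(P_x) - H_\ell(Q_x) \le \E_{P_x}[\ell(Y,\psi_Q(x))] - \E_{Q_x}[\ell(Y,\psi_Q(x))],
\end{align}
and integrating this inequality against $P_X$ yields \eqref{eq:H_cond_diff_PX+}. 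The reverse bound \eqref{eq:H_cond_diff_PX-} follows by the same argument after exchanging the roles of $P$ and $Q$ in Lemma~\ref{lm:H_diff_E}, which controls $H_\ell(Q_x)-H_\ell(P_x)$ by $\E_{Q_x}[\ell(Y,\psi_P(x))] - \E_{P_x}[\ell(Y,\psi_P(x))]$, after which one integrates against $P_X$.

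I do not expect a genuine obstacle here: the entire content is that, under a shared $X$-marginal, the conditional bound is literally the $P_X$-average of the unconditional bound of Lemma~\ref{lm:H_diff_E}. The only point that warrants a sentence of care is justifying that the Bayes decision rules realize the unconditional optima pointwise, so that $\psi_P(x)$ and $\psi_Q(x)$ are the correct optimal actions to substitute into Lemma~\ref{lm:H_diff_E} for each $x$; this is exactly what the representation \eqref{eq:cond_uncond} encodes and what the remark on preserved optimality asserts, and it is the very feature that fails in the general differing-marginal setting treated by Lemma~\ref{lm:cond_H_diff_E}.
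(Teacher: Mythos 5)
Your proposal is correct and is precisely the argument the paper intends: the sentence introducing Lemma~\ref{lm:cond_H_diff_PX} attributes it to the representation \eqref{eq:cond_uncond} together with a pointwise application of Lemma~\ref{lm:H_diff_E}, using that $\psi_P(x)$ and $\psi_Q(x)$ achieve the unconditional optima for $P_{Y|X=x}$ and $Q_{Y|X=x}$. Your added remark on why the shared marginal $P_X$ is essential (so both conditional entropies integrate against the same measure) is exactly the right point of care.
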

	\noindent
	With Lemma~\ref{lm:cond_H_diff_PX}, the results developed in Sections~\ref{sec:H_diff_TV} to \ref{sec:H_diff_dlA} on unconditional entropy difference can be directly applied to bounding the conditional entropy difference, by bounding the integrands in \eqref{eq:H_cond_diff_PX+} and \eqref{eq:H_cond_diff_PX-}.
	
	The bounds for conditional entropy difference obtained in Lemma~\ref{lm:cond_H_diff_E} or Lemma~\ref{lm:cond_H_diff_PX} combined with the techniques developed in Section~\ref{sec:H_diff} can provide performance guarantees for decision making with distribution shift: the performance of a decision rule $\psi_Q$ under a new distribution $P$, represented by $\E_P[\ell(Y,\psi_Q(X))]$, may be approximated in terms of its performance under the original distribution $Q$ where it is optimally designed, represented by $H_{\ell}(Q_{Y|X} | Q_X )$.
	As illustrated by the preceding example for $\ell\in[0,1]$, the simple upper bound for the right-hand side of \eqref{eq:cond_H_diff_1} given in \eqref{eq:cond_H_diff_3} is an analogue of the result in \cite[Theorem~1]{Ben-David2010} on binary classification with distribution shift, and is an extension of it to general Bayesian inference problems.

	\subsection{Excess risk bounds via entropy difference}
	Besides comparing $\E_P[\ell(Y,\psi_Q(X))]$ against $H_{\ell}(Q_{Y|X} | Q_X )$, it is also of interest to study the gap between $\E_P[\ell(Y,\psi_Q(X))]$ and $H_{\ell}(P_{Y|X} | P_X )$, which amounts to the excess risk incurred by using $\psi_Q$ under distribution $P$ rather than using the optimal decision rule $\psi_P$.
	The following result, in the same spirit of Lemma~\ref{lm:ERM_Rexcess}, shows that the excess risk can be upper-bounded in terms of the previously developed upper bounds for the conditional entropy difference $| H_\ell(Q_{Y|X}|Q_X) - H_\ell(P_{Y|X}|P_X) |$ or $| H_\ell(Q_{Y|X}|P_X) - H_\ell(P_{Y|X}|P_X) |$.
	\begin{theorem}\label{th:excess_cond}
		The excess risk of using $\psi_Q$, the Bayes decision rule with respect to $(\sA,\ell)$ under $Q=Q_X Q_{Y|X}$, under another distribution $P=P_X P_{Y|X}$ satisfies
		\begin{align}
			\E_P[\ell(Y,\psi_Q(X))] - H_\ell(P_{Y|X}|P_X) \le 2B_Q , \label{eq:excess_cond_Q}
		\end{align}
		where $B_Q$ is any upper bound for $| H_\ell(Q_{Y|X}|Q_X) - H_\ell(P_{Y|X}|P_X) |$ obtained based on Lemma~\ref{lm:cond_H_diff_E}. 
		Additionally, it also holds that
		\begin{align}
			\E_P[\ell(Y,\psi_Q(X))] - H_\ell(P_{Y|X}|P_X) \le 2B_P , \label{eq:excess_cond_P}
		\end{align}
		where $B_P$ is any upper bound for $| H_\ell(Q_{Y|X}|P_X) - H_\ell(P_{Y|X}|P_X) |$ obtained based on either Lemma~\ref{lm:cond_H_diff_E} or Lemma~\ref{lm:cond_H_diff_PX}.
	\end{theorem}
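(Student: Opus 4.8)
The plan is to mirror the frequentist argument of Lemma~\ref{lm:ERM_Rexcess}: I would split the excess risk at a well-chosen intermediate entropy and then observe that the ``slack'' term in each telescoping decomposition is \emph{exactly} the right-hand side of the corresponding inequality in Lemma~\ref{lm:cond_H_diff_E} or Lemma~\ref{lm:cond_H_diff_PX}, so that the same bound that defines $B_Q$ (or $B_P$) also controls that slack.

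For \eqref{eq:excess_cond_Q} I would telescope through $H_\ell(Q_{Y|X}|Q_X)$:
\begin{align}
\E_P[\ell(Y,\psi_Q(X))] - H_\ell(P_{Y|X}|P_X)
&= \big(\E_P[\ell(Y,\psi_Q(X))] - H_\ell(Q_{Y|X}|Q_X)\big) \nonumber \\
&\quad + \big(H_\ell(Q_{Y|X}|Q_X) - H_\ell(P_{Y|X}|P_X)\big) . \nonumber
\end{align}
The second parenthesis is at most $|H_\ell(Q_{Y|X}|Q_X) - H_\ell(P_{Y|X}|P_X)| \le B_Q$ by the definition of $B_Q$. For the first parenthesis I would use that $\psi_Q$ is the Bayes rule under $Q$, so $H_\ell(Q_{Y|X}|Q_X) = \E_Q[\ell(Y,\psi_Q(X))]$, whence this term equals $\E_P[\ell(Y,\psi_Q(X))] - \E_Q[\ell(Y,\psi_Q(X))]$, which is precisely the right-hand side of \eqref{eq:cond_H_diff_E+}. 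Since $B_Q$ is, by hypothesis, an upper bound for the conditional entropy difference \emph{obtained from Lemma~\ref{lm:cond_H_diff_E}}, it is constructed by further upper-bounding exactly this expression; hence the first parenthesis is also at most $B_Q$, giving \eqref{eq:excess_cond_Q}.

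The bound \eqref{eq:excess_cond_P} follows the same template but telescopes through $H_\ell(Q_{Y|X}|P_X)$:
\begin{align}
\E_P[\ell(Y,\psi_Q(X))] - H_\ell(P_{Y|X}|P_X)
&= \big(\E_P[\ell(Y,\psi_Q(X))] - H_\ell(Q_{Y|X}|P_X)\big) \nonumber \\
&\quad + \big(H_\ell(Q_{Y|X}|P_X) - H_\ell(P_{Y|X}|P_X)\big) . \nonumber
\end{align}
Again the second parenthesis is bounded by $B_P$. For the first parenthesis I would invoke that the optimal action at each $x$ depends only on the conditional law, so $\psi_Q$ remains the Bayes rule for the mixed distribution $P_X\otimes Q_{Y|X}$ and $H_\ell(Q_{Y|X}|P_X) = \E_{P_X\otimes Q_{Y|X}}[\ell(Y,\psi_Q(X))]$; thus the first parenthesis equals $\int_\sX \big(\E_{P_x}[\ell(Y,\psi_Q(x))] - \E_{Q_x}[\ell(Y,\psi_Q(x))]\big)\,P_X(\rd x)$, which is the right-hand side of \eqref{eq:H_cond_diff_PX+} (and, equivalently, the right-hand side of \eqref{eq:cond_H_diff_E+} applied to the pair $P$ and $P_X\otimes Q_{Y|X}$). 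Since $B_P$ is built by upper-bounding this quantity, the first parenthesis is at most $B_P$, yielding \eqref{eq:excess_cond_P}.

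The only real subtlety, and the step I would state most carefully, is the claim that $B_Q$ and $B_P$ bound not merely the abstract conditional entropy difference but the \emph{specific} slack expression appearing as the first parenthesis in each decomposition. This identification is legitimate precisely because every conditional bound we have derived is obtained from Lemma~\ref{lm:cond_H_diff_E}/Lemma~\ref{lm:cond_H_diff_PX} by controlling that same $\E_P[\ell]-\E_Q[\ell]$-type term; I would phrase the hypothesis on $B_Q,B_P$ so that the first parenthesis coincides with the expression being bounded, exactly as in the frequentist case of Lemma~\ref{lm:ERM_Rexcess}.
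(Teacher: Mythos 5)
Your proposal is correct and follows essentially the same route as the paper's proof: both decompositions (through $H_\ell(Q_{Y|X}|Q_X)$ for \eqref{eq:excess_cond_Q} and through $H_\ell(Q_{Y|X}|P_X)$ for \eqref{eq:excess_cond_P}), the observation that $\psi_Q$ remains Bayes-optimal under $P_X\otimes Q_{Y|X}$, and the key identification that the slack term coincides with the right-hand side of \eqref{eq:cond_H_diff_E+} or \eqref{eq:H_cond_diff_PX+} are exactly the steps the paper uses. No gaps.
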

	
	\begin{proof}
		To show \eqref{eq:excess_cond_Q}, we can write the entropy difference $\E_P[\ell(Y,\psi_Q(X))] - H_\ell(P_{Y|X}|P_X) $ as
		\begin{align}
			\big(\E_P[\ell(Y,\psi_Q(X))] - H_\ell(Q_{Y|X}|Q_X) \big) +  \big( H_\ell(Q_{Y|X}|Q_X) - H_\ell(P_{Y|X}|P_X) \big) \label{eq:pf_excess_cond_Q_1} .
		\end{align}
		The claim then follows from the fact that any upper bound for $H_\ell(P_{Y|X}|P_X) - H_\ell(Q_{Y|X}|Q_X)$ obtained based on Lemma~\ref{lm:cond_H_diff_E} also upper-bounds	$\E_P[\ell(Y,\psi _Q(X))] - H_\ell(Q_{Y|X}|Q_X)$.
		
		Next we prove \eqref{eq:excess_cond_P}.
		Adopting the same definitions of $P_x$ and $Q_x$ as in Lemma~\ref{lm:cond_H_diff_PX}, we have
		\begin{align}
			& \E_P[\ell(Y,\psi_Q(X))] - H_\ell(P_{Y|X}|P_X) \nonumber \\
			= & \big(\E_P[\ell(Y,\psi_Q(X))] - H_\ell(Q_{Y|X}|P_X) \big) +  \big( H_\ell(Q_{Y|X}|P_X) - H_\ell(P_{Y|X}|P_X) \big) \label{eq:pf_excess_cond_2} \\
			= &  \int_\sX (\E_{P_x}[\ell(Y,\psi_{Q}(x))] - \E_{Q_x}[\ell(Y,\psi_{Q}(x))] )P_X({\rm d}x) + \big( H_\ell(Q_{Y|X}|P_X) - H_\ell(P_{Y|X}|P_X) \big) \label{eq:pf_excess_cond_3} \\
			\le &  \int_\sX (\E_{P_x}[\ell(Y,\psi_{Q}(x))] - \E_{Q_x}[\ell(Y,\psi_{Q}(x))] )P_X({\rm d}x) + \nonumber \\
			&  \int_\sX (\E_{Q_x}[\ell(Y,\psi_{P}(x))] - \E_{P_x}[\ell(Y,\psi_{P}(x))] )P_X({\rm d}x) \label{eq:pf_excess_cond_4} 
		\end{align}
		where \eqref{eq:pf_excess_cond_3} uses the fact that $\psi_Q$ remains as a Bayes decision rule under the joint distribution $P_X Q_{Y|X}$; and the last step is due to \eqref{eq:H_cond_diff_PX-} in Lemma~\ref{lm:cond_H_diff_PX}.
		
		Note that according to \eqref{eq:cond_H_diff_E+}, any upper bound for $H_\ell(P_{Y|X}|P_X) - H_\ell(Q_{Y|X}|P_X)$ obtained based on Lemma~\ref{lm:cond_H_diff_E} also upper-bounds	$\E_P[\ell(Y,\psi _Q(X))] - H_\ell(Q_{Y|X}|P_X)$.
		It then follows from \eqref{eq:pf_excess_cond_2} that $\E_P[\ell(Y,\psi_Q(X))] - H_\ell(P_{Y|X}|P_X) \le 2 B$ for any upper bound $B$ for $|H_\ell(P_{Y|X}|P_X) - H_\ell(Q_{Y|X}|P_X)|$ obtained by Lemma~\ref{lm:cond_H_diff_E}.
		
		Moreover, any upper bound for $H_\ell(P_{Y|X}|P_X) - H_\ell(Q_{Y|X}|P_X)$ or $H_\ell(Q_{Y|X}|P_X) - H_\ell(P_{Y|X}|P_X)$ obtained based on Lemma~\ref{lm:cond_H_diff_PX} also upper-bounds one of the two integrals in \eqref{eq:pf_excess_cond_4} respectively. It follows that  $\E_P[\ell(Y,\psi_Q(X))] - H_\ell(P_{Y|X}|P_X) \le 2 B$ for any upper bound $B$ for $|H_\ell(P_{Y|X}|P_X) - H_\ell(Q_{Y|X}|P_X)|$ obtained by Lemma~\ref{lm:cond_H_diff_PX}.
		This proves \eqref{eq:excess_cond_P}.
	\end{proof}
	As an example, we can use Theorem~\ref{th:excess_cond} to bound the excess risk in estimating $Y$ from a noisy observation $X$ when the prior distribution of $Y$ is wrongly specified. For instance, when $Y\in\R$ has a prior distribution $P_Y$ and $X={\alpha}Y+V$ with $V\sim\mathcal N(0,1)$ independent of $Y$, if the prior distribution of $Y$ is assumed to be $Q_Y$, then the mismatched Bayes estimator with respect to the quadratic loss is $\psi_Q(x) = {\int y e^{-(x-{\alpha}y)^2/2} Q({\rm d}y)}/{\int  e^{-(x-{\alpha}y')^2/2} Q({\rm d}y')}$ instead of the true Bayes estimator $\psi_P(x)=\E_P[Y|X=x]$ \cite{Verdu_mismatch}.
	The following corollary bounds the excess risk of using a mismatched Bayes estimator in a more general setting.
	\begin{corollary}
		Suppose $Y\in\R$ has a prior distribution $P_Y$, and $X=g(Y,V)$ with some function $g$ and noise $V$ independent of $Y$. Let $\psi_Q$ be the Bayes estimator with respect to the quadratic loss when the prior distribution of $Y$ is assumed to be $Q_Y$ while $X$ is assumed to have the same functional dependence on $Y$ and $V$. Then 
		\begin{align}
			\E_P[(Y-\psi_Q(X))^2] - H_2(P_{Y|X}|P_X) \le 
			& \sqrt{\Var_Q[(Y-\psi_Q(X))^2] \chi^2(P_Y\| Q_Y)} \, +  \nonumber\\
			& \sqrt{\Var_P[(Y-\psi_P(X))^2] \chi^2(Q_Y\| P_Y)} .
		\end{align}
	\end{corollary}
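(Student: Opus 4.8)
The plan is to mimic the decomposition behind Theorem~\ref{th:excess_cond} and then control each of the two resulting pieces by a $\chi^2$ argument, the crucial point being that the true domain $P$ and the mismatched domain $Q$ share the same conditional law of $X$ given $Y$. Throughout I assume the stated fourth moments and $\chi^2$ divergences are finite, so every quantity below is well-defined.

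First I would split the excess risk as
\[
\E_P[(Y-\psi_Q(X))^2] - H_2(P_{Y|X}|P_X) = A + B ,
\]
where $A \deq \E_P[(Y-\psi_Q(X))^2] - H_2(Q_{Y|X}|Q_X)$ and $B \deq H_2(Q_{Y|X}|Q_X) - H_2(P_{Y|X}|P_X)$. Since $\psi_Q$ is the Bayes estimator under $Q$, we have $H_2(Q_{Y|X}|Q_X) = \E_Q[(Y-\psi_Q(X))^2]$, so $A = \E_P[f] - \E_Q[f]$ with $f(x,y)\deq (y-\psi_Q(x))^2$. For the term $B$, the bound \eqref{eq:cond_H_diff_E-} of Lemma~\ref{lm:cond_H_diff_E} gives $B \le \E_Q[h] - \E_P[h]$ with $h(x,y)\deq (y-\psi_P(x))^2$.

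The key structural observation is that, because $X=g(Y,V)$ with $V$ independent of $Y$ and with the same $g$ and noise law under both domains, $P$ and $Q$ share the conditional kernel $P_{X|Y}=Q_{X|Y}$. Consequently $\frac{\rd P_{X,Y}}{\rd Q_{X,Y}}(x,y) = \frac{\rd P_Y}{\rd Q_Y}(y)$, which yields the divergence identities $\chi^2(P_{X,Y}\|Q_{X,Y}) = \chi^2(P_Y\|Q_Y)$ and, symmetrically, $\chi^2(Q_{X,Y}\|P_{X,Y}) = \chi^2(Q_Y\|P_Y)$.

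With these identities in hand I would invoke the Hammersley--Chapman--Robbins bound used in the proof of Theorem~\ref{prop:cont_Chi2} (HCR followed by the data-processing inequality, since $f$ and $h$ are functions of $(X,Y)$), namely $\E_P[f] - \E_Q[f] \le \sqrt{\Var_Q[f]\,\chi^2(P_{X,Y}\|Q_{X,Y})}$ for $A$, and its $P\leftrightarrow Q$ exchanged version $\E_Q[h] - \E_P[h] \le \sqrt{\Var_P[h]\,\chi^2(Q_{X,Y}\|P_{X,Y})}$ for $B$. Relaxing each variance to a raw second moment, $\Var_Q[f]\le \E_Q[f^2]=\E_Q[(Y-\psi_Q(X))^4]$ and $\Var_P[h]\le \E_P[h^2]=\E_P[(Y-\psi_P(X))^4]$, and substituting the two $\chi^2$ identities, bounds $A$ and $B$ by the two summands in the claim; adding them completes the proof. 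The only step carrying genuine content is the reduction of the joint $\chi^2$ divergence to the marginal $\chi^2$ on $Y$ via the shared observation channel; everything else is the already-established conditional analogue of Lemma~\ref{lm:H_diff_E} together with the HCR inequality, plus the trivial passage from variance to second moment.
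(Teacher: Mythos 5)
Your proposal is correct and follows essentially the same route as the paper: the same splitting of the excess risk into $\E_P[f]-\E_Q[f]$ plus a conditional-entropy difference bounded via Lemma~\ref{lm:cond_H_diff_E}, the same HCR-plus-data-processing step from the proof of Theorem~\ref{prop:cont_Chi2}, the same relaxation of variances to fourth moments, and the same reduction of $\chi^2(P_{X,Y}\|Q_{X,Y})$ to $\chi^2(P_Y\|Q_Y)$ via the shared kernel $P_{X|Y}=Q_{X|Y}$. No gaps.
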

	\begin{proof}
		This result is a slight variation of \eqref{eq:excess_cond_Q}, but we follow the same line of its proof:
		\begin{align}
			& \E_P[(Y-\psi_Q(X))^2] - H_2(P_{Y|X}|P_X) \nonumber \\ 
			=& \E_P[(Y-\psi_Q(X))^2] - \E_Q[(Y-\psi_Q(X))^2] + \E_Q[(Y-\psi_Q(X))^2]- H_2(P_{Y|X}|P_X)  \\
			\le& \big(\E_P[(Y-\psi_Q(X))^2] - \E_Q[(Y-\psi_Q(X))^2]\big) +
			\big(\E_Q[(Y-\psi_P(X))^2]- H_2(P_{Y|X}|P_X) \big) \\
			\le& \sqrt{\Var_Q[(Y-\psi_Q(X))^2] \chi^2(P_{X,Y}\| Q_{X,Y})} + \sqrt{\Var_P[(Y-\psi_P(X))^2] \chi^2(Q_{X,Y}\| P_{X,Y})} \label{eq:pf_co_mismatch1} \\
			=& \sqrt{\Var_Q[(Y-\psi_Q(X))^2] \chi^2(P_Y\| Q_Y)} + 
			\sqrt{\Var_P[(Y-\psi_P(X))^2] \chi^2(Q_Y\| P_Y)} ,
		\end{align}
		where \eqref{eq:pf_co_mismatch1} follows from the same argument as in the proof of Theorem~\ref{prop:cont_Chi2}; and the last step uses the fact that
		$\chi^2(P_{X,Y}\| Q_{X,Y})=\chi^2(P_Y \| Q_Y)$ and $\chi^2(Q_{X,Y}\| P_{X,Y})=\chi^2(Q_Y \| P_Y)$, which follows from the definition of the $\chi^2$ divergence and the fact that $P_{X|Y}$ and $Q_{X|Y}$ are identical and only depend on the distribution of $V$, as a consequence of the assumed form of $X$.
	\end{proof}
	
	Theorem~\ref{th:excess_cond} can also be applied to statistical learning problems where the learned decision rule is optimally designed under a data-dependent distribution $Q$.
	Combined with the results in Section~\ref{sec:H_diff}, it can provide excess risk upper bounds in terms of the statistical distances between $Q$ and the data-generating distribution $P$. We give an example in the next subsection.

	\subsection{Excess risk in learning by projecting to exponential family}
	We now consider a procedure for supervised learning that is different from both the frequentist learning and the Bayesian learning discussed in the previous sections. 
	To precisely describe it, we need the following definitions and properties of exponential family distributions.
	A parametrized family of distributions $\mathcal Q = \{ Q_\theta : \theta \in \mathsf\Theta \subset \R^d\}$ on $\sZ = \sX\times\sY$ is an exponential family if each element can be written as $Q_\theta(z) = \exp\{\theta^\top \varphi(z) - A(\theta)\}$ for some $\theta\in\mathsf\Theta$, with $\varphi:\sZ\rightarrow\R^d$ as a potential function, $A(\theta) \deq \log \int_\sZ \exp\{\theta^\top \varphi(z)\}\nu({\rm d}z)$ as the log partition function, and $\nu$ as a density on $\sZ$.
	For a distribution $P$ on $\sZ$ which may not belong to $\mathcal Q$, its projection to $\mathcal Q$, defined as $\argmin_{Q\in\mathcal Q} D(P \| Q)$, is given by $Q^* \deq Q_{\theta^*}$ with a $\theta^*\in\Theta$ that satisfies 
	\begin{align}\label{eq:exp_theta*}
		\nabla A(\theta^*) = \mu \deq \E_P[\varphi(Z)] .
	\end{align}
	Similarly, given a dataset $Z^n = ((X_1,Y_1),\ldots,(X_n,Y_n))$ drawn i.i.d.\ from $P$, the projection of its empirical distribution $\wh P_n$ to $\mathcal Q$, defined as the solution to the maximum-likelihood estimation $\argmax_{Q\in\mathcal Q} \sum_{i=1}^n \log Q(Z_i)$, is given by $\wh Q \deq Q_{\hat \theta}$ with a $\hat\theta \in \Theta$ that satisfies
	\begin{align}\label{eq:exp_theta_hat}
		\nabla A(\hat\theta) = \hat \mu \deq \frac{1}{n}\sum_{i=1}^{n}\varphi(Z_i) .
	\end{align}
	Define the convex conjugate of $A$ as $A^*(\mu) \deq \sup_{\theta\in\mathsf \Theta} \mu^\top\theta - A(\theta)$ for any $\mu$ that can be written as $\E_{Q_\theta}[\varphi(Z)]$ for some $\theta\in\mathsf\Theta$.
	When $\mathcal Q$ is \emph{minimal}, meaning that $Q_\theta$ and $Q_{\theta'}$ are different for any $\theta\neq\theta'\in\mathsf \Theta$, it is known from convex duality \cite{GM_exp_var} that $\theta^*$ and $\hat\theta$ implicitly defined above can be explicitly written as $\theta^* = \nabla A^*(\mu)$ and $\hat\theta = \nabla A^*(\hat \mu)$.
	Figure~\ref{fig:exp_proj} illustrates the above defined quantities.
	\begin{figure}[h]
		\centering
		\includegraphics[scale = 0.236]{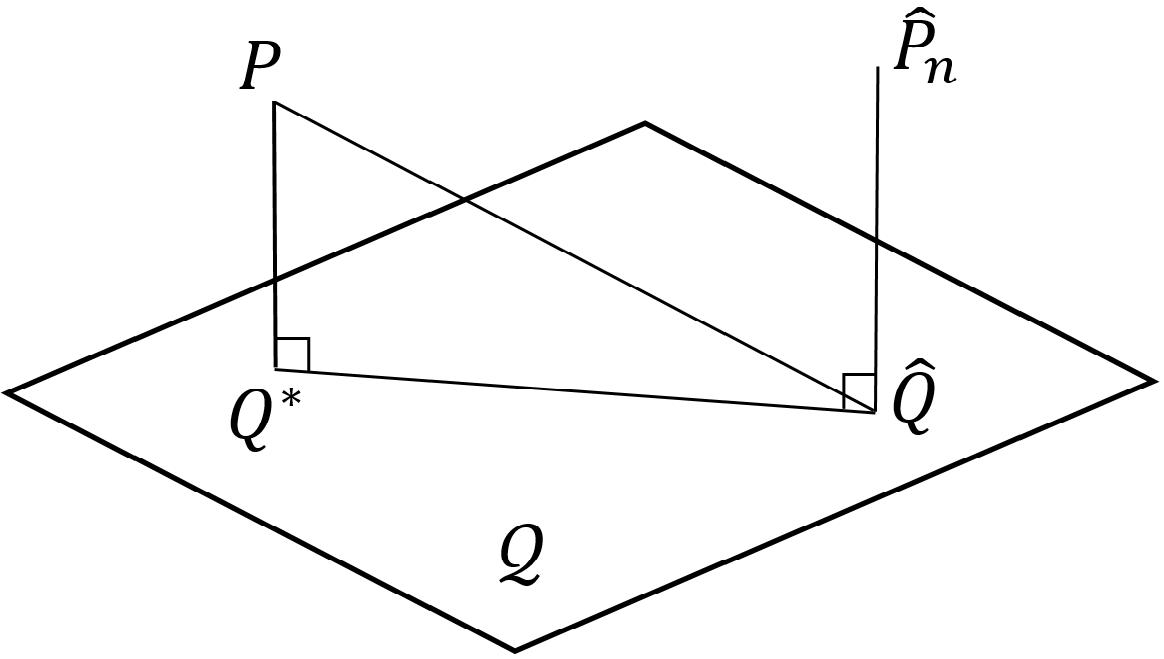}
		\caption{Illustration of the projections of the data-generating distribution $P$ and the empirical distribution $\wh P_n$ to the exponential family $\mathcal Q$. }
		\label{fig:exp_proj}
	\end{figure}

	With the above definitions, the learning procedure under consideration can be described as follows: given a dataset $Z^n$ drawn i.i.d.\ from $P$, first project its empirical distribution $\wh P_n$ to a predefined exponential family $\mathcal Q$ on $\sZ$ to obtain $\wh Q$, then the learned decision rule for predicting $Y$ based on a fresh observation $X$ is taken as the Bayes decision rule $\psi_{\wh Q}$ that is optimal under $\wh Q$.
	The following result based on Theorem~\ref{th:excess_cond} provides upper bounds for its expected excess risk.
	\begin{corollary}\label{co:excess_exp_TV}
		For the learning procedure described above, under the assumptions that $\mathcal Q$ is minimal and that the loss function $\ell$ takes values in $[0,1]$, the expected excess risk of using $\psi_{\wh Q}$ as the learned decision rule under the data-generating distribution $P$ satisfies
		\begin{align}
			\E[\ell(Y,\psi_{\wh Q}(X))] - & H_\ell(P_{Y|X}|P_X) 
			\le 2d_{\rm TV}(P,Q^*) + \nonumber \\
			& \sqrt{2 \|\mu\| \cdot \E \|\nabla A^*(\mu) - \nabla A^*(\hat \mu)\| + 2 \E |A(\nabla A^*(\mu)) - A(\nabla A^*(\hat \mu))| } , \label{eq:excess_exp_TV}
		\end{align}	
		where the expectations are taken over either $(Z^n,Z)$ or $\wh \mu$ with $P$ as the underlying distribution.
	\end{corollary}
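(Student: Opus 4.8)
The plan is to first fix the dataset $Z^n$ --- hence the learned surrogate $\wh Q = Q_{\hat\theta}$ --- and apply Theorem~\ref{th:excess_cond} conditionally. Since the fresh sample $Z=(X,Y)$ is independent of $Z^n$ and drawn from $P$, conditioning on $Z^n$ places us exactly in the setting of \eqref{eq:excess_cond_Q} with source domain $Q=\wh Q$ and target domain $P$. Because $\ell\in[0,1]$, the total-variation form of the conditional entropy-difference bound (the extension of Theorem~\ref{prop:cont_TV} via Lemma~\ref{lm:cond_H_diff_E}, with $a_Q,a_P$ replaced by $\psi_{\wh Q},\psi_P$) gives $|H_\ell(\wh Q_{Y|X}|\wh Q_X) - H_\ell(P_{Y|X}|P_X)| \le d_{\rm TV}(P,\wh Q)$, so $B_{\wh Q}=d_{\rm TV}(P,\wh Q)$ is admissible and \eqref{eq:excess_cond_Q} yields $\E_P[\ell(Y,\psi_{\wh Q}(X))\,|\,Z^n] - H_\ell(P_{Y|X}|P_X) \le 2\,d_{\rm TV}(P,\wh Q)$. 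Taking expectation over $Z^n$ and inserting $Q^*$ through the triangle inequality $d_{\rm TV}(P,\wh Q)\le d_{\rm TV}(P,Q^*)+d_{\rm TV}(Q^*,\wh Q)$ splits the bound into the fixed approximation term $2\,d_{\rm TV}(P,Q^*)$ and the estimation term $2\,\E[d_{\rm TV}(Q^*,\wh Q)]$.

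It then remains to bound $2\,\E[d_{\rm TV}(Q^*,\wh Q)]$ by the stated square-root quantity. Both $Q^*=Q_{\theta^*}$ and $\wh Q=Q_{\hat\theta}$ lie in the minimal exponential family $\mathcal Q$, so I would apply Pinsker's inequality, $d_{\rm TV}(Q^*,\wh Q)\le\sqrt{\frac{1}{2}D(Q_{\theta^*}\|Q_{\hat\theta})}$, and then use the closed form of KL divergence within an exponential family. Writing $Q_\theta(z)=\exp\{\theta^\top\varphi(z)-A(\theta)\}$ and taking the log-ratio expectation under $Q_{\theta^*}$, where $\E_{Q_{\theta^*}}[\varphi(Z)]=\nabla A(\theta^*)=\mu$ by \eqref{eq:exp_theta*}, gives $D(Q_{\theta^*}\|Q_{\hat\theta}) = (\theta^*-\hat\theta)^\top\mu - A(\theta^*) + A(\hat\theta)$. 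Substituting the convex-duality identities $\theta^*=\nabla A^*(\mu)$ and $\hat\theta=\nabla A^*(\hat\mu)$ (valid since $\mathcal Q$ is minimal), bounding the linear term by Cauchy--Schwarz and the partition-function term by its absolute value, produces $D(Q_{\theta^*}\|Q_{\hat\theta}) \le \|\mu\|\,\|\nabla A^*(\mu)-\nabla A^*(\hat\mu)\| + |A(\nabla A^*(\mu))-A(\nabla A^*(\hat\mu))|$. Finally, taking expectation over the data (which randomizes $\hat\mu$, hence $\hat\theta$), moving the expectation inside the square root by Jensen's inequality --- legitimate because $\sqrt{\cdot}$ is concave and $\|\mu\|$ is deterministic --- and folding the factor $2$ into the radical via $2\sqrt{\frac{1}{2}x}=\sqrt{2x}$ yields exactly the second term of \eqref{eq:excess_exp_TV}.

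The conceptual skeleton is short; the main obstacle is the exponential-family computation of the second paragraph. In particular, one must correctly invoke the duality relations \eqref{eq:exp_theta*}--\eqref{eq:exp_theta_hat} to replace the implicitly defined $\theta^*,\hat\theta$ by $\nabla A^*(\mu),\nabla A^*(\hat\mu)$, and choose the \emph{direction} of the KL divergence so that the expectation of $\varphi$ under $Q_{\theta^*}$ collapses to the moment $\mu=\E_P[\varphi(Z)]$. Pinsker must therefore be applied to $D(Q_{\theta^*}\|Q_{\hat\theta})$ rather than the reverse, since only this ordering lets the closed-form identity use $\mu$; the interpretation of the resulting bound as a non-vanishing approximation error $2\,d_{\rm TV}(P,Q^*)$ plus a vanishing estimation error is then immediate.
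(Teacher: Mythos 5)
Your proposal is correct and follows essentially the same route as the paper's proof: the conditional application of \eqref{eq:excess_cond_Q} with the total-variation bound from Lemma~\ref{lm:cond_H_diff_E}, the triangle inequality through $Q^*$, Pinsker's inequality, the exponential-family closed form of $D(Q^*\|\wh Q)$ combined with Cauchy--Schwarz and the duality identities $\theta^*=\nabla A^*(\mu)$, $\hat\theta=\nabla A^*(\hat\mu)$, and a final Jensen step. The only difference is the (immaterial) order in which the triangle inequality and Theorem~\ref{th:excess_cond} are invoked.
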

	\begin{proof}
		To make use of Theorem~\ref{th:excess_cond}, we first bound the entropy difference.
		For any realization of the dataset $Z^n$,
		\begin{align}
			|H_\ell(\wh Q_{Y|X}|\wh Q_X) - H_\ell(P_{Y|X} | P_X) | & \le d_{\rm TV}(\wh Q, P) 
			\le d_{\rm TV}(P,Q^*) + d_{\rm TV}(\wh Q, Q^*)
		\end{align}
		where the first inequality is due to Lemma~\ref{lm:cond_H_diff_E} and the assumption that $\ell\in[0,1]$ as used in the proof of Theorem~\ref{prop:cont_TV}, while the second inequality is due to the triangle inequality satisfied by the total variation distance.
		Further, 
		\begin{align}
			d_{\rm TV}(\wh Q, Q^*) &\le \sqrt{\frac{1}{2}D(Q^* \| \wh Q)} \label{eq:TV_exp_1} \\
			&= \sqrt{\frac{1}{2} \big( \E_{Q^*}[\varphi(Z)]^\top (\theta^* - \hat \theta) - (A(\theta^*) - A(\hat \theta)) \big) } \label{eq:TV_exp_2} \\
			&\le \sqrt{\frac{1}{2} \big( \|\mu\| \|\theta^* - \hat \theta\| + |A(\theta^*) - A(\hat \theta)| \big) } \label{eq:TV_exp_3} \\
			&= \sqrt{\frac{1}{2} \big( \|\mu\| \|\nabla A^*(\mu) - \nabla A^*(\hat \mu)\| + |A(\nabla A^*(\mu)) - A(\nabla A^*(\hat \mu))| \big) } \label{eq:TV_exp_4}
		\end{align}
		where \eqref{eq:TV_exp_1} uses the Pinsker's inequality; \eqref{eq:TV_exp_2} uses the property of the exponential family distributions; \eqref{eq:TV_exp_3} uses the fact that $\E_{Q^*}\varphi(Z)=\mu$ and the Cauchy-Schwarz inequality; and \eqref{eq:TV_exp_4} uses \eqref{eq:exp_theta*} and \eqref{eq:exp_theta_hat} as well as the assumption that $\mathcal Q$ is minimal so that $(\nabla A)^{-1} \equiv \nabla A^*$.
		It then follows from \eqref{eq:excess_cond_Q} in Theorem~\ref{th:excess_cond} that 
		\begin{align}
			\E[\ell(Y,\psi_{\wh Q}(X))|Z^n] - &H_\ell(P_{Y|X}|P_X) 
			\le 2d_{\rm TV}(P,Q^*) + \nonumber \\
			& \sqrt{2 \big( \|\mu\| \|\nabla A^*(\mu) - \nabla A^*(\hat \mu)\| + |A(\nabla A^*(\mu)) - A(\nabla A^*(\hat \mu))| \big) } 
		\end{align}	
		almost surely for $Z^n$. 
		The claim follows by taking expectations on both sides of the above inequality over $Z^n$ and applying Jensen's inequality on the right-hand side.
	\end{proof}
	Corollary~\ref{co:excess_exp_TV} clearly shows that the excess risk for learning by projecting the empirical distribution to an exponential family consists of two parts: the \emph{approximation error}, represented by the first term on the right-hand side of \eqref{eq:excess_exp_TV}, and the \emph{estimation error}, represented by the second term.
	The approximation error depends on the total variation distance from the data-generating distribution $P$ to the exponential family $\mathcal Q$ and does not depend on the data size. The estimation error on the other hand vanishes as $n$ grows whenever $A$ and $\nabla A^*$ are continuous, which is due to the fact that $\hat\mu\rightarrow\mu$ almost surely as $n\rightarrow\infty$.
	
	The learning procedure considered above can be extended to the cases where the family of distributions $\mathcal Q$ is not predefined, but dependent on the empirical distribution $\wh P_n$, and where the distribution $\wh Q$ under which the learned decision rule is optimally designed is found by other criteria. An example is the recently proposed maximum conditional entropy framework of learning \cite{FarniaTse16}, where $\mathcal Q$ is a set of distributions centered at $\wh P_n$, and $\wh Q$ is chosen to be an element of $\mathcal Q$ with the maximum generalized conditional entropy with respect to some loss function. A special case of this framework with moment-matching conditions to construct $\mathcal Q$ and with the log loss may be interpreted as projecting the empirical conditional distribution $\wh P_{Y|X}$ to an exponential family of conditional distributions associated with a generalized linear model. 
	More generally, the minimax approach to statistical learning where the goal is to find a decision rule that minimizes the worst-case expected loss in $\mathcal Q$, c.f. \cite{FarniaTse16,LR_minimax} and the reference therein, is equivalent to the maximum conditional entropy approach under regularity conditions \cite{FarniaTse16}.
	Whether Theorem~\ref{th:excess_cond}, especially \eqref{eq:excess_cond_P} can be leveraged to analyze the excess risk in the maximum conditional entropy framework of learning would be an interesting research problem.

	\section{Possible improvements and extensions}
	In this work, we have derived upper and lower bounds for the difference of the generalized entropy between two distributions in terms of various statistical distances, and applied the results to the excess risk analysis in three major learning problems.
	In this section we discuss possible improvements and extensions of this work.
	\begin{itemize}[leftmargin=*]
	\item
	Improvement of the entropy difference bound.
	The majority of the entropy difference bounds obtained in Section~\ref{sec:H_diff} are based on Lemma~\ref{lm:H_diff_E}. Only in Section~\ref{sec:H_diff_Breg} we took a different route by considering an exact representation of the entropy difference in terms of a Bregman divergence between the distributions. There is another exact representation of the entropy difference, which can be viewed as a refinement of Lemma~\ref{lm:H_diff_E}:
	\begin{align}\label{eq:H_diff_lbub}
		H_\ell(P) - H_\ell(Q) 
		= \E_P[\ell(Z,a_Q)] - \E_Q[\ell(Z,a_Q)] + \underbrace{\E_P[\ell(Z,a_P)] - \E_P[\ell(Z,a_Q)]}_{-D_{\sA,\ell}(P,Q) \le 0}.
	\end{align}
	The slack of Lemma~\ref{lm:H_diff_E} is clearly seen as the nonnegative $D_{\sA,\ell}(P,Q) \deq \E_P[\ell(Z,a_Q)] - \E_P[\ell(Z,a_P)]$, which can be thought of an $(\sA,\ell)$-specific divergence between $P$ and $Q$ \cite{gunwald2004}.
	A possible way to improve the results obtained based on Lemma~\ref{lm:H_diff_E} is thus to evaluate or lower-bound $D_{\sA,\ell}(P,Q)$.
	
	\item
	Applying Theorem~\ref{th:D_PQ_ell} to learning problems. 
	As shown in Section~\ref{sec:H_diff_P_ell}, Theorem~\ref{th:D_PQ_ell} can potentially provide much tighter entropy difference bounds. The reason is that the loss is real-valued, with a one-dimensional distribution, whereas the data distribution $P$ or $Q$ can be high-dimensional. The difficulty to apply this improvement to frequentist learning problems is that, the empirical loss is a function of non-i.i.d. quantities, as the learned hypothesis depends on the training data. It is thus hard to characterize the resulting distribution of the empirical loss. But this problem can be an interesting future direction of research.
	
	\item
	Continuity of other general definitions of entropy. The generalized entropy considered in this work is a function of \emph{probability distribution} on a sample space. This definition could be further generalized to functions of other quantities of interest, e.g. to the von Neumann entropy (a.k.a.\ quantum entropy) as a function of the density matrix. Such generalization may also be carried out in a decision-making framework \cite{e19050206}. It is therefore of interest to study if the continuity property of other generalized entropies can be useful for analyzing excess risks of the related decision-making or optimization problems.
	\end{itemize}
	
	\appendix
	\setcounter{lemma}{0}
	\renewcommand{\thelemma}{\Alph{section}\arabic{lemma}}
	
	\section*{Appendix}
	\section{Proof of Lemma~\ref{lm:DPQ_variational_gen} }\label{appd:pf_DPQ_variational_gen}
	The Donsker-Varadhan theorem states that
	\begin{align}
		D(P \| Q) = \sup_{g:\sZ\rightarrow\R} \E_P[g(Z)] - \log \E_Q[e^{g(Z)}] .
	\end{align}
	It implies that for any $f:\sZ\rightarrow\R$ and any $\lambda\in\R$,
	\begin{align}
		D(P \| Q) \ge \lambda(\E_P[f(Z)] - \E_Q[f(Z)]) - \log \E_Q[e^{\lambda(f(Z)-\E_Q f(Z))}] .
	\end{align}
	From the assumption that $\log \E_Q[e^{\lambda (f(Z) - \E_Q f(Z) )}] \le \varphi_+(\lambda)$ for all $0\le \lambda < b_+$ and the definition $\varphi_+^*(\gamma) \deq \sup_{0\le \lambda < b_+} \lambda \gamma - \varphi_+(\lambda)$ for $\gamma\in\R$, we have
	\begin{align}
		D(P \| Q) &\ge \sup_{0\le \lambda < b_+} \lambda(\E_P[f(Z)] - \E_Q[f(Z)]) - \varphi_+(\lambda) \\
		&= \varphi_+^*(\E_P[f(Z)] - \E_Q[f(Z)]) .
	\end{align}
	From the definition $\varphi_+^{*-1}(x) \deq \sup\{\gamma\in\R:\varphi_+^*(\gamma)\le x\}$ for $x\in\R$, we have
	\begin{align}
		\E_P[f(Z)] - \E_Q[f(Z)] \le \varphi_+^{*-1}(D(P \| Q)) ,
	\end{align}
	which proves \eqref{eq:DPQ_variational_gen_+}.
	
	The Donsker-Varadhan theorem also implies that for any $f:\sZ\rightarrow\R$ and any $\lambda\in\R$,
	\begin{align}
		D(P \| Q) \ge \lambda(\E_Q[f(Z)] - \E_P[f(Z)]) - \log \E_Q[e^{-\lambda(f(Z)-\E_Q f(Z))}] .
	\end{align}
	From the assumption that $\log \E_Q[e^{-\lambda (f(Z) - \E_Q f(Z) )}] \le \varphi_-(\lambda)$ for all $0\le \lambda < b_-$ and the definition $\varphi_-^*(\gamma) \deq \sup_{0\le \lambda < b_-} \lambda \gamma - \varphi_-(\lambda)$ for $\gamma\in\R$, we have
	\begin{align}
		D(P \| Q) &\ge \sup_{0\le \lambda < b_-} \lambda(\E_Q[f(Z)] - \E_P[f(Z)]) - \varphi_-(\lambda) \\
		&= \varphi_-^*(\E_Q[f(Z)] - \E_P[f(Z)]) .
	\end{align}
	From the definition $\varphi_-^{*-1}(x) \deq \sup\{\gamma\in\R:\varphi_-^*(\gamma)\le x\}$ for $x\in\R$, we have
	\begin{align}
		\E_Q[f(Z)] - \E_P[f(Z)] \le \varphi_-^{*-1}(D(P \| Q)) ,
	\end{align}
	which proves \eqref{eq:DPQ_variational_gen_-}.
	
	The assumption that $\varphi_+(\lambda)$ is strictly convex over $[0, b_+]$ and $\varphi_+(0) =  \varphi_+'(0)=0$ implies that its Legendre dual $\varphi_+^*(\gamma)$ is strictly increasing over $\gamma\ge 0$ and $\varphi_+^*(0)=0$. 
	In addition, the fact that $\varphi_+^*(\gamma)$ is convex over $\gamma\ge 0$ implies that it is continuous over $\gamma\ge 0$.
	Together these imply that $\varphi_+^{*-1}(x)$ is strictly increasing and continuous over $x\ge 0$, and $\varphi_+^{*-1}(0)=0$.
	It follows that $\lim_{x\downarrow 0}\varphi_+^{*-1}(x)=0$.
	The same argument can be used to show that if $\varphi_-(\lambda)$ is strictly convex over $[0, b_-]$ and $\varphi_-(0) =  \varphi_-'(0)=0$, then $\lim_{x\downarrow 0}\varphi_-^{*-1}(x)=0$.

	\section*{Acknowledgment}
	The author would like to thank Prof. Maxim Raginsky, Prof. Yihong Wu, and Jaeho Lee for helpful discussions.
	The author is also thankful to the area editor and anonymous reviewers of the IEEE Transactions on Information Theory, their comments greatly improved the quality of the paper.

	\bibliographystyle{IEEEtran}
	\bibliography{aolin}

\begin{thebibliography}{10}
\providecommand{\url}[1]{#1}
\csname url@samestyle\endcsname
\providecommand{\newblock}{\relax}
\providecommand{\bibinfo}[2]{#2}
\providecommand{\BIBentrySTDinterwordspacing}{\spaceskip=0pt\relax}
\providecommand{\BIBentryALTinterwordstretchfactor}{4}
\providecommand{\BIBentryALTinterwordspacing}{\spaceskip=\fontdimen2\font plus
\BIBentryALTinterwordstretchfactor\fontdimen3\font minus
  \fontdimen4\font\relax}
\providecommand{\BIBforeignlanguage}[2]{{%
\expandafter\ifx\csname l@#1\endcsname\relax
\typeout{** WARNING: IEEEtran.bst: No hyphenation pattern has been}%
\typeout{** loaded for the language `#1'. Using the pattern for}%
\typeout{** the default language instead.}%
\else
\language=\csname l@#1\endcsname
\fi
#2}}
\providecommand{\BIBdecl}{\relax}
\BIBdecl

\bibitem{gunwald2004}
P.~D. Gr{\"u}nwald and A.~P. Dawid, ``Game theory, maximum entropy, minimum
  discrepancy and robust {B}ayesian decision theory,'' \emph{Ann. Statist.},
  vol.~32, no.~4, pp. 1367--1433, 2004.

\bibitem{FarniaTse16}
F.~Farnia and D.~Tse, ``A minimax approach to supervised learning,'' in
  \emph{Conference on Neural Information Processing Systems}, 2016.

\bibitem{Harremoes2007}
P.~Harremo{\"e}s, ``Information topologies with applications,'' in
  \emph{Entropy, Search, Complexity}, I.~Csisz{\'a}r, G.~O.~H. Katona,
  G.~Tardos, and G.~Wiener, Eds.\hskip 1em plus 0.5em minus 0.4em\relax
  Springer Berlin Heidelberg, 2007.

\bibitem{Xu_cont_H_ISIT20}
A.~Xu, ``Continuity of generalized entropy,'' in \emph{IEEE International
  Symposium on Information Theory}, 2020.

\bibitem{MER19}
A.~Xu and M.~Raginsky, ``Minimum excess risk in {B}ayesian learning,''
  \emph{arXiv preprint arXiv:2012.14868}, 2020.

\bibitem{LR_minimax}
J.~Lee and M.~Raginsky, ``Minimax statistical learning with {W}asserstein
  distances,'' in \emph{Conference on Neural Information Processing Systems},
  2018.

\bibitem{CsiKor_book1st}
I.~Csisz\'{a}r and J.~K\"{o}rner, \emph{Information Theory: Coding Theorems for
  Discrete Memoryless Systems}.\hskip 1em plus 0.5em minus 0.4em\relax Academic
  Press, 1981.

\bibitem{Cover_book}
T.~Cover and J.~Thomas, \emph{Elements of Information Theory}, 2nd~ed.\hskip
  1em plus 0.5em minus 0.4em\relax New York: Wiley, 2006.

\bibitem{Zhang_entropy07}
Z.~{Zhang}, ``Estimating mutual information via {K}olmogorov distance,''
  \emph{IEEE Transactions on Information Theory}, vol.~53, no.~9, pp.
  3280--3282, 2007.

\bibitem{Ho_Yeung_entropy10}
S.~{Ho} and R.~W. {Yeung}, ``The interplay between entropy and variational
  distance,'' \emph{IEEE Transactions on Information Theory}, vol.~56, no.~12,
  pp. 5906--5929, 2010.

\bibitem{Sason_entropy13}
I.~{Sason}, ``Entropy bounds for discrete random variables via maximal
  coupling,'' \emph{IEEE Transactions on Information Theory}, vol.~59, no.~11,
  pp. 7118--7131, 2013.

\bibitem{PoliWu06}
Y.~Polyanskiy and Y.~Wu, ``Wasserstein continuity of entropy and outer bounds
  for interference channels,'' \emph{IEEE Transactions on Information Theory},
  vol.~62, no.~7, 2016.

\bibitem{WuVerdu12_fmmseMI}
Y.~{Wu} and S.~{Verd\'{u}}, ``Functional properties of minimum mean-square
  error and mutual information,'' \emph{IEEE Transactions on Information
  Theory}, vol.~58, no.~3, pp. 1289--1301, 2012.

\bibitem{concen_ineq_BLM}
S.~Boucheron, G.~Lugosi, and P.~Massart, \emph{Concentration Inequalities: A
  nonasymptotic theory of independence}.\hskip 1em plus 0.5em minus 0.4em\relax
  Oxford University Press, 2013.

\bibitem{JiaoHanWeissman_bias_17}
J.~{Jiao}, Y.~{Han}, and T.~{Weissman}, ``Dependence measures bounding the
  exploration bias for general measurements,'' in \emph{IEEE International
  Symposium on Information Theory (ISIT)}, 2017.

\bibitem{BuZouVvv19}
Y.~Bu, S.~Zou, and V.~V. Veeravalli, ``Tightening mutual information based
  bounds on generalization error,'' in \emph{IEEE International Symposium on
  Information Theory (ISIT)}, 2019.

\bibitem{fdiv_Sas_Ver}
I.~{Sason} and S.~{Verdú}, ``$f$ -divergence inequalities,'' \emph{IEEE
  Transactions on Information Theory}, vol.~62, no.~11, pp. 5973--6006, 2016.

\bibitem{Wu_lec_stat}
Y.~Wu, ``Lecture notes on information-theoretic methods in high-dimensional
  statistics,'' University of Illinois/Yale University, 2016-2020.

\bibitem{Varentropy_Kon_Ver}
I.~{Kontoyiannis} and S.~{Verdú}, ``Optimal lossless compression: Source
  varentropy and dispersion,'' in \emph{IEEE International Symposium on
  Information Theory}, July 2013, pp. 1739--1743.

\bibitem{BREGMAN1967}
L.~Bregman, ``The relaxation method of finding the common point of convex sets
  and its application to the solution of problems in convex programming,''
  \emph{USSR Computational Mathematics and Mathematical Physics}, vol.~7,
  no.~3, pp. 200--217, 1967.

\bibitem{Wu_Var}
Y.~Wu, \emph{personal communication}, 2019.

\bibitem{Lautum}
D.~P. {Palomar} and S.~{Verd\'u}, ``Lautum information,'' \emph{IEEE
  Transactions on Information Theory}, vol.~54, no.~3, pp. 964--975, March
  2008.

\bibitem{Berend_TV13}
D.~Berend and A.~Kontorovich, ``A sharp estimate of the binomial mean absolute
  deviation with applications,'' \emph{Statistics and Probability Letters,},
  vol.~83, pp. 1254--1259, 2013.

\bibitem{conv_emp_TV12}
------, ``On the convergence of the empirical distribution,''
  \emph{arXiv:1205.6711}, 2012.

\bibitem{Devroye1983}
L.~Devroye, ``The equivalence of weak, strong and complete convergence in $l_1$
  for kernel density estimates,'' vol.~11, no.~3, pp. 896--904, 09 1983.

\bibitem{ShBe_book14}
S.~Shalev-Shwartz and S.~Ben-David, \emph{Understanding Machine Learning: From
  Theory to Algorithms}.\hskip 1em plus 0.5em minus 0.4em\relax Cambridge
  University Press, 2014.

\bibitem{lei2020}
J.~Lei, ``Convergence and concentration of empirical measures under
  {W}asserstein distance in unbounded functional spaces,'' \emph{Bernoulli},
  vol.~26, no.~1, pp. 767--798, 02 2020.

\bibitem{weed2019}
J.~Weed and F.~Bach, ``Sharp asymptotic and finite-sample rates of convergence
  of empirical measures in {W}asserstein distance,'' \emph{Bernoulli}, vol.~25,
  no.~4A, pp. 2620--2648, 11 2019.

\bibitem{DGLbook96}
L.~Devroye, L.~Gy{\"o}rfi, and G.~Lugosi, \emph{A Probabilistic Theory of
  Pattern Recognition}.\hskip 1em plus 0.5em minus 0.4em\relax Springer, 1996.

\bibitem{TypicalRaginsky}
M.~{Raginsky}, ``Empirical processes, typical sequences, and coordinated
  actions in standard {B}orel spaces,'' \emph{IEEE Transactions on Information
  Theory}, vol.~59, no.~3, pp. 1288--1301, March 2013.

\bibitem{coordinate_capacity}
P.~W. {Cuff}, H.~H. {Permuter}, and T.~M. {Cover}, ``Coordination capacity,''
  \emph{IEEE Transactions on Information Theory}, vol.~56, no.~9, pp.
  4181--4206, 2010.

\bibitem{NIPS2017_7141}
A.~Kendall and Y.~Gal, ``What uncertainties do we need in {B}ayesian deep
  learning for computer vision?'' in \emph{Conference on Neural Information
  Processing Systems}, 2017.

\bibitem{Hllermeier2019Aleatoric}
E.~H{\"u}llermeier and W.~Waegeman, ``Aleatoric and epistemic uncertainty in
  machine learning: A tutorial introduction,'' \emph{ArXiv 1910.09457}, 2019.

\bibitem{Liu17}
J.~Liu, P.~Cuff, and S.~Verd\'u, ``On alpha-decodability and alpha-likelihood
  decoder,'' in \emph{The 55th Ann. Allerton Conf. Comm. Control Comput.},
  2017.

\bibitem{Bhatt18}
A.~Bhatt, J.-T. Huang, Y.-H. Kim, J.~J. Ryu, and P.~Sen, ``Variations on a
  theme by {Liu, Cuff, and Verd\'u}: The power of posterior sampling,'' in
  \emph{IEEE Information Theory Workshop}, 2018.

\bibitem{BH_RPbook}
B.~Hajek, \emph{Random processes for engineers}.\hskip 1em plus 0.5em minus
  0.4em\relax Cambridge University Press, 2015.

\bibitem{Ben-David2010}
S.~Ben-David, J.~Blitzer, K.~Crammer, A.~Kulesza, F.~Pereira, and J.~W.
  Vaughan, ``A theory of learning from different domains,'' \emph{Machine
  Learning}, vol.~79, no.~1, pp. 151--175, May 2010.

\bibitem{Verdu_mismatch}
S.~Verd\'u, ``Mismatched estimation and relative entropy,'' \emph{IEEE
  Transactions on Information Theory}, vol.~56, no.~8, pp. 3712--3720, 2010.

\bibitem{GM_exp_var}
M.~J. Wainwright and M.~I. Jordan, ``Graphical models, exponential families,
  and variational inference,'' \emph{Foundations and Trends in Machine
  Learning}, vol.~1, no. 1–2, pp. 1--305, 2008.

\bibitem{e19050206}
P.~Harremoës, ``Divergence and sufficiency for convex optimization,''
  \emph{Entropy}, vol.~19, no.~5, 2017.

\end{thebibliography}
	

\end{document}